\documentclass[]{./meta/fairmeta}

\definecolor{improve}{RGB}{0,140,0}   %
\definecolor{worse}{RGB}{180,0,0}     %
\usepackage{adjustbox}
\usepackage{booktabs}
\usepackage{multirow}
\usepackage{hyperref}
\usepackage{url}
\usepackage{makecell}
\usepackage[dvipsnames,table]{xcolor}
\usepackage{xspace}
\usepackage{xpatch}
\usepackage{makecell}
\usepackage{nicefrac}
\usepackage{tikz-cd}
\usepackage[framemethod=TikZ]{mdframed}
\usepackage{soul}
\usepackage{multirow}
\usepackage{scalerel}
\usepackage{algorithm}
\usepackage{algpseudocode}
\usepackage{caption}
\usepackage{amsmath,amsfonts,bm,amsthm,amssymb,amsfonts}
\usepackage{stmaryrd}
\usepackage{wrapfig}
\usepackage{calc}
\usepackage{array}
\usepackage{booktabs}
\usepackage{enumitem}
\usepackage{adjustbox}

\usepackage[titletoc,page]{appendix}
\usepackage{titletoc}
\usepackage{tocloft}

\newcommand{\textS}{{\fontfamily{cmr}\selectfont\char159}} %

\newcommand{\scriptscriptscriptstyle}[1]{%
  \mbox{\scalebox{0.7}{$\scriptscriptstyle #1$}}%
}
\addto\extrasenglish{%
  \def\sectionautorefname{\textS\!}%
  \def\subsectionautorefname{\textS\!}%
}

\usepackage{hyperref}

\newmdenv[backgroundcolor=metabg, roundcorner=5pt, skipabove=7pt, linewidth=0pt, innertopmargin=4pt]{metaframe}

\newif\ifmetatemplate
\metatemplatetrue

\usepackage{jmath}

\usepackage{xspace}
\usepackage{calc}
\usepackage{accents}
\usepackage{xparse,xpatch}
\usepackage{thmtools}
\usepackage{hhline}
\usepackage{pifont}

\newcommand{\cmark}{\ding{51}}%
\newcommand{\xmark}{\ding{55}}%

\newcommand{\tdhc}{\textsc{td-hc}\xspace}
\newcommand{\tdfl}{\textsc{td-flow}\xspace}

\usepackage[colorinlistoftodos,disable]{todonotes}

\newcommand{\onestepterm}[1]{\overset{\smash{\scaleto{\rightarrow\mathstrut}{4pt}}}{#1}}
\newcommand{\bootterm}[1]{\overset{\smash{\scaleto{\curvearrowright\mathstrut}{5pt}}}{#1}}

\newsavebox{\onesteparrow}
\newsavebox{\bootarrow}

\sbox{\onesteparrow}{\smash{\scaleto{\rightarrow\mathstrut}{5pt}}}
\sbox{\bootarrow}{\smash{\scaleto{\curvearrowright\mathstrut}{5pt}}}

\NewDocumentCommand{\fixlimits}{e{^_}}{%
  \IfValueT{#1}{^{\mkern4mu#1}}%
  \IfValueT{#2}{_{#2}}%
}
\xapptocmd{\bootterm}{\fixlimits}{}{}
\xapptocmd{\onestepterm}{\fixlimits}{}{}

\newcommand{\sm}{m}

\newcommand{\setfont}[1]{\mathsf{#1}}

\newtheorem{lemma}{Lemma}
\newtheorem{definition}{Definition}

\definecolor{hl}{RGB}{205, 232, 248}
\definecolor{best}{RGB}{199, 221, 236}

\definecolor{sci1}{HTML}{4165c0}
\definecolor{sci2}{HTML}{e770a2}
\definecolor{sci3}{HTML}{5ac3be}
\definecolor{sci4}{HTML}{696969}
\definecolor{sci5}{HTML}{f79a1e}
\definecolor{sci6}{HTML}{ba7dcd}

\definecolor{ocind7}{HTML}{4263eb}
\definecolor{ocgreen7}{HTML}{37b24d}
\definecolor{ocorange7}{HTML}{f76707}

\newcommand{\mhl}[1]{\sethlcolor{hl}\hl{#1}}
\newcommand{\mdec}[1]{\sethlcolor{gray!25}\hl{#1}}
\newcommand{\mdecc}[1]{\sethlcolor{red!25}\hl{#1}}

\usepackage{etoolbox}
\makeatletter
\patchcmd{\hyper@makecurrent}{%
    \ifx\Hy@param\Hy@chapterstring
        \let\Hy@param\Hy@chapapp
    \fi
}{%
    \iftoggle{inappendix}{%
        \@checkappendixparam{chapter}%
        \@checkappendixparam{section}%
        \@checkappendixparam{subsection}%
        \@checkappendixparam{subsubsection}%
        \@checkappendixparam{paragraph}%
        \@checkappendixparam{subparagraph}%
    }{}%
}{}{\errmessage{failed to patch}}

\newcommand*{\@checkappendixparam}[1]{%
    \def\@checkappendixparamtmp{#1}%
    \ifx\Hy@param\@checkappendixparamtmp
        \let\Hy@param\Hy@appendixstring
    \fi
}
\makeatletter

\newtoggle{inappendix}
\togglefalse{inappendix}

\apptocmd{\appendix}{\toggletrue{inappendix}}{}{\errmessage{failed to patch}}
\apptocmd{\subappendices}{\toggletrue{inappendix}}{}{\errmessage{failed to patch}}

\makeatletter
\renewcommand{\sectionautorefname}{\protect\textsection\@gobble}
\renewcommand{\subsectionautorefname}{\protect\textsection\@gobble}
\renewcommand{\subsubsectionautorefname}{\protect\textsection\@gobble}
\makeatother

\def\Vhrulefill{\leavevmode\leaders\hrule height 0.7ex depth \dimexpr0.4pt-0.7ex\hfill\kern0pt}

\xpretocmd{\appendixpagename}{\LARGE}{}{}

\title{Compositional Planning with Jumpy World Models}

\author[2,3,*]{Jesse Farebrother}
\author[1]{Matteo Pirotta}
\author[1]{Andrea Tirinzoni}
\author[2,3,\dagger]{Marc G. Bellemare}
\author[1]{Alessandro Lazaric}
\author[1]{\\Ahmed Touati}

\affiliation[1]{FAIR at Meta}
\affiliation[2]{Mila -- Qu\'ebec AI Institute}
\affiliation[3]{McGill University}

\contribution[*]{Work done at Meta}
\contribution[\dagger]{CIFAR AI Chair}

\abstract{
The ability to plan with temporal abstractions is central to intelligent decision-making.
Rather than reasoning over primitive actions, we study agents that compose pre-trained policies as temporally extended actions, enabling solutions to complex tasks that no constituent alone can solve.
Such compositional planning remains elusive as compounding errors in long-horizon predictions make it challenging to estimate the visitation distribution induced by sequencing policies.
Motivated by the \textit{geometric policy composition} framework introduced in~\citet{thakoor22ghm}, we address these challenges by learning predictive models of multi-step dynamics --- so-called \emph{jumpy world models} --- that capture state occupancies induced by pre-trained policies across multiple timescales in an off-policy manner.
Building on Temporal Difference Flows \citep{farebrother2025temporal}, we enhance these models with a novel consistency objective that aligns predictions across timescales, improving long-horizon predictive accuracy.
We further demonstrate how to combine these generative predictions to estimate the value of executing arbitrary sequences of policies over varying timescales.
Empirically, we find that compositional planning with jumpy world models significantly improves zero-shot performance across a wide range of base policies on challenging manipulation and navigation tasks, yielding, on average, a $200\%$ relative improvement over planning with primitive actions on long-horizon tasks.

}

\begin{document}

\maketitle

\section{Introduction}

In recent years, the success of large-scale foundation models in domains such as computer vision \citep[e.g.,][]{radford21clip,ravi25sam2,assran25vjepa2} and natural language processing \citep[e.g.,][]{llama3,openai24o1,gemini25} has inspired a similar shift in Reinforcement~Learning~(RL). 
Foundation policies pre-trained on diverse unlabeled data or via unsupervised objectives can now generalize to a wide range of downstream tasks, without additional training or explicit planning. This has led to remarkable progress in areas like humanoid control \citep[e.g.,][]{peng21amp,peng22ase,tessler23calm,tessler24masked,tirinzoni25zeroshot,alegre25amor} and real-world robotics~\citep[e.g.,][]{rt-1,rt-2,Luo2024universal,
ghosh24octo,
Black2024pizero,
pi05,
pi06,
nvidia25groot,
li26bfm}.

Despite these advances, a key limitation persists: while foundation policies can handle many tasks out of the box, they often fall short when faced with complex, long-horizon problems that require reasoning over extended sequences of decisions. 
In such cases, the planning horizon of a single policy is insufficient, and agents must compose policies to achieve their goals \citep{schmidhuber91learning,singh92transfer,dayan92feudal,kaelbling93learning,kaelbling93hierarchical,parr97reinforcement,dietterich98maxq,sutton99between,precup00temporal}.

Hierarchical RL \citep{barto03recent,klissarov25discovering}, including the options framework~\citep{sutton99between,precup00temporal}, aims to achieve compositionality by training task-specific high-level policies to leverage manual \citep[e.g.,][]{nachum18data,barreto19option,carvalho23sfkeyboard,park23hiql,park25horizon} or automatic \citep[e.g.,][]{bacon17option,machado17laplacian,machado18eigopts,machado23sr,bagaria21skill,sutton23reward} task decompositions.
In this paper, we take a fundamentally different approach: instead of learning task-specific hierarchies, we develop a framework for direct compositional planning over parameterized policies, requiring \emph{no task-specific} training. By learning ``jumpy'' multi-step dynamics models -- also known as ``jumpy world models'' \citep{murphy24reinforcement} -- we enable flexible composition of existing policies at planning time, transforming how agents tackle novel tasks through intelligent recombination of existing behavior.

To operationalize this idea, we propose learning a \emph{policy} and \emph{horizon} conditioned \emph{jumpy world model} that captures the distribution of future states for all parameterized policies over a continuum of geometrically decaying time horizons \citep{janner20gmodel,thakoor22ghm}.
To make this possible, we first generalize the recent  Temporal Difference Flow \citep{farebrother2025temporal} framework using a novel consistency objective that enforces coherence between predictions at different timescales, consistently improving long-horizon predictions.
Additionally, we develop a novel estimator of the value of executing an arbitrary sequence of policies, each with its own variable timescale.
With these two pieces in place, we demonstrate how to plan over a wide range of parameterized policies, allowing us to flexibly compose behaviors to solve complex, long-horizon tasks without the need for further environment interaction or fine-tuning.  Empirically, across multiple classes of base policies evaluated on a suite of OGBench navigation and manipulation tasks~\citep{park25ogbench}, behavior-level planning consistently improves over zero-shot performance, often by a large margin. Finally, our approach outperforms state-of-the-art hierarchical baselines as well as alternative planning methods; in particular, planning with jumpy world models achieves a 
$200\%$ relative improvement over action-level planning with a one-step world model on long-horizon tasks.
These results demonstrate that planning with jumpy world models offers a powerful complement that is particularly effective for long-horizon decision-making.

\section{Preliminaries}
We model the environment as a reward-free discounted Markov Decision Process (MDP) defined as the $4$-tuple $\mathcal{M} = \left( \setfont{S}, \setfont{A}, P, \gamma \right)$. Here, $\setfont{S}$ and $\setfont{A}$ represent the state and action spaces, respectively; $P : \setfont{S} \times \setfont{A} \to \mathscr{P}(\setfont{S})$ characterizes the distribution over next states; and $\gamma \in [0, 1)$ is the discount factor.
At each step $k$, the agent follows a policy $\pi : \setfont{S} \to \mathscr{P}(\setfont{A})$, generating a trajectory of state-action pairs $(S_k, A_k)_{k \geq 0}$ where $A_k \sim \pi(\,\cdot \!\mid\! S_k)$ and $S_k \sim P(\,\cdot\!\mid\!S_{k-1},A_{k-1})$. 
When unambiguous, we use $S'$ as the immediate next state of $S$, and $S^{+}$ for a successor state at some future step $k > 0$.
We use $\Pr(\,\cdot \!\mid\! S_0=s, A_0=a, \pi)$ and $\mathbb{E}[\,\cdot \!\mid\! S_0=s, A_0=a, \pi]$ to denote the probability and expectation over sequences induced by starting from $(s, a)$ and following~$\pi$~thereafter.

\paragraph{\textbf{Successor Measure}} For a policy $\pi$ and an initial state-action pair $(s, a)$, \textit{the (normalized) successor measure}~\citep{dayan93sr,blier21successor}, denoted by $\sm_\gamma^\pi(\,\cdot \mid s,a\,)$, is a probability measure over the state~space~$\setfont{S}$.
For any subset $\setfont{X} \subseteq \setfont{S}$, it represents the cumulative discounted probability of visiting a state in
$\setfont{X}$, with each visit discounted geometrically by its time of arrival. Formally, this is defined as:
\ifmetatemplate
\begin{equation*}%
\sm^\pi_\gamma(\,\setfont{X} \mid s,a\,)= (1 - \gamma) \sum_{k=0}^\infty \gamma^k\, \mathrm{Pr}(S_{k+1} \in \setfont{X} \mid S_0 = s,\,A_0 = a,\, \pi)\,.
\end{equation*}
\fi
The normalization factor $1 - \gamma$ ensures $m_\gamma^\pi$ is a probability distribution. This admits an intuitive interpretation: rather than viewing $\gamma$ as a discount factor, one may equivalently consider an \emph{auxiliary} process with a geometrically distributed lifetime -- halting at each step with probability $1 - \gamma$ \citep{derman70finite}.
Under this view, $m_\gamma^\pi(\setfont{X} \mid s, a)$ equivalently characterizes the probability that the state visited at the halting time~lies~in~$\setfont{X}$.
This yields a convenient reparameterization of the action-value function as:
\begin{equation}\label{eq:q-ghm}
Q^\pi_{\gamma}(s, a) = \mathbb{E} \Big[\, \sum_{k=0}^\infty \gamma^k r(S_{k+1}) \mid S_0 = s, A_0 = a, \pi \,\Big] \equiv
(1-\gamma)^{-1} \mathbb{E}_{S^{+} \sim m_\gamma^\pi(\cdot\mid s, a)}\left[ r(S^{+}) \right],
\end{equation}
expressing value as the expected reward at the geometrically distributed halting time scaled by the average lifetime $(1 - \gamma)^{-1}$.
Note that this equivalence is purely mathematical\footnote{This preserves expected cumulants and occupancy measures, but not trajectory-level statistics \citep{bellemare23distrl}}; the underlying MDP remains unchanged.
\paragraph{\textbf{Geometric Horizon Model}} A Geometric Horizon Model \citep[GHM;][]{janner20gmodel, thakoor22ghm} instantiates a jumpy world model as a \emph{generative model} of the successor measure.
It can be learned off-policy via temporal-difference learning, exploiting the fact that $\sm^\pi_\gamma$ is a fixed point of the Bellman~equation:
\ifmetatemplate
\begin{equation}\label{eq:ghm-tar}
    \sm^\pi_\gamma(\,\cdot \mid s,a\,) =
    (1-\gamma) P(\,\cdot \mid s, a) + \gamma \mathbb{E}_{S' \sim P(\cdot \mid s, a),\, A' \sim \pi(\cdot \mid S')}\left[ \sm^\pi_\gamma(\,\cdot \mid S', A')\right ] .
\end{equation}
\fi
Due to the Bellman equation's reliance on bootstrapping, the choice of generative model is critical for maintaining stability.
\citet{farebrother2025temporal} demonstrate that prior approaches suffer from systemic bias at long horizons due to these bootstrapped predictions. To address this, \citet{farebrother2025temporal} propose the use of flow-matching techniques \citep{lipman2022flow,albergo23building}, which construct probability paths that evolve smoothly from a source distribution to the desired target distribution. By designing these paths to exploit structure in the temporal difference target distribution, they show that bootstrapping bias can be controlled, enabling more accurate long-horizon predictions.

In this framework, the GHM models a $d$-dimensional continuous state space\footnote{While we assume a continuous state space $\setfont{S} \subseteq \mathbb{R}^d$ for ease of exposition, this flow-matching approach readily extends to non-Euclidean and discrete spaces \citep[e.g.,][]{huang22riemannian,chen24flow,gat24discrete,kapusniak2024metric}.} as an ordinary differential equation (ODE) parameterized by a time-dependent vector field $v_t: \mathbb{R}^{d} \times \setfont{S}\times \setfont{A} \to \mathbb{R}^{d}$. Sampling from the GHM begins by drawing initial noise $X_0 \in \mathbb{R}^d$ from a prior distribution $p_0 \in \mathscr{P}(\mathbb{R}^{d})$ and subsequently following the flow $\psi_t: \mathbb{R}^{d} \times \setfont{S} \times \setfont{A} \to \mathbb{R}^{d}$, defined by the following Initial Value Problem (IVP) for $t \in [0, 1]$:
\ifmetatemplate
\begin{equation*}
\begin{cases}
\;\; \displaystyle \frac{\mathrm{d}}{\mathrm{d}t} \psi_t(X_0\mid s,a) = v_t\big(\psi_t(X_0\mid s, a) \mid s, a\big) \\[.5em]
\;\; \displaystyle \psi_0(X_0\mid s, a) = X_0
\end{cases}
\!\!\!\!\Longleftrightarrow\,\,
\psi_t(X_0 \mid s, a) = X_0 + \int_0^t v_\tau\big(\psi_\tau(X_0\mid s,a)\mid s, a\big)\, \mathrm{d}\tau .
\end{equation*} 
\fi
We can solve this IVP using standard numerical integration techniques \citep{butcher16numerical}. In doing so, we obtain an ODE-induced probability path defined as the pushforward $p_t := \psi_t(\,\cdot\mid S, A)_{\sharp}p_0(\cdot)$, i.e., the distribution of $\psi_t(X_0\mid S, A)$ where $X_0 \sim p_0(\cdot)$.
To ensure that $p_1$ coincides with the successor measure $\sm^\pi_\gamma$, \citet{farebrother2025temporal} propose to learn the parameterized vector field $v_t(\cdots; \theta)$ by minimizing the \tdfl loss%
\ifmetatemplate
:%
\begin{equation}\label{eq: td-flow}
\begin{aligned}[c]
    \ell_{\tdfl}(\theta;\, \bar\theta) &= (1-\gamma)\, \mathbb{E}_{\substack{t\sim\mathcal{U}([0,1]),\,(S, A, S') \sim \mathcal{D}\\X_0\sim p_0(\cdot),\, 
    X_t = (1-t) X_0 + t S'}} \left[ \Big \| v_t(X_t \mid S, A; \theta) - (S' - X_0) \Big \|^2 \right] \\
    & + \gamma\, \mathbb{E}_{\substack{t\sim\mathcal{U}([0,1]),\,(S, A, S')\sim\mathcal{D},\,A'\sim\pi(\cdot\mid S')\\ X_t \sim \psi_t(\cdot \mid S', A';\, \bar{\theta})_{\sharp}p_0(\cdot)}} \left[ \Big \| v_t(X_t \mid S, A; \theta) - v_t(X_t \mid S', A'; \bar{\theta}) \Big \|^2 \right],
\end{aligned}
\end{equation}
over transitions sampled from the dataset $\mathcal{D}$ with non-trainable target parameters $\bar{\theta}$ \citep{mnih15dqn}.
\fi
Recall that the Bellman equation \eqref{eq:ghm-tar} defines the successor measure as a mixture distribution with weights $1 - \gamma$~and~$\gamma$. The \tdfl objective reflects this: the first term is a conditional flow-matching loss~\citep{lipman2022flow} targeting the one-step transition kernel $P(\,\cdot\mid S, A)$, while the second is a marginal flow-matching term targeting the bootstrapped successor measure $\sm^\pi_\gamma(\,\cdot\mid S', A')$. \citet{farebrother2025temporal} show that jointly optimizing these components with the mixture weighting recovers the successor measure at convergence.

\section{Planning via Geometric Policy Composition}
In the sequel, we consider an agent equipped with a repertoire of pretrained policies $\{\pi_z\}_{z \in \setfont{Z}}$ indexed by $z$ (e.g., a state for goal-conditioned policies or, more generally, a latent variable parameterizing diverse behaviors). Our objective is to learn a predictive model of the policies' behaviors that enables planning for arbitrary downstream tasks, without requiring further online interaction with the environment or additional fine-tuning.

To this end, we first formalize how GHM predictions compose to evaluate plans that stochastically switch among a subset of policies, showing how the successor measures of constituent policies combine to yield the successor measure of the composite policy.
We then address the challenge of learning accurate GHMs across timescales by introducing a consistency objective that enforces coherence across horizons, improving long-horizon predictions.
Finally, with these tools we detail our complete compositional planning procedure.

\subsection{Evaluating Geometric Switching Policies}
A natural way to chain policies is through \textit{geometric switching}: for each policy $\pi_{z_{i}}$ in a sequence, execution continues with probability $ 1- \alpha_i \in [0, 1]$, or switches to the subsequent policy $\pi_{z_{i+1}}$ with probability $\alpha_i$. Such a policy can be written as:
$\nu := \pi_{z_1} \xrightarrow{\alpha_1} \pi_{z_2} \cdots \xrightarrow{\alpha_{n-1}} \pi_{z_n}\,.$
By definition, the final policy $\pi_{z_n}$ is absorbing, meaning the agent commits to it for the remainder of the episode, so its switching probability is $\alpha_n = 0$.
These non-Markovian policies are called Geometric Switching Policies~\citep[GSPs;][]{thakoor22ghm}.
The term ``geometric'' captures that each policy $\pi_{z_i}$ is followed for a geometrically distributed duration $T_i \sim \mathrm{Geom}(\alpha_i)$.

To analyze these policies, we must understand how this switching mechanism interacts with the MDP's \emph{global} discount factor, $\gamma$. Recall that $\gamma$ can be interpreted as the probability the episode continues to the next step, while $1 - \gamma$ gives the probability of halting. When following policy $\pi_{z_k}$ within a GSP, there are two reasons it might stop executing that policy:
(1) the episode halts, with probability $1 - \gamma$; or 
(2) the policy switches to $\pi_{z_{k+1}}$, with probability $\alpha_k$.
Thus, continuing to follow $\pi_{z_k}$ for one more step requires that neither event occur, which happens with probability $\beta_k := \gamma\,(1-\alpha_k)$.
This quantity acts as an effective discount factor for the duration spent executing $\pi_{z_k}$.
Note that $\beta_n = \gamma$, since the final policy is absorbing.

We can now characterize the successor measure of a GSP. Note that the agent might reach a successor state $s^{+}$ through multiple paths: arriving while still executing $\pi_{z_1}$, or switching to $\pi_{z_2}$ and reaching $s^{+}$ from there, and so on. Each path contributes to the overall successor measure and must be weighted~appropriately.
\ifmetatemplate
\begin{metaframe}
\fi
\begin{definition}\label{def:gsp-weights}
Let
$
\nu := \pi_{z_1} \xrightarrow{\alpha_1} \pi_{z_2} \cdots \xrightarrow{\alpha_{n-1}} \pi_{z_n}
$
be a geometric switching policy with global discount factor $\gamma \in (0,1)$ and effective discount factors $\beta_k \coloneqq \gamma(1-\alpha_k)$ for $k\in\llbracket n\rrbracket$. The \textit{weight} of the $k$-th~policy~is
\vspace{-0.25em}
\[
w_k \;\coloneqq\; \frac{1-\gamma}{1-\beta_k}\;\prod_{i=1}^{k-1}\frac{\gamma-\beta_i}{1-\beta_i},
\]
where an empty product equals $1$ (hence $w_1 = \smash{\frac{1 - \gamma}{1 - \beta_1}}$).
\end{definition}
\ifmetatemplate
\end{metaframe}
\fi
These weights capture the relative contribution of each policy phase to the successor measure of the GSP. Intuitively, $w_k$ reflects the probability that the agent (i) survives the first $k-1$ policy phases without the episode halting, and (ii) reaches states under policy $\pi_{z_k}$ rather than having already switched to a later policy. With these weights, we now define the successor measure of the composite GSP in the following~result.
\ifmetatemplate
\begin{metaframe}
\fi
\begin{restatable}{theorem}{thmgspsm}\label{th: gsp sm}
Let $\nu := \pi_{z_1} \xrightarrow{\alpha_1} \pi_{z_2} \cdots \xrightarrow{\alpha_{n-1}} \pi_{z_n}$ be a geometric switching policy with global discount factor $\gamma \in (0, 1)$, effective discount factors $\{\beta_k\}_{k=1}^n$, and weights $\{w_k\}_{k=1}^n$ from \autoref{def:gsp-weights}.
For any state-action pair $(s, a)$, the successor measure of $\nu$ decomposes as:
\ifmetatemplate
\vspace{-0.25em}
\begin{equation*}
    \sm^\nu_\gamma(\mathrm{d}s^{+} \mid s, a) =
     \sum_{k=1}^n w_k \int_{\substack{s_1, \ldots, s_{k-1} \\ a_1, \ldots, a_{k-1}}} m_{\beta_1}^{\pi_{z_1}}(\mathrm{d}s_1 \mid s, a) \pi_{z_2}(\mathrm{d}a_1 \mid s_1) %
    \,\cdots\, m_{\beta_k}^{\pi_{z_k}}(\mathrm{d}s^{+} \mid s_{k-1}, a_{k-1}) \, .
\end{equation*}
\fi 

\end{restatable}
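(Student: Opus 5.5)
The plan is to prove the decomposition by induction on the number of phases $n$. The key tool is a one-step ``first-exit'' identity: the successor measure of $\nu$ splits according to whether the auxiliary geometric lifetime ends during the first phase or the agent first switches to $\pi_{z_2}$. Throughout I use the halting interpretation of $\sm_\gamma$ from the Preliminaries. Each step of the process makes two independent draws.
\begin{itemize}
\item The episode continues with probability $\gamma$ and halts with probability $1-\gamma$.
\item If it continues, the current policy $\pi_{z_k}$ is kept with probability $1-\alpha_k$ and the agent switches with probability $\alpha_k$.
\end{itemize}
The convention matching the statement is that after a switch at state $s_k$ the next action is drawn from the new policy, $a_k\sim\pi_{z_{k+1}}(\cdot\mid s_k)$. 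This is the source of the factors $\pi_{z_{k+1}}(\mathrm{d}a_k\mid s_k)$.

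\textbf{Step 1 (first-phase exit law).} Start from $(s,a)$ executing $\pi_{z_1}$, and call the first step at which either event occurs the exit step. At each step the phase continues with probability $\beta_1=\gamma(1-\alpha_1)$, and these coins are independent of the state trajectory. Hence the probability of exiting at step $j$ with $S_{j+1}\in\setfont{X}$ is
\[
\beta_1^{\,j}(1-\beta_1)\Pr(S_{j+1}\in\setfont{X}\mid S_0=s,A_0=a,\pi_{z_1}).
\]
Summing over $j$, the exit state is distributed as $\sm^{\pi_{z_1}}_{\beta_1}(\cdot\mid s,a)$. Conditional on exiting, the cause is halting with probability $(1-\gamma)/(1-\beta_1)$ and switching with probability $\gamma\alpha_1/(1-\beta_1)=(\gamma-\beta_1)/(1-\beta_1)$. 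Crucially, the cause is independent of the exit state and exit time, because the coins are i.i.d. and independent of the dynamics.

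\textbf{Step 2 (recursion).} Let $\nu' := \pi_{z_2}\xrightarrow{\alpha_2}\cdots\xrightarrow{\alpha_{n-1}}\pi_{z_n}$. If halting occurs, the halted state is the exit state. If a switch occurs at $s_1$, the process restarts as $\nu'$ from $(s_1,a_1)$ with $a_1\sim\pi_{z_2}(\cdot\mid s_1)$. The restart uses the same global $\gamma$, by memorylessness of the geometric lifetime and the Markov property of the dynamics given the current phase. This yields
\[
\sm^\nu_\gamma(\mathrm{d}s^+\mid s,a)=\frac{1-\gamma}{1-\beta_1}\,\sm^{\pi_{z_1}}_{\beta_1}(\mathrm{d}s^+\mid s,a)+\frac{\gamma-\beta_1}{1-\beta_1}\int \sm^{\pi_{z_1}}_{\beta_1}(\mathrm{d}s_1\mid s,a)\,\pi_{z_2}(\mathrm{d}a_1\mid s_1)\,\sm^{\nu'}_\gamma(\mathrm{d}s^+\mid s_1,a_1).
\]
For the base case $n=1$ we have $\alpha_1=0$ and $\beta_1=\gamma$, so $w_1=1$ and the claim is trivial.

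\textbf{Step 3 (induction and weights).} Apply the induction hypothesis to $\nu'$. Its weights are $w'_{k}=\frac{1-\gamma}{1-\beta_{k+1}}\prod_{i=2}^{k}\frac{\gamma-\beta_i}{1-\beta_i}$. Multiplying by $(\gamma-\beta_1)/(1-\beta_1)$ gives exactly $w_{k+1}$ from \autoref{def:gsp-weights}, and Fubini merges the nested integrals into the stated chain.

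\textbf{Main obstacle.} I expect the hard part to be Step 1–2 rigor: $\nu$ is non-Markovian, so the clean route is to augment the state with the phase index (and a halted flag), making the process Markov. Then I would apply the strong Markov property at the exit time, together with the independence of the exit cause from the exit state. I would also check carefully that the ``$S_{k+1}$'' indexing convention of $\sm$ makes the exit state coincide with the state at which the next phase begins.
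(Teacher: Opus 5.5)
Your proposal is correct and follows the paper's proof. The paper also inducts over suffix policies $\nu_{l:n}$, uses exactly your first-phase recursion $m^{\nu_{l:n}}_\gamma = \frac{1-\gamma}{1-\beta_l} m^{\pi_{z_l}}_{\beta_l} + \frac{\gamma-\beta_l}{1-\beta_l} m^{\pi_{z_l}}_{\beta_l}\pi_{z_{l+1}} m^{\nu_{l+1:n}}_\gamma$, and does the same weight bookkeeping; the only difference is that it obtains the recursion algebraically, writing the one-step Bellman equation for $m^{\nu_{l:n}}_\gamma$ and inverting $(I-\beta_l P^{\pi_{z_l}})$ via $(1-\beta)(I-\beta P^{\pi})^{-1}P = m^{\pi}_{\beta}$, whereas your first-exit argument derives it directly and shows that $w_k$ is the probability that halting occurs during phase $k$.
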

\ifmetatemplate
\end{metaframe}
\fi

This theorem decomposes the successor measure of a GSP as a mixture distribution with $n$ components. The $k$-th component, weighted by $w_k$, captures the state distribution under policy $\pi_{z_k}$, having passed through all intermediate states visited under $\pi_{z_1},\ldots,\pi_{z_{k-1}}$%
\footnote{Each component can be seen as an application of the Chapman--Kolmogorov equation \citep{chapman28on,kolmogoroff1931analytischen}, retaining kernel composition via marginalization but replacing single-step dynamics $P(\cdot \!\mid\! s,a)$ with jumpy dynamics via~$m_\beta^{\pi}(\cdot \!\mid\! s,a)$.}.
Practically, such a decomposition yields a natural strategy for estimating action-value function similar to~\eqref{eq:q-ghm}:
sample a state from each component, evaluate its reward, and form a weighted sum using the weights~$w_k$.
Rather than sampling from each component independently, we can leverage their overlapping sequential structure to draw samples much more efficiently.
We achieve this through composition:
starting from $(s, a)$, we sample $S^{\scriptscriptstyle +}_1 \sim m_{\scriptstyle \beta_1}^{\pi_{\scriptscriptstyle z_{\scriptscriptscriptstyle{1}}}}(\,\cdot\mid s, a)$, then use $S^{\scriptscriptstyle +}_1$ to sample $S^{\scriptscriptstyle +}_2 \sim m_{\scriptstyle \beta_{\scriptscriptstyle 2}}^{\scriptstyle \pi_{\scriptscriptstyle z_{\scriptscriptscriptstyle{2}}}}(\,\cdot\mid S^{\scriptscriptstyle +}_1, A^{\scriptscriptstyle +}_1)$ where $A^{\scriptscriptstyle +}_1\sim \pi_{\scriptstyle z_{\scriptscriptstyle 2}}(\,\cdot\mid S^{\scriptscriptstyle +}_1)$,~and~so~on.
As the following lemma formalizes, the weighted sum $\sum_{k} w_k\, r(S_k^{\scriptscriptstyle +})$ yields an unbiased estimator of $Q^\nu_\gamma$.

\ifmetatemplate
\begin{metaframe}
\fi
\begin{lemma}\label{lem: gsp q-fucntion}Let $\nu := \pi_{z_1} \xrightarrow{\alpha_1} \pi_{z_2} \cdots \xrightarrow{\alpha_{n-1}} \pi_{z_n}$ be a geometric switching policy with global discount factor $\gamma \in (0, 1)$, effective discount factors $\{\beta_k\}_{k=1}^n$, and weights $\{w_k\}_{k=1}^n$ from \autoref{def:gsp-weights}. For any reward function $r : \setfont{S} \to \mathbb{R}$ and state-action pair $(s, a)$, set $(S^{+}_0, A^{+}_0) = (s, a)$ and for $k=1,\dots,n$ sample $S^{+}_k \sim m_{\beta_k}^{\pi_{z_k}}(\,\cdot\mid S^{+}_{k-1}, A^{+}_{k-1})$ and $A^{+}_k \sim \pi_{z_{k+1}}(\,\cdot\mid S^{+}_k)$. Then the single-sample monte-carlo estimator
\vspace{-0.25em}
\begin{align*}%
     \widehat{Q}^{\nu}_{\gamma} \coloneqq (1-\gamma)^{-1} \sum_{k=1}^n w_k\, r(S^{+}_k) \,,
\end{align*}
is an unbiased estimate of $Q^\nu_\gamma(s, a)$, i.e., $\smash{\mathbb{E}\big[\, \widehat{Q}^\nu_\gamma \,\big] = Q^\nu_\gamma(s, a)}$ where the expectation is over the joint distribution of $(S^{+}_1, A^{+}_1, \ldots, S^{+}_n)$ induced by the sampling procedure above.
\end{lemma}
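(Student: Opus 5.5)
The plan is to reduce the lemma to \autoref{th: gsp sm} together with the value reparameterization \eqref{eq:q-ghm}. The argument has three steps: start from the value identity, substitute the theorem's mixture, then identify each mixture component with the law of one sampled state.

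\textbf{Step 1 (value as expected reward).} Applying \eqref{eq:q-ghm} to the composite policy $\nu$ gives
\[
Q^\nu_\gamma(s,a) = (1-\gamma)^{-1}\int r(s^{+})\, m^\nu_\gamma(\mathrm{d}s^{+}\mid s,a).
\]
The identity in \eqref{eq:q-ghm} only uses the definition of the successor measure as a discounted occupancy. It therefore holds for the non-Markovian policy $\nu$ as well, with the expectation taken over trajectories of $\nu$, including its switching randomness.

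\textbf{Step 2 (substitute the mixture).} Plug in the decomposition of $m^\nu_\gamma$ from \autoref{th: gsp sm}. By linearity of the integral, this gives
\[
Q^\nu_\gamma(s,a) = (1-\gamma)^{-1}\sum_{k=1}^n w_k \int r(s^{+})\,\mu_k(\mathrm{d}s^{+}),
\]
where $\mu_k$ denotes the $k$-th iterated kernel composition
\[
m_{\beta_1}^{\pi_{z_1}}\,\pi_{z_2}\, m_{\beta_2}^{\pi_{z_2}}\cdots m_{\beta_k}^{\pi_{z_k}},
\]
started from $(s,a)$.

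\textbf{Step 3 (identify the marginal laws).} The key point is that the sequential sampling procedure in the lemma is exactly a Markov chain on state-action pairs. Its transitions are the kernels $(s',a')\mapsto m_{\beta_k}^{\pi_{z_k}}(\mathrm{d}s\mid s',a')\,\pi_{z_{k+1}}(\mathrm{d}a\mid s)$. I would show by induction on $k$, using the tower property and Fubini, that the law of $(S^{+}_{k-1},A^{+}_{k-1})$ is the integrand over $(s_{k-1},a_{k-1})$ in the theorem. Equivalently, the marginal law of $S^{+}_k$ is exactly $\mu_k$.

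\emph{Base case.} For $k=1$ we have $S^{+}_1\sim m_{\beta_1}^{\pi_{z_1}}(\cdot\mid s,a)$ by construction.

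\emph{Inductive step.} Condition on $(S^{+}_{k-1},A^{+}_{k-1})$ and integrate against its law. This appends one further factor $m_{\beta_k}^{\pi_{z_k}}$, and also $\pi_{z_{k+1}}$ for the action.

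It follows that $\mathbb{E}[r(S^{+}_k)] = \int r\,\mathrm{d}\mu_k$. Linearity of expectation over the finite sum then gives $\mathbb{E}[\widehat Q^\nu_\gamma] = Q^\nu_\gamma(s,a)$. Only the marginal laws of the $S^{+}_k$ matter here; the dependence between different $S^{+}_k$ is irrelevant for the expectation.

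\textbf{Main obstacle.} There is little genuine difficulty once \autoref{th: gsp sm} is assumed. The delicate points are technical. First, the inductive identification of marginals requires matching the indexing of actions in the theorem, where $a_{i}\sim\pi_{z_{i+1}}(\cdot\mid s_i)$, with the sampling procedure; in particular, the last action $A^{+}_n$ uses $\pi_{z_{n+1}}$, which is undefined but never enters the estimator. Second, the exchanges of sum and integral need to be justified. I would assume $r$ is bounded (or integrable under each $\mu_k$), so that Fubini applies and all expectations are finite.
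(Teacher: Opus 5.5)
Your proposal is correct and follows the same route as the paper. The paper gives no separate appendix proof; it presents the lemma as an immediate consequence of \autoref{th: gsp sm} combined with the reparameterization \eqref{eq:q-ghm}, noting that the sequential composition draws each $S^{+}_k$ exactly from the $k$-th mixture component. Your inductive identification of the marginal laws, together with your remarks that only the marginals matter, that $\pi_{z_{n+1}}$ never enters the estimator, and that $r$ must be integrable, simply makes that implicit argument explicit.
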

\ifmetatemplate
\end{metaframe}
\fi

This lemma generalizes several previous results: it extends \citet[Theorem 3.2]{thakoor22ghm}, which assumed fixed switching probabilities, i.e., $\alpha_k=\alpha, \forall\, k \in \llbracket n-1 \rrbracket$, and further generalizes results in \citet{janner20gmodel}, that additionally impose a fixed policy throughout, i.e., $\pi_{z_k} = \pi, \forall\, k \in \llbracket n \rrbracket$. By allowing both policies and switching probabilities to vary, this result enables the evaluation of a wider class of switching policies and brings us closer to the options framework \citep{sutton95td,sutton99between,precup00temporal}.

\subsection{Learning Geometric Horizon Models Across Multiple Timescales}\label{ssec:td-hc}

A core requirement of our planning framework is the ability to predict the behavior of many policies over multiple timescales, enabling us to calculate the expected return of candidate geometric switching policies.
A natural extension of the \tdfl objective \eqref{eq: td-flow} conditions the vector field $v$ on both the policy encoding $z$ and discount factor $\gamma$, yielding a single unified model across policies and horizons.
However, generalizing across many horizons is challenging: variance increases with horizon length, reducing per-horizon accuracy and destabilizing training \citep{petrik08biasing}.

To address this challenge, we propose a generalization of \tdfl that enforces consistency across horizons.
Rather than learning each horizon independently, we exploit a Bellman-like relationship between the successor measure at two discount factors ${\textcolor{ocorange7}{\boldsymbol\beta}} \leq {\textcolor{ocind7}{\boldsymbol\gamma}}$ to bootstrap longer-horizon predictions from shorter-horizon ones:
\begingroup%
\let\oldbeta\beta%
\let\oldgamma\gamma%
\renewcommand{\beta}{\textcolor{ocorange7}{\boldsymbol\oldbeta}}%
\renewcommand{\gamma}{\textcolor{ocind7}{\boldsymbol\oldgamma}}%
\begin{align}~\label{eq: hc-bellman}%
    \sm^\pi_{\gamma}(\,\cdot \mid s, a) \,&=\, (1-\gamma)\, P(\,\cdot \mid s, a) +
    \gamma \frac{1-\gamma}{1-\beta} \,\mathbb{E}_{\substack{S' \sim P(\cdot \mid s, a), A' \sim \pi(\cdot \mid S')} }\left [\, \sm^\pi_{\beta}(\,\cdot \mid S', A')\, \right ] \\
    &+\; \gamma \frac{\gamma - \beta}{1-\beta} 
    \,\mathbb{E}_{\substack{S' \sim P(\cdot \mid s, a), A' \sim \pi(\cdot \mid S') \\
    S^{+} \sim \sm^\pi_{\beta}(\cdot \mid S', A'), A^{+} \sim \pi(\cdot \mid S^{+}) } }\left [\, m^\pi_{\gamma}(\,\cdot \mid S^{+}, A^{+})\,\right ].  \nonumber
\end{align}%
\endgroup%
This follows directly from \autoref{th: gsp sm} by considering the switching policy $\smash{\nu := \pi \xrightarrow{\alpha_1=1} \pi \xrightarrow{\alpha_2 = 1-\nicefrac{\beta}{\gamma}} \pi}$ and noting that $\sm^\pi_{\gamma (1-\alpha_1)} = m^\pi_{0} = P$.
\ifmetatemplate
\fi
Building on this result, we extend the derivation of \tdfl from~\citet{farebrother2025temporal} to the new Bellman equation in~\eqref{eq: hc-bellman} (full derivation in \autoref{app:td-horizon-cons}), arriving at what we call the \emph{Temporal Difference Horizon Consistency (\tdhc)} loss%
\ifmetatemplate
:%
\begingroup
\let\oldbeta\beta
\let\oldgamma\gamma
\renewcommand{\beta}{\textcolor{ocorange7}{\boldsymbol\oldbeta}}
\renewcommand{\gamma}{\textcolor{ocind7}{\boldsymbol\oldgamma}}
\begin{align}\label{eq: td-hc}
    \ell_{\tdhc}(\theta; \bar\theta, \beta, \gamma) = (1-\gamma) &\mathbb{E}_{\substack{t\sim\mathcal{U}([0,1]),\, (S, A, S')\sim\mathcal{D} \\ X_0 \sim p_0(\cdot),\, X_t = (1-t) X_0 + t S'}} \left[ \Big \| v_t(X_t \mid S, A, \gamma; \theta) - (S' - X_0) \Big \|^2 \right] \\
    +\;\; \gamma \frac{1-\gamma}{1-\beta} &\mathbb{E}_{\substack{
    t\sim\mathcal{U}([0,1]),\,
    (S, A, S')\sim\mathcal{D},A'\sim \pi(\cdot\mid S') \\
    X_t \sim \psi_t(\cdot \mid S', A', \beta; \bar{\theta})_{\sharp} p_0(\cdot)}} \left[ \Big \| v_t(X_t \mid S, A, \gamma;\theta) - v_t(X_t \mid S', A', \beta;\,\bar{\theta}) \Big \|^2 \right] \nonumber \\
     +\; \gamma \frac{\gamma - \beta}{1-\beta} 
    &\mathbb{E}_{\substack{t\sim\mathcal{U}([0,1]),\, (S, A, S')\sim\mathcal{D}, A' \sim \pi(\cdot | S') \\
    S^{\scriptscriptstyle{+}} \!\sim \psi_1(\cdot \mid S', A', \beta; \bar{\theta})_{\sharp} p_0(\cdot),\,A^{\scriptscriptstyle{+}} \!\sim \pi(\cdot\mid S^{\scriptscriptstyle{+}}) \\
    X_t \sim \psi_t(\cdot \mid S^{+}, A^{+}, \gamma; \bar{\theta})_{\sharp} p_0(\cdot) }}\! \left[ \Big \| v_t(X_t \mid S, A, \gamma;\theta ) - v_t(X_t \mid S^{+}, A^{+}, \gamma; \bar{\theta}) \Big \|^2 \right]\!. \nonumber
\end{align}
\endgroup
\fi
When $\textcolor{ocind7}{\boldsymbol\gamma}=\textcolor{ocorange7}{\boldsymbol\beta}$, the third term vanishes, and we recover the original \tdfl loss.
In practice, we sample $\textcolor{ocind7}{\boldsymbol\gamma}$ uniformly from $[\gamma_{\min}, \gamma_{\max}]$ and $\textcolor{ocorange7}{\boldsymbol\beta}$ uniformly from $[\gamma_{\min}, \textcolor{ocind7}{\boldsymbol\gamma}]$, but only apply horizon consistency (i.e., $\textcolor{ocorange7}{\boldsymbol\beta} \ne \textcolor{ocind7}{\boldsymbol\gamma}$) to a small proportion of each mini-batch. This is motivated by the fact that the consistency term requires sampling from the model's own predictions at horizon $\textcolor{ocind7}{\boldsymbol\beta}$ and using these samples as conditioning for the longer horizon $\textcolor{ocorange7}{\boldsymbol\gamma}$, meaning errors in the model's current predictions can compound and introduce bias. Restricting the consistency term to a small fraction of the batch, we gain the benefits of horizon alignment while ensuring the majority of updates come from \tdfl. This design follows standard practice for consistency-like generative modeling \citep[e.g.,][]{frans24onestep,geng25mean,boffi25how,ai26joint}.

\subsection{Compositional Planning}
\label{sec: test-time-planning}
With the machinery for training policy-conditioned GHMs across multiple timescales now in place, we can turn to the central question: how to use them to solve downstream tasks? Given a reward function $r : \setfont{S} \to \mathbb{R}$, our goal is to find a sequence of policies that maximizes expected return.
Since the learned GHMs already capture the full complexity of how policies evolve in the environment, evaluating $Q^\nu_\gamma$ from \autoref{lem: gsp q-fucntion} requires only specifying the policy embeddings $z_1, \ldots, z_n$, hence planning reduces to the following optimization problem:
\begin{equation}\label{eq:planning.}
    \max_{a_1, z_1, \ldots, z_n} Q^{\pi_{z_1} \xrightarrow{\alpha_1} \pi_{z_2} \,\cdots\, \xrightarrow{\alpha_{n-1}} \pi_{z_n}}_{\gamma}(s,a_1) \, .
\end{equation}
The switching probabilities $\{\alpha_i\}$ control how long each policy executes before transitioning to the next, and are treated as hyperparameters.
Once the optimal sequence $(a^\ast_1, z^\ast_1, \ldots, z_n^\ast)$ is identified, we execute the first action $a_1^\ast$ followed by the policy $\pi_{z_1^\ast}$, replanning at future states as needed.

Notably, this planning objective unifies several existing approaches as special cases.
By varying the switching probabilities $\{\alpha_i\}$, one can interpolate between action-level control and planning over policies by:
\begin{itemize}[leftmargin=*]
    \item Setting $\alpha_1 = \cdots = \alpha_{n} = 1$, which reduces to optimizing over sequences of primitive actions, equivalent to Model-Predictive Control with horizon $n$.
    \item Setting $\alpha_1 = 1$ and $\alpha_2 = \cdots = \alpha_{n} = 0$, yielding~\emph{Generalized~Policy~Improvement}~\citep[\!\emph{GPI};\!][]{barreto2017sf}.
    \item Setting $\alpha_1 = \cdots = \alpha_{n-1} = \alpha$ for some fixed $\alpha \in (0,1)$, which recovers \emph{Geometric Generalized Policy Improvement}~\citep[\!\emph{GGPI};\!][]{thakoor22ghm}.
\end{itemize}
We refer to our approach -- which allows distinct switching probabilities $\alpha_1, \ldots, \alpha_{n-1} \in (0, 1)$ -- as \textsc{CompPlan}. Likewise, we refer to action-level planning as \textsc{ActionPlan}. Crucially, the same pretrained GHMs power all of these methods. By conditioning on policies and a continuum of timescales, our framework spans one-step world models ($\alpha = 1$) through long-horizon policy composition ($\alpha_i \in (0, 1)$), unifying previously disparate~paradigms.

\paragraph{\textbf{Optimization via random shooting.}} The maximization in \eqref{eq:planning.} is tractable when policies are indexed by a finite set, but becomes challenging for large or continuous $\setfont{Z}$.
In this paper, we focus on goal-conditioned policies where $\setfont{Z} = \setfont{S}$ and $z \in \setfont{S}$ represents a subgoal. Here, the key difficulty lies in proposing good candidate subgoals without searching over the entire state space.
Our solution is to use the GHMs themselves as a proposal distribution.
Given a goal $g \in \setfont{S}$, we generate waypoints from $z_0 := s$ by composing $\smash{m^{\scriptstyle \pi_{\scriptscriptstyle g}}_{\scriptstyle \beta_{\scriptscriptstyle i}}}$ over horizons $\smash{\{\beta_i\}}$ as:
\begin{equation*}
    a_i \sim \pi_g(\,\cdot \mid z_i),
    \quad z_{i+1} \sim \sm^{\pi_g}_{\beta_{i+1}}(\,\cdot \mid z_i, a_i),
    \quad \text{for } i = 0, \ldots, n-1\,.
\end{equation*}
This produces a sequence of subgoals $(z_1, \ldots, z_n)$ that naturally guides progress towards the final objective.
Alternatively, we can sample from an unconditional GHM that predicts plausible successor states under the data distribution (i.e., the behavior policy).
To enable this, we stochastically mask the policy encoding as $z = \varnothing$ during training, bootstrapping with the dataset action at the next state in \eqref{eq: td-flow} when masked.

With a proposal distribution, planning reduces to random shooting \citep{matyas1965random}: we (1) sample $m$ candidate sequences $\{(z^{\scriptscriptstyle(i)}_1, \ldots, z^{\scriptscriptstyle(i)}_n)\}_{\scriptscriptstyle{ i=1}}^{\scriptscriptstyle{m}}$ from the proposal distribution; (2) evaluate $Q^{\scriptstyle\nu^{\scriptscriptscriptstyle{(i)}}}_\gamma(s, a^{\scriptscriptstyle(i)}_1)$ for each candidate switching policy $\smash{\nu^{\scriptscriptstyle(i)} := \pi_{\scriptscriptstyle z^{\scriptscriptscriptstyle{(i)}}_{\scriptscriptscriptstyle 1}}\xrightarrow{\scriptstyle \alpha_{1}}\pi_{\scriptscriptstyle z^{\scriptscriptscriptstyle{(i)}}_{\scriptscriptscriptstyle 2}}\cdots\xrightarrow{\scriptstyle \alpha_{n-1}}\pi_{\scriptscriptstyle z^{\scriptscriptscriptstyle{(i)}}_{\scriptscriptscriptstyle n}}}$ using \autoref{lem: gsp q-fucntion} where $a^{\scriptscriptstyle(i)}_1 \sim \pi_{\scriptscriptstyle z^{\scriptscriptscriptstyle{(i)}}_{\scriptscriptscriptstyle 1}}(\,\cdot\mid s)$; and (3) select the sequence $(a^\ast_1, z^\ast_1, \ldots, z^\ast_n)$ with the highest value $Q^{\nu^\ast}_\gamma(s, a^\ast_1)$.
The full method is~summarized~in~\autoref{alg:compplan}.

\section{Experiments}
Our empirical evaluation tests the core hypothesis of this work: that learning a jumpy world model over a diverse collection of parameterized policies enables effective and efficient compositional planning. We begin by describing the experimental setting and the training procedure for the base policies.
We subsequently compare the performance of these policies against that of compositional planning over them.
Additionally, we benchmark \textsc{CompPlan} against other test-time planning approaches and hierarchical methods.
Finally, we examine the effect of the proposed consistency loss on both model accuracy and planning performance.
Additional ablations on the replanning frequency, planning objective, and proposal distribution~are~provided~in~\autoref{app:sec:results}.

\subsection{Experimental Setup}
\textbf{Benchmark and Dataset\;} All experiments are conducted using the OGBench benchmark \citep{park25ogbench}, which provides challenging long-horizon robotic manipulation and locomotion tasks structured as offline goal-conditioned reinforcement learning problems.
We focus on ant navigation tasks across different maze topologies (\textsc{medium}, \textsc{large}, and \textsc{giant}) as well as multi-cube robotic arm manipulation. For both policy and GHM training, we use the \emph{standard} \textsc{navigate} and \textsc{play} offline datasets for \textsc{antmaze} and \textsc{cube}, respectively.

\textbf{Base Policies\;} The effectiveness of compositional planning depends on the quality and characteristics of the underlying policies. We train five distinct policy types, each exhibiting different tradeoffs between GHM learning and planning performance: 1) Goal-Conditioned TD3~\citep[GC-TD3;][]{pirotta24bfmil}; 2) Goal-Conditioned 1-Step RL (GC-1S); 3) Contrastive RL~\citep[CRL;][]{eysenbach22contrastive}; 4) Goal-Conditioned Behavior Cloning \citep[GC-BC;][]{lynch19learning,ghosh21gcbc}; and 5) Hierarchical Flow Behavior Cloning~\citep[HFBC;][]{park25horizon}. Additional details are provided in \autoref{app:policies}.

\begin{table*}[b]
\setlength{\aboverulesep}{0pt}
\setlength{\belowrulesep}{0pt}
\centering
\caption{Success rate ($\uparrow$) of base policies $\pi_g$ (Zero Shot) and compositional planning with GHMs (\textsc{CompPlan}; ours) averaged over tasks. We report the mean and standard deviation over 3 seeds. We highlight relative \mhl{increases} and \mdec{decreases} in performance w.r.t.\ the base policies. Additionally, we \textbf{bold} the best performance for each domain.
}
\label{tab:success.zeroshot.plan}
\renewcommand{\arraystretch}{1.3}
\begin{adjustbox}{width=\textwidth}
\begin{tabular}{l|cc|cc|cc|cc|cc}
\toprule
\multirow{2}{*}{\textbf{Domain}} & \multicolumn{2}{c}{\textbf{CRL}}& \multicolumn{2}{c}{\textbf{GC-1S}}& \multicolumn{2}{c}{\textbf{GC-BC}}& \multicolumn{2}{c}{\textbf{GC-TD3}}& \multicolumn{2}{c}{\textbf{HFBC}}\\
 & \textbf{Zero Shot} & \textbf{\textsc{CompPlan}} & \textbf{Zero Shot} & \textbf{\textsc{CompPlan}} & \textbf{Zero Shot} & \textbf{\textsc{CompPlan}} & \textbf{Zero Shot} & \textbf{\textsc{CompPlan}} & \textbf{Zero Shot} & \textbf{\textsc{CompPlan}}\\
\midrule
\textsc{antmaze-medium} & 0.88 & \cellcolor{hl}{\textbf{0.97 {\footnotesize (0.02)}}} & 0.56 & \cellcolor{hl}{0.87 {\footnotesize (0.05)}} & 0.49 & \cellcolor{hl}{0.85 {\footnotesize (0.08)}} & 0.65 & \cellcolor{gray!25}{0.65 {\footnotesize (0.03)}} & 0.94 & \cellcolor{gray!25}{0.94 {\footnotesize (0.01)}}\\
\hhline{~|----------}
\textsc{antmaze-large} & 0.84 & \cellcolor{hl}{0.90 {\footnotesize (0.00)}} & 0.21 & \cellcolor{hl}{0.61 {\footnotesize (0.04)}} & 0.18 & \cellcolor{hl}{0.73 {\footnotesize (0.02)}} & 0.23 & \cellcolor{hl}{0.48 {\footnotesize (0.05)}} & 0.78 & \cellcolor{hl}{\textbf{0.92 {\footnotesize (0.02)}}}\\
\hhline{~|----------}
\textsc{antmaze-giant} & 0.16 & \cellcolor{hl}{0.29 {\footnotesize (0.03)}} & 0.00 & \cellcolor{hl}{0.02 {\footnotesize (0.00)}} & 0.00 & \cellcolor{hl}{0.03 {\footnotesize (0.01)}} & 0.00 & \cellcolor{hl}{0.01 {\footnotesize (0.01)}} & 0.42 & \cellcolor{hl}{\textbf{0.79 {\footnotesize (0.04)}}}\\
\hhline{~|----------}
\textsc{cube-1} & 0.28 & \cellcolor{hl}{0.86 {\footnotesize (0.02)}} & 0.37 & \cellcolor{hl}{0.66 {\footnotesize (0.02)}} & 0.90 & \cellcolor{hl}{\textbf{0.99 {\footnotesize (0.01)}}} & 0.58 & \cellcolor{hl}{0.91 {\footnotesize (0.01)}} & 0.80 & \cellcolor{hl}{0.97 {\footnotesize (0.01)}}\\
\hhline{~|----------}
\textsc{cube-2} & 0.02 & \cellcolor{hl}{0.50 {\footnotesize (0.03)}} & 0.10 & \cellcolor{hl}{0.57 {\footnotesize (0.09)}} & 0.15 & \cellcolor{hl}{\textbf{0.97 {\footnotesize (0.01)}}} & 0.12 & \cellcolor{hl}{0.82 {\footnotesize (0.01)}} & 0.76 & \cellcolor{hl}{0.77 {\footnotesize (0.02)}}\\
\hhline{~|----------}
\textsc{cube-3} & 0.01 & \cellcolor{hl}{0.73 {\footnotesize (0.02)}} & 0.01 & \cellcolor{hl}{0.67 {\footnotesize (0.02)}} & 0.09 & \cellcolor{hl}{\textbf{0.92 {\footnotesize (0.01)}}} & 0.12 & \cellcolor{hl}{0.83 {\footnotesize (0.04)}} & 0.64 & \cellcolor{hl}{0.83 {\footnotesize (0.03)}}\\
\hhline{~|----------}
\textsc{cube-4} & 0.00 & \cellcolor{hl}{0.39 {\footnotesize (0.04)}} & 0.01 & \cellcolor{hl}{0.60 {\footnotesize (0.02)}} & 0.00 & \cellcolor{hl}{\textbf{0.76 {\footnotesize (0.03)}}} & 0.00 & \cellcolor{hl}{0.57 {\footnotesize (0.03)}} & 0.24 & \cellcolor{hl}{0.67 {\footnotesize (0.03)}}\\
\bottomrule
\end{tabular}
\end{adjustbox}
\end{table*}

\subsection{How Do Policies Affect Compositional Planning?}\label{ssec:zero-shot-policy-comp}
For each policy family, we train a GHM in an off-policy manner using the \tdhc loss described in~\autoref{ssec:td-hc}. GHMs are trained for 3M gradient steps using the Adam optimizer~\citep{kingma15adam} with a batch size of 256. The model architecture follows a U-Net-style design, similar to~\citet{farebrother2025temporal}. Both the timestep $t$ and the discount factor $\gamma$ are embedded by first applying a sinusoidal embedding to increase dimensionality, followed by a two-layer MLP with mish activations~\citep{misra19mish}. For the discount embedding, we further concatenate the vector $[\gamma, 1-\gamma, -\log(1-\gamma)]$, where $-\log(1-\gamma)$ corresponds to the logarithm of the effective horizon; we find this improves the model's sensitivity to the discount factor. Other conditioning information, such as the state-action pair and policy embedding 
$z$, is processed through an additional MLP and added to both the time and discount embeddings. The network incorporates conditioning information via FiLM modulation~\citep{perez18film}.
When training each GHM, we apply the horizon-consistency objective \eqref{eq: td-hc} (i.e., $\beta \ne \gamma$) to $25\%$ of each mini-batch in \textsc{antmaze} and $12.5\%$ in \textsc{cube}. We also train the unconditional model (i.e., $z = \varnothing$) for $10\%$ of each mini-batch; these two proportions do not overlap.

During evaluation, we tailor the proposal distribution to each domain based on their distinct characteristics. In \textsc{antmaze}, states are separated by large temporal distances and physical barriers, so an unconditional proposal would waste most samples on irrelevant regions of the state space; we therefore sample 256 subgoal sequences from the goal-conditioned GHM. In \textsc{cube}, tasks consist of short pick-and-place sequences with many viable paths, making the unconditional GHM a natural fit; we sample 1024 sequences to cover the broader proposal.
Ablations of these choices are provided in~\autoref{appendix: proposal_ablation}.

We begin by comparing the zero-shot performance of the base policies against our compositional planning approach. \autoref{tab:success.zeroshot.plan} details these results (averaged over three seeds), showing that compositional planning consistently improves upon the zero-shot policies. 
This demonstrates the effectiveness of our method in selecting policy sequences that exceed the performance of the best individual policy.
While improvements are evident across all domains, the gains are particularly pronounced in complex, long-horizon tasks such as \textsc{antmaze-giant} and \textsc{cube-\{3,4\}}, where success rates can rise from $10\%$ to $90\%$ in the most extreme~cases.

Among the evaluated policy classes, HFBC emerges as the most consistent zero-shot performer.
\textsc{CompPlan} further improves HFBC, notably in \textsc{antmaze-giant} and \textsc{cube-4}. CRL, in contrast, is effective in \textsc{antmaze} but underperforms in \textsc{cube}.
We hypothesize this stems from the inductive bias in CRL's representation learning, which approximates the goal-conditioned value function as $Q(s, a, g) \approx \phi(s, a)^\top \psi(g)$.
This factorization effectively captures the ant's spatial position -- yielding robust navigation in \textsc{antmaze} -- but fails to encode complex object-related features, resulting in weaker \textsc{cube} policies.
Notably, incorporating planning not only improves upon CRL's strong baseline in \textsc{antmaze}, but also achieves non-trivial success rates in \textsc{cube}, demonstrating that our approach can extract utility from base policies otherwise limited by their~inductive~bias.

Interestingly, the other policies exhibit the opposite trend. When evaluated zero-shot, they show weak performance in medium to long-horizon tasks -- \textsc{antmaze-medium}, \textsc{antmaze-large}, and \textsc{cube} tasks all prove challenging, with \textsc{cube} success rates falling below $10\!-\!15\%$.
However, these policies prove remarkably effective when integrated into our compositional planning framework, with success rates climbing above $70\%$ in many tasks.
Taken together, these results suggest that zero-shot metrics tell us little about how well a policy will compose; its utility may only become clear when orchestrated with other policies.

\subsection{How Does Compositional Planning Compare to Other Planning  and Hierarchical Approaches?}

Our results in \autoref{ssec:zero-shot-policy-comp} demonstrate that compositional planning unlocks capabilities beyond what any single policy achieves alone.
We now situate our approach among existing planning and hierarchical methods, revealing that \textsc{CompPlan}'s advantages stem from its unique combination of temporal abstraction and flexible~composition.

Recall from \autoref{sec: test-time-planning} that different choices of switching probabilities recover existing methods as special cases. This observation suggests a natural ablation: comparing \textsc{CompPlan} against these special cases can disentangle the contribution of its key components.
At one extreme lies \emph{Generalized Policy Improvement} \citep[GPI;][]{barreto2017sf} which sets $\alpha_1 = 1$ and commits to a single policy for the remainder of the episode. GPI leverages our GHMs to estimate each policy's value, selecting the best in-class policy at each timestep according to:
$$
\max_{z, A \sim \pi_z(\cdot\mid s)} Q^{\pi_z}_\gamma(s,a) =  (1 - \gamma)^{-1} \mathbb{E}_{S \sim m^{\pi_z}_\gamma(\cdot\mid s, a)}\left[\, r(S)\, \right] \,.
$$
GPI thus serves as a test of whether composing policies offers an advantage over merely selecting among them.

At the other extreme, we compare against action-level planning (\textsc{ActionPlan}), which sets $\alpha_1 = \cdots = \alpha_n = 1$ and operates at the granularity of individual actions rather than policies.
To fairly isolate the effect of temporal abstraction, we train a dedicated one-step world model $\tilde{p}(\cdot| s, a)$ using the same flow-matching framework, network architecture, and training procedure as our GHMs -- differing only in the removal of policy and discount conditioning.
Given this model, we optimize the following objective:
$$\argmax_{A_1, \ldots, A_n}\, \sum_{k=1}^n \gamma^k r(S_{k+1})\,,$$
where $S_{k+1} \sim \tilde{p}(\,\cdot\mid S_k, A_k)$ and $A_{k} \sim \pi_g(\,\cdot\mid S_k)$.
This comparison directly tests whether planning over sequences of policies -- enabled by jumpy predictions -- confers an advantage over action-level planning.

\autoref{fig:wm.gpi.ghm.planning} reveals a clear pattern: on long-horizon tasks, \textsc{CompPlan} substantially outperforms both alternatives, achieving an $89\%$ relative improvement over GPI and a $201\%$ gain over \textsc{ActionPlan} (averaged across policies and long-horizon domains). These gains indicate that neither policy selection alone nor action-level planning captures the full benefit of our compositional framework.
Rather, the combination of planning over sequences of policies at multiple timescales unlocks strong long-horizon capabilities.

\ifmetatemplate
\begin{wraptable}[14]{r}{0.5075\textwidth}
\vspace{-1.325em}
\fi
\setlength{\aboverulesep}{0pt}
\setlength{\belowrulesep}{0pt}
\centering
\caption{Success rate ($\uparrow$) of hierarchical baselines and compositional planning with HFBC policies (\textsc{CompPlan}; ours). For each domain, we \mhl{highlight} the best performance.}
\label{tab:success_adjustbox}
\setlength{\extrarowheight}{1pt}
\renewcommand{\arraystretch}{1.25}
\ifmetatemplate
\begin{adjustbox}{width=0.5075\textwidth}
\fi
\begin{tabular}{l|c|c|c|c}
\toprule
\textbf{Domain}& \textbf{HIQL} & \textbf{SHARSA} & \textbf{HFBC} & \textbf{\textsc{CompPlan}}\\
\midrule
\textsc{antmaze-medium} & \cellcolor{hl}{0.96 {\footnotesize (0.01)}} & 0.91 {\footnotesize (0.03)} & 0.94 & 0.94 {\footnotesize (0.01)} \\
\textsc{antmaze-large} & {0.91 {\footnotesize (0.02)}} & 0.88 {\footnotesize (0.03)} & 0.78 & \cellcolor{hl}{0.92 {\footnotesize (0.02)}}\\
\textsc{antmaze-giant} & 0.65 {\footnotesize (0.05)} & 0.56 {\footnotesize (0.07)} & 0.42 & \cellcolor{hl}{0.79 {\footnotesize (0.04)}}\\
\textsc{cube-1} & 0.15 {\footnotesize (0.03)} & 0.70 {\footnotesize (0.03)} & 0.84 & \cellcolor{hl}{0.97 {\footnotesize (0.01)}}\\
\textsc{cube-2} & 0.06 {\footnotesize (0.02)} & 0.60 {\footnotesize (0.07)} & 0.70 & \cellcolor{hl}{0.77 {\footnotesize (0.02)}}\\
\textsc{cube-3} & 0.03 {\footnotesize (0.01)}& 0.50 {\footnotesize (0.09)} & 0.54 & \cellcolor{hl}{0.83 {\footnotesize (0.03)}}\\
\textsc{cube-4} & 0.00 {\footnotesize (0.00)} & 0.09 {\footnotesize (0.04)} & 0.34 & \cellcolor{hl}{0.67 {\footnotesize (0.03)}}\\
\bottomrule
\end{tabular}
\ifmetatemplate
\end{adjustbox}
\fi
\ifmetatemplate
\end{wraptable}
\fi

Furthermore, we compare against methods that learn hierarchical structure during training: HIQL \citep{park23hiql} and SHARSA \citep{park25horizon}, the current state-of-the-art on OGBench. These approaches train high-level policies to select subgoals or skills, whereas \textsc{CompPlan} performs composition at test time using pre-trained GHMs. \autoref{tab:success_adjustbox} presents this comparison where \textsc{CompPlan} employs HFBC base policies similar to SHARSA. Our approach consistently outperforms both hierarchical methods, with the largest margins on the most challenging tasks. In \textsc{cube-4}, \textsc{CompPlan} achieves $67\%$ success compared to $9\%$ for SHARSA and $0\%$ for HIQL -- demonstrating that test-time composition can surpass learned hierarchies when tasks demand flexible, long-horizon reasoning. Notably, \textsc{CompPlan} requires no task-specific training, suggesting a promising alternative to task-specific hierarchical methods.

\begin{figure*}[t]
    \centering
    \includegraphics[width=\linewidth]{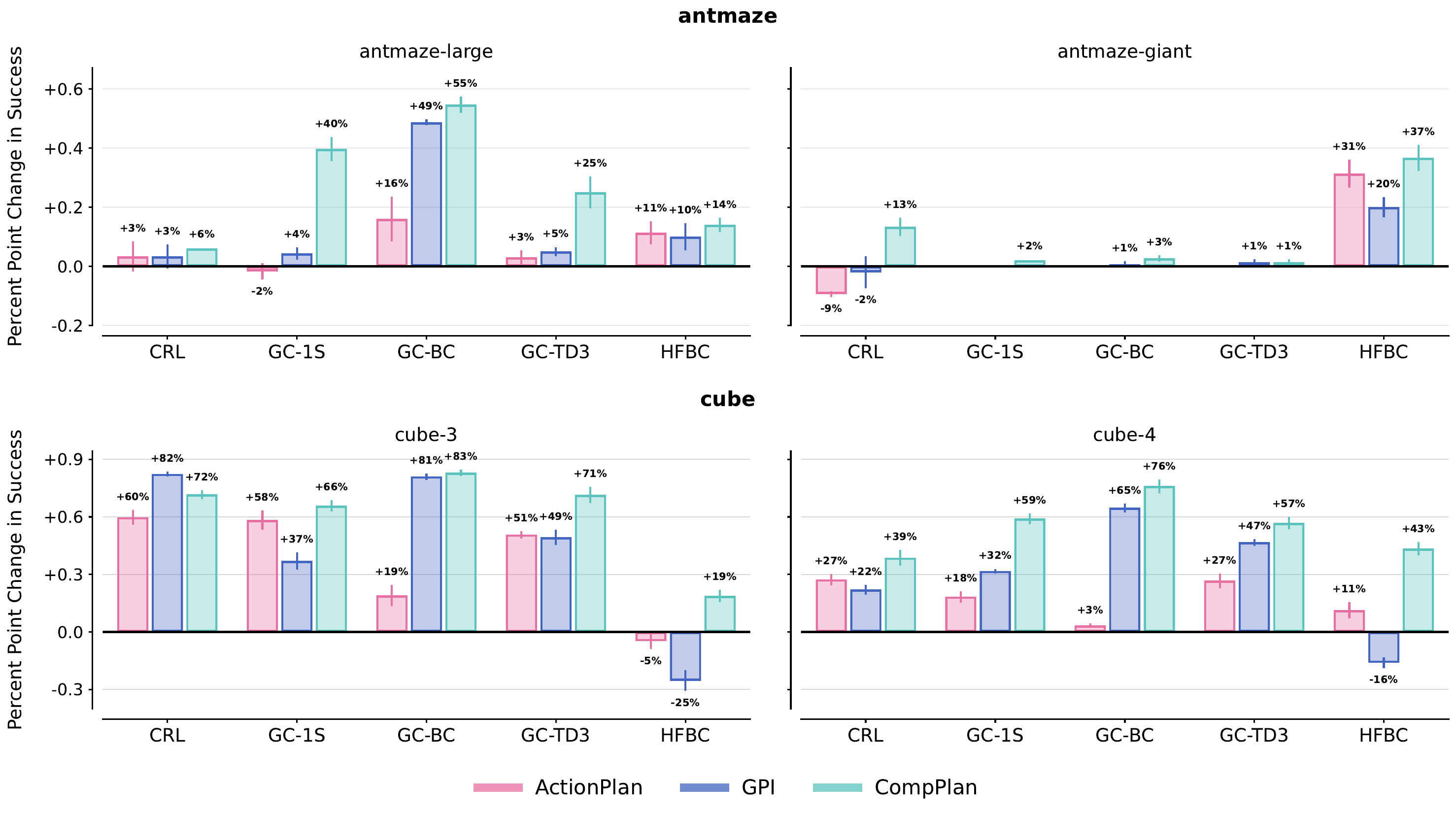}
    \caption{Percentage point change ($\uparrow$) over zero-shot policies. We compare: action-level planning (\textsc{ActionPlan}) with a world model; generalized policy improvement (GPI) and compositional planning (\textsc{CompPlan}; ours) with GHMs.}
    \label{fig:wm.gpi.ghm.planning}
\end{figure*}

\subsection{How Does Horizon Consistency Affect GHM Learning?}\label{ssec:td-hc-exp}

Having established that compositional planning yields strong empirical gains, we now turn inward to examine one of our methodological contributions: the horizon consistency objective from \autoref{ssec:td-hc}. We investigate its impact through two complementary lenses -- generative fidelity and downstream planning -- revealing that consistency plays distinct roles at different stages of the pipeline.

\ifmetatemplate
\begin{wraptable}[14]{r}{0.5075\textwidth}
\vspace{-1.325em}
\fi
\setlength{\aboverulesep}{0pt}
\setlength{\belowrulesep}{0pt}
\centering
\caption{Accuracy (EMD; $\downarrow$) of GHMs trained with our horizon consistency loss (\tdhc) and without (\tdfl) for discount factor $\gamma=0.995$. We \mhl{highlight} the best performing method.}
\label{tab:consistency_ablation_emd_short}
\setlength{\extrarowheight}{1pt}
\renewcommand{\arraystretch}{1.25}
\ifmetatemplate
\begin{adjustbox}{width=0.5075\textwidth}
\fi
\begin{tabular}{l|ll|ll}
\toprule
\multirow[c]{2}{*}{\textbf{Domain}} & \multicolumn{2}{c}{\textbf{CRL}}& \multicolumn{2}{c}{\textbf{GC-1S}}\\
& \multicolumn{1}{c}{\textbf{\tdfl (\xmark)}} & \multicolumn{1}{c|}{\textbf{\tdhc (\cmark)}} & \multicolumn{1}{c}{\textbf{\tdfl (\xmark)}} & \multicolumn{1}{c}{\textbf{\tdhc (\cmark)}} \\
\midrule
\textsc{antmaze-medium} & 4.41 {\footnotesize (0.05)} & \cellcolor{hl}{4.22 {\footnotesize (0.06)}} & 4.40 {\footnotesize (0.02)} & \cellcolor{hl}{4.22 {\footnotesize (0.03)}} \\
\hhline{~|----}
\textsc{antmaze-large} & 5.24 {\footnotesize (0.07)} & \cellcolor{hl}{4.81 {\footnotesize (0.03)}} & 5.12 {\footnotesize (0.18)} & \cellcolor{hl}{4.67 {\footnotesize (0.04)}} \\
\hhline{~|----}
\textsc{antmaze-giant} & 6.77 {\footnotesize (0.49)} & \cellcolor{hl}{5.74 {\footnotesize (0.06)}} & 7.29 {\footnotesize (0.69)} & \cellcolor{hl}{5.25 {\footnotesize (0.08)}} \\
\hhline{~|----}
\textsc{cube-1} & 1.60 {\footnotesize (0.02)} & \cellcolor{hl}{1.57 {\footnotesize (0.03)}} & 1.43 {\footnotesize (0.00)} & \cellcolor{hl}{1.33 {\footnotesize (0.03)}} \\
\hhline{~|----}
\textsc{cube-2} & 2.36 {\footnotesize (0.03)} & \cellcolor{hl}{2.23 {\footnotesize (0.02)}} & 1.86 {\footnotesize (0.04)} & \cellcolor{hl}{1.71 {\footnotesize (0.01)}} \\
\hhline{~|----}
\textsc{cube-3} & 2.15 {\footnotesize (0.02)} & \cellcolor{hl}{2.10 {\footnotesize (0.02)}} & 1.80 {\footnotesize (0.04)} & \cellcolor{hl}{1.71 {\footnotesize (0.03)}} \\
\hhline{~|----}
\textsc{cube-4} & 2.41 {\footnotesize (0.03)} & \cellcolor{hl}{2.34 {\footnotesize (0.01)}} & 2.13 {\footnotesize (0.03)} & \cellcolor{hl}{2.05 {\footnotesize (0.03)}} \\
\bottomrule
\end{tabular}
\ifmetatemplate
\end{adjustbox}
\fi
\ifmetatemplate
\end{wraptable}

We begin by asking whether enforcing consistency across timescales produces more accurate GHM predictions.
To isolate this effect, we train two GHMs -- one using \tdfl and the other using \tdhc\ -- keeping all else fixed.
We obtain ground-truth samples by executing $64$ policy rollouts from $256$ randomly selected (state, goal) pairs and resampling $2048$ visited states according to $t \sim \mathrm{Geom}(1-\gamma)$.
We then draw an equal number of samples from each GHM and compute the Earth Mover's Distance \citep[EMD;][]{RubnerTG00} between the two sets.
As \autoref{tab:consistency_ablation_emd_short} demonstrates, consistency systematically improves accuracy at long horizons. The effect is particularly striking in \textsc{antmaze}, where the maze structure imposes hard constraints on reachability. For example, employing \tdhc with GC-1S policies in $\textsc{antmaze-giant}$ results in a $28\%$ reduction in EMD. Qualitatively, \autoref{fig:qual-ghm} shows that \tdhc on \textsc{antmaze-giant} leads to fewer samples erroneously traversing walls, a failure mode that compounds over long horizons.

Given these accuracy improvements, one might expect correspondingly large gains in planning. Surprisingly, \autoref{tab:success_adjustbox2} tells a different story: planning success rates are nearly identical with and without consistency, averaging only a $5\%$ relative improvement. These findings are not contradictory but rather illuminate when consistency matters most. Our planning procedure evaluates candidate sequences using effective horizons $\{\beta_i\}$ in the range of $50-100$ steps (i.e., $\beta_i \in [0.98, 0.99]$), not the $200+$ step horizons where consistency provides its largest accuracy gains. At these moderate timescales, the base \tdfl objective already learns sufficiently accurate models to rank policies correctly.
The consistency objective thus provides a margin of safety for long-horizon predictions without being strictly necessary for the planning horizons we employ in OGBench.
This suggests that practitioners facing longer-horizon tasks would benefit most from the consistency~loss~herein.

\section{Discussion}
This work reframes pre-trained policies not as isolated controllers but as composable primitives -- building blocks to be sequenced.
Jumpy world models provide the mechanism: by predicting successor states for many policies across a continuum of horizons, they enable planning over behavior rather than primitive actions.
Empirically, compositional planning consistently outperforms individual policies, hierarchical methods, and action-level planning, with striking gains at long horizons.
Looking ahead, we see many promising directions: learning state-dependent switching probabilities, jointly learning policies and predictive models, employing more sample-efficient model-predictive control methods, and exploring jumpy world models in learned latent~spaces.

\section*{Acknowledgements}
The authors thank Harley Wiltzer, Arnav Jain, Pierluca D'oro, Nate Rahn, Michael Rabbat, Yann Ollivier, Marlos C. Machado, Michael Bowling, Adam White, and Doina Precup for useful discussions that helped improve this work.
MGB is supported by the Canada CIFAR AI Chair program and NSERC.
Finally, this work was made possible by the Python community, particularly NumPy \citep{harris20numpy}, Matplotlib \citep{hunter07mpl}, Seaborn \citep{waskom2021seaborn}, Einops \citep{rogozhnikov22einops}, and Mujoco \citep{todorov2012mujoco}.

\bibliography{bookends}
\bibliographystyle{./meta/assets/plainnat}

\clearpage

\appendix
\onecolumn

\begin{appendices}
\vspace{-0.95em}

\startcontents[sections]
\printcontents[sections]{l}{1}{\setcounter{tocdepth}{2}}

\section{Extended Related Works}
\paragraph{\textbf{Successor Measure}}

Methods that learn (discounted) state occupancies employing temporal difference learning \citep{sutton95td} date back to the successor representation \citep{dayan93sr} with more recent extensions like successor features \citep{barreto2017sf} and the successor measure \citep{blier21successor,blier22thesis}. \citet{janner20gmodel} was the first to introduce a generative model of the successor measure with $\gamma$-models also referred to as geometric horizon models \citep{thakoor22ghm}. \citet{wiltzer24dsm} additionally introduced $\delta$-models that learn a distribution over $\gamma$-models enabling applications in distributional RL \citep{bellemare17distrl,bellemare23distrl,dabney18qrdqn}. Many generative modeling techniques have been applied to learn these models, including GANs \citep{janner20gmodel,wiltzer24dsm}, normalizing flows \citep{janner20gmodel}, VAEs \citep{thakoor22ghm,tomar2024video}, flow matching \citep{farebrother2025temporal,zheng25intention}, and diffusion \citep{schramm24belldiff,farebrother2025temporal}. Closely related is work on distributional successor features also known as multi-variate distributional RL \citep{freirich19distributional,gimelfarb21risk,zhang21multidim,wu23distributional,wiltzer24mvdrl,zhu24gom}, that involves modeling the distribution over cumulative finite-dimensional features induced by a policy.
From a generative modeling perspective, our work generalizes temporal difference flows \citep{farebrother2025temporal} and shows how long-horizon predictions can be improved by training across multiple timescales with a novel horizon consistency objective.

Additionally, our compositional framework can be viewed as a generalization of Geometric Generalized Policy Improvement \citep[GGPI;][]{thakoor22ghm} to arbitrary switching probabilities; however, our concrete formulation and empirical evaluation differ substantially from \citet{thakoor22ghm}.
First, \citet{thakoor22ghm} considers only four pre-trained policies in practice, and learn two separate GHMs with effective horizons of $5$ and $10$ steps respectively. Their experiments also restrict composition to sequences of only two policies. In contrast, we learn GHM models conditioned on a continuous family of policies and timescales with horizons up to $25\times$ longer, and evaluate GSPs with lengths ranging from $3$ to $24$ policies.
As a result, our work not only enables a richer class of geometric switching policies but also performs a more comprehensive empirical validation of these techniques on challenging long-horizon tasks where temporal abstraction is most important.

\paragraph{\textbf{Planning With Temporal Abstractions}}

Several prior works have explored planning over subgoals or waypoints to solve long-horizon tasks \citep[e.g.,][]{nasiriany19planning,eysenbach19search,nair20hierarchical,chanesane21goal,fang22planning, hafner2022deep,lo24goal,gurtler25long}. These methods typically involve learning sub-goal conditioned policies together with a high-level dynamics model that predicts the outcomes of reaching these subgoals $k$ steps in the future, then employ model predictive control to select subgoal sequences.
In contrast, GHMs model the entire state-occupancy distribution rather than a fixed $k$-step lookahead, allowing planning over arbitrary reward functions rather than just goal-conditioned~tasks.

A parallel line of work employs diffusion models \citep{vincent2011connection,sohl2015deep,ho20ddpm,song21sde,chieh25the} for trajectory-level planning \citep[e.g.,][]{janner22planning,ajay23is,zheng23guided,chen25extendable,yoon25monte,yoon25fast,luo25generative,lee25state}.
Rather than modeling policy-induced dynamics, these methods train generative models over trajectory segments and perform planning within the denoising process. Execution then relies on inverse dynamics models to extract actions from planned state sequences.
While powerful, this paradigm has notable limitations: (i) it requires learning accurate inverse dynamics; (ii) planning quality depends heavily on the trajectory distribution in the training data rather than on the capabilities of any particular policy; and (iii) these methods often assume access to oracle goal representations during the denoising process.
In contrast, our work is policy-grounded: it directly composes the behaviors of pre-trained policies by predicting their induced state occupancies, rather than planning over abstract trajectory segments. This makes our approach agnostic to the policy class and avoids inverse dynamics entirely since actions are sampled directly from the base policies.

Closer to our approach, are methods that learn dynamics models over temporally extended behaviors \citep[e.g.,][]{xie21latent,shi2022skill,zhang23leveraging, mishra23generative, gurtler25long}. These methods learn general latent skills together with a high-level dynamics model predicting the outcomes of their execution (more precisely the states reached a fixed number of steps in the future), and use MPC to plan over these skills. However, these approaches suffer from the same limitations as the $k$-step subgoal methods discussed above.
While we only report experiments for goal-based policies, \textsc{CompPlan} also enables planning on top of any set of parameterized skills or policies, for example, those learned via unsupervised RL methods \citep{borsa19usfa,touati21fwdbwd,touati23zeroshot,park24hilbert,frans24unsupervised,cetin25finer,agarwal25psm,tirinzoni25zeroshot,sikchi25fast,sikchi2025rlzero,TD-JEPA}.

Finally, our work connects to both the options framework \citep{sutton99between,precup00temporal} and hierarchical RL \citep{schmidhuber91learning,kaelbling93hierarchical,parr97reinforcement,barto03recent,klissarov25discovering}, which share a focus on temporal abstraction and multi-level decision making. Prior work has explored planning with options \citep[e.g.,][]{silver12compositional,jinnai19finding,barreto19option,carvalho23sfkeyboard,rodriguez24learning} or learning policies and value functions at multiple levels of abstraction \citep[e.g.,][]{precup97multi,precup97planning,precup98theoretical,dayan92feudal,dietterich98maxq,vezhnevets17feudal,kulkarni16hierarchical,gurtler21hierarchical,nachum18data,levy2018hierarchical,park23hiql,park25horizon}.
Our compositional planning method can be viewed both as planning over options with a simple termination condition -- where each policy terminates after a random number of steps -- and as a hierarchical RL method that replaces the high-level policy with a test-time planning procedure.

\clearpage
\section{Algorithms}
We present the full \tdhc algorithm along with \textsc{CompPlan} and the two proposal distributions outlined herein.

\begin{algorithm}[H]
    \caption{Temporal Difference Flows with Horizon Consistency}\label{alg:td-flow-hc}
    \addcontentsline{toc}{subsection}{Algorithm \ref{alg:td-flow-hc}: Temporal Difference Flows with Horizon Consistency}
\begin{small}
    \begingroup
    \let\oldbeta\beta
    \let\oldgamma\gamma
    \renewcommand{\beta}{\textcolor{ocorange7}{\boldsymbol\oldbeta}}
    \renewcommand{\gamma}{\textcolor{ocind7}{\boldsymbol\oldgamma}}
    \begin{algorithmic}[1]
\State \textbf{Inputs}: offline dataset $\mathcal{D}$, policy $\pi$, batch size $K$, Polyak coefficient $\zeta$, randomly initialized weights $\theta$, learning rate $\eta$, maximum discount factor $\oldgamma_{\mathrm{max}} \in [0, 1)$, horizon consistency proportion $\tau_c \in [0, 1]$.
    \For{$n = 1, \dots$}
    \State Sample mini-batch $\{ (S_k, A_k, S_k', A'_k)\}_{k=1}^K$ from $\mathcal{D}$
    \For{$k = 1, \ldots, K$}
    \State $t_k \sim \mathcal{U}([0,1])$
    \State $\gamma_k \sim \mathcal{U}([0, \oldgamma_{\mathrm{max}}])$
    \State

    \State {\bf \textcolor{gray!50!blue}{\# \texttt{One-Step Term}}}
    \State $X_0 \sim p_0(\cdot)$
    \State $\onestepterm{X}_{t_k} \gets (1 - t_k) X_0 + t_k S'_k$
    \State $\onestepterm{\ell}_k(\theta) = \big\| v_{t_k}(\onestepterm{X}_{t_k}\mid S_k, A_k, \gamma_k; \theta) - (S_k' - X_0) \big\|^2$
    \State

    \If{$k \le \lceil K \cdot \tau_c \rceil$}
    \State $\beta_k \sim \mathcal{U}([0, \gamma_k])$
    \State {\bf \textcolor{gray!50!blue}{\# \texttt{$\boldsymbol\oldbeta$-Bootstrap Term}}}
    \State $X_0 \sim p_0(\cdot)$
    \State $\bootterm{X}^\beta_{t_k} \gets \psi_{t_k}(X_0 \mid S'_k, A'_k, \beta_k; \bar\theta)$
    \State $\bootterm{\ell}^{\beta}_k(\theta) =  
    \big\| v_{t_k}(\bootterm{X}^\beta_{t_k}\mid S_k, A_k, \gamma_k; \theta) - v_{t_k}(\bootterm{X}^\beta_{t_k}\mid S'_k, A'_k, \beta_k; \bar\theta) \big\|^2$
    \State {\bf \textcolor{gray!50!blue}{\# \texttt{$\boldsymbol\oldgamma$-Bootstrap Term}}}
    \State $(X_0, X''_0) \sim p_0(\cdot)$
    \State $S''_k \gets \psi_1(X''_0 \mid S'_k, A'_k, \beta_k; \bar\theta)$
    \State $A''_k \sim \pi(\cdot\mid S''_k)$
    \State $\bootterm{X}^\gamma_{t_k} \gets \psi_{t_k}(X_0\mid S''_k, A''_k, \gamma_k; \bar\theta)$
    \State $\bootterm{\ell}^\gamma_k(\theta) =  
    \big\| v_{t_k}(\bootterm{X}^\gamma_{t_k}\mid S_k, A_k, \gamma_k; \theta) - v_{t_k}(\bootterm{X}^\gamma_{t_k}\mid S''_k, A''_k, \gamma_k; \bar\theta) \big\|^2$
    \State {\bf \textcolor{gray!50!blue}{\# \texttt{Mixture Loss}}}
    \State $\ell_k(\theta) = (1 - \gamma_k) \onestepterm{\ell}_k(\theta) + \gamma_k \frac{1 - \gamma_k}{1 - \beta_k} \bootterm{\ell}^\beta_k(\theta) + \gamma_k \frac{\gamma_k - \beta_k}{1 - \beta_k} \bootterm{\ell}^\gamma_k(\theta)$
    \Else{}
    \State {\bf \textcolor{gray!50!blue}{\# \texttt{$\boldsymbol\oldgamma$-Bootstrap Term}}}
    \State $X_0 \sim p_0(\cdot)$
    \State $\bootterm{X}^\gamma_{t_k} \gets \psi_{t_k}(X_0 \mid S'_k, A'_k, \gamma_k; \bar\theta)$
    \State $\bootterm{\ell}^{\gamma}_k(\theta) =  
    \big\| v_{t_k}(\bootterm{X}^\gamma_{t_k}\mid S_k, A_k, \gamma_k; \theta) - v_{t_k}(\bootterm{X}^\gamma_{t_k}\mid S'_k, A'_k, \gamma_k; \bar\theta) \big\|^2$
    \State {\bf \textcolor{gray!50!blue}{\# \texttt{Mixture Loss}}}
    \State $\ell_k(\theta) = (1 - \gamma_k) \onestepterm{\ell}_k(\theta) + \gamma_k \bootterm{\ell}^\gamma_k(\theta)$
    \EndIf

    \EndFor
    \State {\bf \textcolor{gray!50!blue}{\# \texttt{Compute loss}}}
    \State $\ell(\theta) = \frac{1}{K} \sum_{k=1}^K \ell_k(\theta)$

    \State {\bf  \textcolor{gray!50!blue}{\# \texttt{Perform gradient step}}}
    \State $\theta \leftarrow \theta - \eta \nabla_{\theta} \ell(\theta)$
    \State {\bf  \textcolor{gray!50!blue}{\# \texttt{Update parameters of target vector field}}}
    \State  $\bar\theta \leftarrow \zeta \bar\theta + (1-\zeta) \theta$ 
    \EndFor
\end{algorithmic}
\endgroup
 \end{small}
\end{algorithm}

\begin{algorithm}[H]
    \caption{Compositional Planning with Jumpy World Models}\label{alg:compplan}
    \addcontentsline{toc}{subsection}{Algorithm \ref{alg:compplan}: Compositional Planning with Jumpy World Models}
\begin{small}
    \begin{algorithmic}[1]
\State \textbf{Inputs}: parameterized class of policies $\{\pi_z\}_{z \in \setfont{Z}}$, geometric horizon model $m^{\pi_z}_\gamma$, policy sequence length $K$, proposal distribution $\rho : \setfont{S} \to \mathscr{P}(\setfont{Z}^K)$, number of proposals $M$, number of monte-carlo samples $N$, reward function $r$, effective discount factors $\{\beta_k\}_{k=1}^{K}$, mixture weights $\{w_k\}_{k=1}^{K}$
\Function{\textsc{CompPlan}}{$s$}
\For{$i = 1, \ldots, M$}
    \State {\bf \textcolor{gray!50!blue}{\# \texttt{Sample policy sequence from proposal distribution}}}
    \State $(z_1^{(i)}, \ldots, z_K^{(i)}) \sim \rho(\cdot \mid s)$
    \State
    \State {\bf \textcolor{gray!50!blue}{\# \texttt{Sample initial action}}}
    \State $a_1^{(i)} \sim \pi_{z_1^{(i)}}(\cdot \mid s)$
    \State
    \State {\bf \textcolor{gray!50!blue}{\# \texttt{Monte Carlo Q-value estimation (\autoref{lem: gsp q-fucntion})}}}
    \For{$j = 1, \ldots, N$}
        \State $(S_0, A_0) \gets (s, a_1^{(i)})$
        \For{$k = 1, \ldots, K$}
            \State $S_k \sim m^{\pi_{z_k^{(i)}}}_{\beta_k}(\cdot \mid S_{k-1}, A_{k-1})$
            \State $A_k \sim \pi_{z_{k+1}^{(i)}}(\cdot \mid S_k)$
        \EndFor
        \State $\widehat{Q}^{(i,j)} \gets (1 - \gamma)^{-1} \sum_{k=1}^{K} w_k \cdot r(S_k)$
    \EndFor
    \State $\widehat{Q}^{(i)} \gets \frac{1}{N} \sum_{j=1}^{N} \widehat{Q}^{(i,j)}$
\EndFor
\State
\State {\bf \textcolor{gray!50!blue}{\# \texttt{Select best candidate}}}
\State $i^* \gets \arg\max_{i \in \llbracket M \rrbracket} \widehat{Q}^{(i)}$
\State
\State {\bf \textcolor{gray!50!blue}{\# \texttt{Return optimal action and policy}}}
\State \Return $(a_1^{(i^*)}, z_1^{(i^*)})$
\EndFunction
\end{algorithmic}
 \end{small}
\end{algorithm}

\begin{figure}[H]
    \vspace{-2.5mm}
    \begin{minipage}[t]{0.48\textwidth}
    \begin{algorithm}[H]
    \caption{Goal-Conditioned Proposal}\label{alg:goal-cond-proposal}
    \addcontentsline{toc}{subsection}{Algorithm \ref{alg:goal-cond-proposal}: Goal-Conditioned Proposal}
    \begin{small}
        \begin{algorithmic}[1]
    \State \textbf{Inputs}: geometric horizon model $m^{\pi_z}_\gamma$, policy sequence length $K$, effective discount factors $\{\beta_k\}_{k=1}^{K}$
    \Function{\textsc{GoalCondProposal}}{$s, g$}
        \State {\bf \textcolor{gray!50!blue}{\# \texttt{Chain GHM samples toward goal}}}
        \State $z_0 \gets s$
        \For{$k = 1, \ldots, K$}
            \State $A_{k-1} \sim \pi_g(\cdot\mid z_{k-1})$
            \State $z_k \sim m^{\pi_g}_{\beta_{k}}(\cdot \mid z_{k-1}, A_{k-1})$
        \EndFor
        \State \Return $(z_1, \ldots, z_K)$
    \EndFunction
    \end{algorithmic}
     \end{small}
    \end{algorithm}
    \end{minipage}
    \hfill
    \begin{minipage}[t]{0.48\textwidth}
    \begin{algorithm}[H]
    \caption{Unconditional Proposal}\label{alg:uncond-proposal}
    \addcontentsline{toc}{subsection}{Algorithm \ref{alg:uncond-proposal}: Unconditional Proposal}
    \begin{small}
        \begin{algorithmic}[1]
    \State \textbf{Inputs}: unconditional (behavior policy $\mu$) geometric horizon model $m^\mu_\gamma$, policy sequence length $K$, effective discount factors $\{\beta_k\}_{k=1}^{K}$
    \Function{\textsc{UncondProposal}}{$s$}
        \State {\bf \textcolor{gray!50!blue}{\# \texttt{Chain unconditional GHM samples}}}
        \State $z_0 \gets s$
        \For{$k = 1, \ldots, K$}
            \State $z_k \sim m^{\mu}_{\beta_{k}}(\cdot \mid z_{k-1})$
        \EndFor
        \State \Return $(z_1, \ldots, z_K)$
    \EndFunction
    \end{algorithmic}
     \end{small}
    \end{algorithm}
    \end{minipage}
    \caption{Two proposal distributions for goal-conditioned compositional planning. \textbf{Left:} \textsc{GoalCondProposal} samples subgoal sequences by chaining GHM predictions conditioned on the goal $g \in \setfont{S}$, guiding the agent toward the target. \textbf{Right:} Unconditional proposal samples from the behavior policy's GHM that can be trained alongside $m^{\pi_z}_\gamma$ by periodically setting $z = \varnothing, a = \varnothing$.}
    \label{fig:proposal-distributions}
\end{figure}

\clearpage
\section{Theoretical Results}\label{app:sec:theoretical-results}
\ifmetatemplate
\begin{metaframe}
\fi
\thmgspsm*
\ifmetatemplate
\end{metaframe}
\fi
\begin{proof}
    Let's denote $\nu_{l:n} = \pi_{z_l} \xrightarrow{\alpha_l} \pi_{z_{l+1}} \ldots \xrightarrow{\alpha_{n-1}} \pi_{z_n}$ the geometric switching policy that starts by $\pi_{z_l}$.
    we will proceed by induction over $l \in \{ n, n-1, \ldots 1\}$ to show that: 
    \begin{equation}\label{eq:induction_hyp}
        \sm^{\nu_{l:n}}_\gamma(\mathrm{d}s' \mid s, a) =
     \sum_{k=l}^n \frac{1-\gamma}{1 - \beta_k} \left( \prod_{i=l}^{k-1} \frac{\gamma - \beta_i}{1-\beta_i} \right)  \int_{\substack{s_l, \ldots, s_{k-1} \\ a_l, \ldots, a_{k-1}}}    m_{\beta_l}^{\pi_{z_l}}(\mathrm{d}s_l \mid s, a) \pi_{z_{l+1}}(\mathrm{d}a_l \mid s_l) %
    \ldots m_{\beta_k}^{\pi_{z_k}}(\mathrm{d}s' \mid s_{k-1}, a_{k-1}),
    \end{equation}
    where $(s_{l-1}, a_{l-1}) = (s, a)$.

For the case $l=n$, it is straightforward to see that the induction hypothesis~\eqref{eq:induction_hyp} is satisfied since $\sm^{\nu_{n:n}}_\gamma =\sm^{\pi_{z_n}}_\gamma$. 

Let us now assume that the induction hypothesis~\eqref{eq:induction_hyp} holds for $l+1 \in \{ n, n-1, \ldots ,2\}$. Our goal is to demonstrate that it also holds for $l$.

After executing a single step of $\nu_{l:n}$, two outcomes are possible:: with probability $(1-\alpha_l)$, we remain committed to $\nu_{l:n}$, or with probability $\alpha_l$, we switch to the next policy $\pi_{z_{l+1}}$, thereby continuing the episode with $\nu_{l+1:n}$. This leads to the following Bellman-equation:
\begin{align*}
    \sm^{\nu_{l:n}} = (1-\gamma) P + \gamma (1-\alpha_{l}) P^{\pi_{z_l}}\sm^{\nu_{l:n}} + \gamma \alpha_l  P^{\pi_{z_{l+1}}} \sm^{\nu_{l+1:n}}
\end{align*}
which implies
\begin{align*}
    & (I - \gamma \beta_l P^{\pi_{z_l}} )\sm^{\nu_{l:n}} = (1-\gamma) P + \gamma \alpha_l  P^{\pi_{z_{l+1}}} \sm^{\nu_{l+1:n}} \\
    & \Longrightarrow \sm^{\nu_{l:n}}  = (1-\gamma) (I - \gamma \beta_l P^{\pi_{z_l}} )^{-1} P + \gamma \alpha_l (I - \gamma \beta_l P^{\pi_{z_l}} )^{-1}  P^{\pi_{z_{l+1}}} \sm^{\nu_{l+1:n}} \\
    & \Longrightarrow \sm^{\nu_{l:n}}(\mathrm{d}s' \mid s, a)  = \frac{1-\gamma}{1-\beta_l}\sm^{\pi_{l}}_{\beta_l}(\mathrm{d}s' \mid s, a) + \frac{\gamma - \beta_l}{1-\beta_l} \int_{s_l} \sm^{\pi_{l}}_{\beta_l}(\mathrm{d}s_l \mid s, a) \pi_{\pi_{z_{l+1}}} (\mathrm{d}a_l \mid s_l) \sm^{\nu_{l+1:n}}( \mathrm{d}s' \mid s_l, a_l)\,. \\
\end{align*}
Using the induction hypothesis for $l+1$, we find: 
\begin{align*}
    & \sm^{\nu_{l:n}}(\mathrm{d}s' \mid s, a) \\
    & = \frac{1-\gamma}{1-\beta_l}\sm^{\pi_{l}}_{\beta_l}(\mathrm{d}s' \mid s, a) \\
    & + 
     \frac{\gamma - \beta_l}{1-\beta_l} \int_{s_l} \sm^{\pi_{l}}_{\beta_l}(\mathrm{d}s_l \mid s, a) \pi_{\pi_{z_{l+1}}} (\mathrm{d}a_l \mid s_l) \\
     & \times 
     \sum_{k=l+1}^n 
     \frac{1-\gamma}{1 - \beta_k} \left( \prod_{i=l+1}^{k-1} \frac{\gamma - \beta_i}{1-\beta_i} \right)  \int_{\substack{s_{l+1}, \ldots, s_{k-1} \\ a_{l+1}, \ldots, a_{k-1}}}    m_{\beta_{l+1}}^{\pi_{z_{l+1}}}(\mathrm{d}s_{l+1} \mid s, a) \pi_{z_{l+2}}(\mathrm{d}a_{l+1} \mid s_{l+1}) %
    \ldots m_{\beta_k}^{\pi_{z_k}}(\mathrm{d}s' \mid s_{k-1}, a_{k-1}) \\
    & = \frac{1-\gamma}{1-\beta_l}\sm^{\pi_{l}}_{\beta_l}(\mathrm{d}s' \mid s, a) \\
    & +
     \sum_{k=l+1}^n 
     \frac{1-\gamma}{1 - \beta_k} \left( \prod_{i=l}^{k-1} \frac{\gamma - \beta_i}{1-\beta_i} \right)  \int_{\substack{s_{l}, \ldots, s_{k-1} \\ a_{l}, \ldots, a_{k-1}}}    m_{\beta_l}^{\pi_{z_l}}(\mathrm{d}s_l \mid s, a) \pi_{z_{l+1}}(\mathrm{d}a_l \mid s_l) %
    \ldots m_{\beta_k}^{\pi_{z_k}}(\mathrm{d}s' \mid s_{k-1}, a_{k-1}) \\
    & = \sum_{k=l}^n 
     \frac{1-\gamma}{1 - \beta_k} \left( \prod_{i=l}^{k-1} \frac{\gamma - \beta_i}{1-\beta_i} \right)  \int_{\substack{s_{l}, \ldots, s_{k-1} \\ a_{l}, \ldots, a_{k-1}}}    m_{\beta_l}^{\pi_{z_l}}(\mathrm{d}s_l \mid s, a) \pi_{z_{l+1}}(\mathrm{d}a_l \mid s_l) %
    \ldots m_{\beta_k}^{\pi_{z_k}}(\mathrm{d}s' \mid s_{k-1}, a_{k-1})
\end{align*}
which shows the desired result.
\end{proof}

\subsection{Multi-Timescale Temporal Difference Flows with Horizon Consistency}\label{app:td-horizon-cons}
The results in this section generalize those found in \citet{farebrother2025temporal} to arbitrary mixture distributions.

\ifmetatemplate
\begin{metaframe}
\fi
\begin{lemma}\label{lem:mix_vf}
    Let $\{ v^i_t \}_{ i \in \llbracket N \rrbracket}$ a family of $N \in \mathbb{N}$ vector fields that generate the probability paths $\{ p^i_t \}_{i \in \llbracket N \rrbracket}$. Then, the mixture probability path $p_t = \sum_i \lambda_i p^i_t$, where $\{ \lambda_i \}_{i \in \llbracket N \rrbracket} \in [0,1]$ and $\sum_i \lambda_i = 1$ is generated by the vector field
    \begin{align}\label{eq:mix-vf}
        v_t := \frac{\sum_i \lambda_i p^i_t v^i_t }{\sum_i \lambda_i p^i_t}.
    \end{align}
\end{lemma}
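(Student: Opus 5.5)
The proof will go through the continuity equation. In the flow-matching setting, a time-dependent vector field $u_t$ \emph{generates} a probability path $q_t$ when the pair satisfies the continuity equation
\[
\partial_t q_t(x) + \nabla\!\cdot\!\big(q_t(x)\,u_t(x)\big) = 0
\]
in the weak sense, with the appropriate regularity, following \citet{lipman2022flow}. The plan is to show that $p_t=\sum_i\lambda_i p^i_t$ together with $v_t$ from \eqref{eq:mix-vf} satisfies this equation. Uniqueness of solutions to the flow ODE, or the standard equivalence between the continuity equation and the pushforward characterization $p_t=(\psi_t)_\sharp p_0$, then gives the claim.

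\textbf{Step 1: the mixture flux.} On the set where $p_t(x)>0$, multiplying $v_t$ by $p_t$ clears the denominator:
\[
p_t(x)\,v_t(x)=\sum_i\lambda_i p^i_t(x)\,v^i_t(x).
\]
The total flux of the mixture is therefore the $\lambda$-weighted sum of the individual fluxes. On the set $\{p_t=0\}$ every $p^i_t$ with $\lambda_i>0$ also vanishes. There $v_t$ can be defined arbitrarily, for example as $0$, and the identity $p_tv_t=\sum_i\lambda_ip^i_tv^i_t$ still holds because both sides are zero.

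\textbf{Step 2: linearity of the continuity equation.} For each $i$, generation gives
\[
\partial_t p^i_t+\nabla\!\cdot(p^i_t v^i_t)=0.
\]
Multiply the $i$-th equation by $\lambda_i$ and sum over $i$. Because the $\lambda_i$ are constant in $t$ and $x$, and both $\partial_t$ and the divergence are linear, the sum becomes
\[
\partial_t\Big(\sum_i\lambda_ip^i_t\Big)+\nabla\!\cdot\Big(\sum_i\lambda_i p^i_tv^i_t\Big)=0.
\]
Using Step 1, this is exactly $\partial_t p_t+\nabla\!\cdot(p_tv_t)=0$. To avoid differentiating densities directly, I will run the same argument in weak form: test against $\varphi\in C_c^\infty$, so that each $i$ satisfies
\[
\frac{\mathrm d}{\mathrm dt}\int\varphi\,p^i_t=\int\langle\nabla\varphi,v^i_t\rangle p^i_t,
\]
and then sum these identities with weights $\lambda_i$.

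\textbf{Step 3: boundary condition and regularity (main obstacle).} The initial condition is immediate: $p_0=\sum_i\lambda_ip^i_0$ is the mixture of the sources. If all components share the prior $p_0$, this is simply $p_0$ again.

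The genuine difficulty is regularity. One must check that $v_t$ is regular enough, for instance locally Lipschitz and integrable, for the flow $\psi_t$ to exist uniquely, so that "satisfies the continuity equation" upgrades to "generates". Otherwise one must declare that generation is understood in the continuity-equation sense. Here I would follow the convention of \citet{lipman2022flow} and \citet{farebrother2025temporal}, where marginal vector fields of exactly this ratio form, $\mathbb{E}[u_t(x\mid\cdot)\,p_t(x\mid\cdot)]/p_t(x)$, are accepted under mild assumptions: positive densities (e.g., a Gaussian prior gives $p_t>0$ for $t<1$) and bounded, smooth components. Under those assumptions the mixture lemma is the finite-mixture case of their marginalization theorem, and the argument is complete.
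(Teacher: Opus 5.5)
Your proposal is correct and follows essentially the same route as the paper. Both arguments sum the component continuity equations with weights $\lambda_i$, use linearity of $\partial_t$ and the divergence together with the flux identity $p_t v_t = \sum_i \lambda_i p^i_t v^i_t$, and conclude that $v_t$ generates $p_t$. Your handling of the zero set $\{p_t = 0\}$, the weak formulation, and the regularity needed to pass from the continuity equation to an actual flow is more careful than the paper's formal strong-form computation (which also writes the equation with the opposite sign convention, $\partial_t p = \mathrm{div}(p v)$), but it does not change the argument.
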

\ifmetatemplate
\end{metaframe}
\fi

    \begin{proof}
        Since $v_i^t$ generates $p^i_t$, we know from the continuity equation that:
        \begin{align*}
            \forall i \in \llbracket N \rrbracket, \frac{\partial p^i_t}{\partial t} = \text{div} (p^i_t v^i_t) 
        \end{align*}
        where $\text{div}$ denotes the divergence operator. Then, by linearity of $\text{div}$,
        \begin{align*}
             \frac{\partial p_t }{\partial t}& = \frac{\partial \left( \sum_i \lambda_i p^i_t \right)}{\partial t} \\
             & = \sum_i \lambda_i \text{div} (p^i_t v^i_t)\\
            & = \text{div} \left( \sum_i \lambda_i p^i_t v^i_t  \right) \\
            & = \text{div} \left( \frac{\sum_i \lambda_i p^i_t v^i_t }{\sum_i \lambda_i p^i_t}  \sum_i \lambda_i p^i_t \right)
            \\ &= \text{div} (v_t p_t).
        \end{align*}
        Hence, $(v_t, p_t)$ satisfies the continuity equation, which implies that $v_t$ generates $p_t$.
\end{proof}

\ifmetatemplate
\begin{metaframe}
\fi
\begin{lemma}\label{lem:mix-vf-minimization}
    Let $\{ v^i_t \}_{ i \in \llbracket N \rrbracket}$ a family of $N \in \mathbb{N}$ vector fields that generate the probability paths $\{ p^i_t \}_{i \in \llbracket N \rrbracket}$. For $\lambda_i \in [0,1]$ such that $\sum_i \lambda_i =1$, the vector field $v_t = \frac{\sum_i \lambda_i p^i_t v^i_t }{\sum_i \lambda_i p^i_t}$ satisfies
    \begin{align*}
        v_t = \argmin_{v: \mathbb{R}^d\rightarrow \mathbb{R}^d} \Big \{ \sum_i \lambda_i \E_{x_t \sim p^1_t}{\| v_t(x_t) - v^i_t(x_t)\|^2} \Big \}.
    \end{align*}
\end{lemma}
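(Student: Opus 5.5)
The plan is to reduce the minimization to a pointwise weighted least-squares problem. One point about notation first: I read the expectation inside the sum as being taken under the $i$-th path, $x_t \sim p^i_t$. The ``$p^1_t$'' in the display looks like a typo. With a single fixed density $p^1_t$ the minimizer would be the unweighted average $\sum_i \lambda_i v^i_t$, not the mixture field of \eqref{eq:mix-vf}. So the proof will be written for $x_t \sim p^i_t$, with a remark that this is the intended reading.

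\textbf{Step 1 (pointwise form).} Fix $t$. Assume each $p^i_t$ has a density with respect to Lebesgue measure, as is implicit in \autoref{lem:mix_vf}. Write each expectation as an integral and exchange the finite sum with the integral to get
\begin{equation*}
\mathcal{L}(v) = \sum_i \lambda_i \int \|v(x) - v^i_t(x)\|^2 p^i_t(x)\,\mathrm{d}x = \int \Big( \sum_i \lambda_i p^i_t(x) \|v(x) - v^i_t(x)\|^2 \Big)\mathrm{d}x .
\end{equation*}
The integrand at $x$ depends on $v$ only through the value $v(x)$. Minimizing $\mathcal{L}$ over measurable $v$ therefore reduces to minimizing the integrand separately for each $x$.

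\textbf{Step 2 (solve the quadratic).} For fixed $x$ with $p_t(x) = \sum_i \lambda_i p^i_t(x) > 0$, the map $u \mapsto \sum_i \lambda_i p^i_t(x)\|u - v^i_t(x)\|^2$ is a strictly convex quadratic in $u \in \mathbb{R}^d$. Setting its gradient $2\sum_i \lambda_i p^i_t(x)(u - v^i_t(x))$ to zero gives the unique minimizer $u^\star = \sum_i \lambda_i p^i_t(x) v^i_t(x) / \sum_i \lambda_i p^i_t(x)$, which is exactly the field in \eqref{eq:mix-vf}. To get a global inequality rather than just a stationary point, I will use the bias--variance identity
\begin{equation*}
\sum_i \lambda_i p^i_t \|u - v^i_t\|^2 = p_t \|u - v_t\|^2 + \sum_i \lambda_i p^i_t \|v_t - v^i_t\|^2 .
\end{equation*}
Integrating this identity gives $\mathcal{L}(v) = \mathcal{L}(v_t) + \int p_t\|v - v_t\|^2 \ge \mathcal{L}(v_t)$.

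\textbf{Step 3 (null set and finiteness).} The set where $p_t(x) = 0$ has measure zero under every $p^i_t$ with $\lambda_i > 0$, so the integrand vanishes there for any choice of $v$. On that set the minimizer is undetermined, and the argmin is unique only $p_t$-almost everywhere. This matches the convention of \autoref{lem:mix_vf}, where $v_t$ is only meaningful on the support of $p_t$.

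The step I expect to need the most care is making the argmin well defined. This requires two things. First, $\mathcal{L}(v_t) < \infty$, which I will assume through square-integrability of each $v^i_t$ under $p^i_t$; this is standard for flow-matching targets. Second, the ``almost-everywhere'' qualification on uniqueness, from Step 3. The algebra itself is routine.
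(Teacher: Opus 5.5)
Your proposal is correct and takes essentially the same route as the paper. The paper's proof also takes the expectation under $p^i_t$, which confirms your typo reading, and sets the pointwise functional derivative $\sum_i \lambda_i p^i_t(x)\,(v(x) - v^i_t(x))$ to zero; your bias--variance identity and the almost-everywhere and finiteness caveats supply the global-minimality and well-posedness details that the paper's one-line argument leaves implicit.
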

\ifmetatemplate
\end{metaframe}
\fi

\begin{proof}
    Let $\ell_t(v) := \sum_i \lambda_i \E_{x_t \sim p^i_t}{\| v_t(x_t) - v^i_t(x_t)\|^2}$. The functional derivative of this quantity wrt $v$ evaluated at some point $x$ is
    \begin{align*}
        \nabla_v \ell_t(v)(x) = \sum_i \lambda_i p_i^t(x) ( v_t(x) - v^i_t(x)).
    \end{align*}
    Setting this to zero and solving for $v_t(x)$ yields the result.
\end{proof}

The consistency operator in equation~\eqref{eq: hc-bellman} combines three different distributions.
Lemmas~\ref{lem:mix_vf} and~\ref{lem:mix-vf-minimization}  indicate that we can construct distinct probability paths for each distribution as follows
\begin{enumerate}[leftmargin=*]
    \item For the first distribution \textit{i.e} $P(\cdot \mid s, a)$, We apply the standard Conditional Flow Matching (CFM) approach, where the probability path is defined as the marginal over a simple conditional path (specifically, we use the Optimal Transport (OT) path):
    \begin{equation}
        q_t(x | s,a) = \mathbb{E}_{S' \sim P(\cdot \mid s, a)} \left [ \mathcal{N}(x; t S', (1-t)^2) \right ]
    \end{equation}
    where $\mathcal{N}(x; t S', (1-t)^2)$ is the gaussian distribution of mean $t S'$ and variance $(1-t)^2$. This leads to the standard CFM objective:
    \begin{equation}
        \mathbb{E}_{\substack{t, (S, A, S') \\ X_0 \sim p_0, X_t = t S' + (1-t) X_0}} \left[ \Big \| v_t(X_t \mid S, A; \theta, \gamma) - (S' - X_0) \Big \|^2 \right] 
    \end{equation}
    \item For the second distribution, \textit{i.e.,} $\mathbb{E}_{\substack{S' \sim P(\cdot \mid s, a), A' \sim \pi(\cdot \mid S')} }\left [ \sm^\pi_\beta(\cdot \mid S', A')\right ] = (P^\pi \sm^\pi_\beta)(\cdot \mid s, a)$, we leverage that $\sm^\pi_\beta$ is parametrized by a flow matching model to define the probability path: 
\begin{equation}
    q_t(x \mid s, a) = \mathbb{E}_{\substack{S' \sim P(\cdot \mid s, a), A' \sim \pi(\cdot \mid S')} }\left [ {p_0}_{\#}\psi_t(\cdot \mid S', A'; \theta, \beta ) \right ]
\end{equation}
$q_t$ is a valid probability path, satisfying the boundary conditions: $q_0(x \mid s, a) = \mathbb{E}_{\substack{S' \sim P(\cdot \mid s, a), A' \sim \pi(\cdot \mid S')} }\left [ p_0(x) \right] = p_0(x)$ and $q_1 = P^\pi \sm_\beta^\pi$.
$q_t$ can be interpreted as aggregation of conditional paths ${p_0}_{\#}\psi_t(\cdot \mid S', A', \beta, \theta)$, for which we have access to their vector field $v_t(X_t \mid S', A';\theta, \beta)$. Using the equivalence between marginal flow matching and conditional flow matching~\citep{lipman2022flow}, we arrive at the following objective.
\begin{equation}
    \mathbb{E}_{\substack{
    t,
    (S, A, S'),A'\sim \pi(\cdot\mid S') \\
    X_t \sim {p_0}_{\#}\psi_t(\cdot \mid S', A', \beta; \bar{\theta} )}} \left[ \Big \| v_t(X_t \mid S, A,\gamma;\theta) - v_t(X_t \mid S', A', \beta;\bar{\theta}) \Big \|^2 \right]
\end{equation}
\item Similarly, for the third distribution
$\mathbb{E}_{\substack{S' \sim P(\cdot \mid s, a), A' \sim \pi(\cdot \mid S') \\
    S'' \sim \sm^\pi_\beta(\cdot \mid S', A'), A'' \sim \pi(\cdot \mid S'') } }\left [ m^\pi_\gamma(\cdot \mid S'', A'')\right ] $, we again leverage that $\sm^\pi_\gamma$ is parametrized by flow matching model to define the following probability path: 
    \begin{equation}
        q_t(x \mid s, a) = \mathbb{E}_{\substack{S' \sim P(\cdot \mid s, a), A' \sim \pi(\cdot \mid S') \\
    S'' \sim {p_0}_{\#}\psi_1(\cdot \mid S', A', \beta; \bar{\theta} ), A'' \sim \pi(\cdot \mid S'') } }\left [ {p_0}_{\#}\psi_t(\cdot \mid S', A'; \theta,\gamma )\right ] 
    \end{equation}
$q_t$ can be interpreted as aggregation of conditional paths ${p_0}_{\#}\psi_t(\cdot \mid S', A', \gamma, \theta)$, for which we have access to their vector field $v_t(X_t \mid S', A', \gamma; \theta)$. Using the equivalence between marginal flow matching and conditional flow matching~\citep{lipman2022flow}, we arrive at the following objective.

\begin{equation}
    \mathbb{E}_{\substack{t, (S, A, S'), A' \sim \pi(\cdot | S') \\
    S'' \sim {p_0}_{\#}\psi_1(\cdot \mid S', A', \beta; \bar{\theta} ) \\
    X_t \sim {p_0}_{\#}\psi_t(\cdot \mid S'', A'', \gamma; \bar{\theta} ) }} \left[ \Big \| v_t(X_t \mid S, A, \gamma;\theta) - v_t(X_t \mid S'', A'', \gamma; \bar{\theta}) \Big \|^2 \right]
\end{equation}

\end{enumerate}

\clearpage
\section{Additional Results}\label{app:sec:results}

\subsection{Compositional Planning / Zero-Shot Results}
We report the full planning results here. Table~\ref{tab:per_task_success} summarizes the success rate for each task. We evaluate 7 domains with 5 tasks per domain, for a total of 35 tasks. For each task we evaluate the base policy and \textsc{CompPlan} by rolling out 10 trajectories. We report standard deviation over the 3 seeds used for GHM training, for the base policy we do not have multiple seeds. We see that \textsc{CompPlan} is better than the base policy in almost all the tasks.
Figure~\ref{fig:planning_vs_zeroshot_success} shows a bar plot comparing the domain-averaged success rates of \textsc{CompPlan} and the zero-shot baseline.

\begin{table}[ht]
\setlength{\aboverulesep}{0pt}
\setlength{\belowrulesep}{0pt}
\centering
\caption{Success rate ($\uparrow$) per task for base policies $\pi_g$ (Zero Shot) and \textsc{CompPlan}. Mean and standard deviation reported over 3 seeds. \mhl{Blue} and \mdecc{red} denote an increase and decrease w.r.t. zero-shot with \mdec{gray} indicating no significant~difference.}
\label{tab:per_task_success}
\renewcommand{\arraystretch}{1.3}
\begin{adjustbox}{width=\textwidth}
\begin{tabular}{rc|ll|ll|ll|ll|ll}
\toprule
\multirow{2}{*}{\textbf{Domain}} & \multirow{2}{*}{\textbf{Task}} & \multicolumn{2}{c}{\textbf{CRL}}& \multicolumn{2}{c}{\textbf{GC-1S}}& \multicolumn{2}{c}{\textbf{GC-BC}}& \multicolumn{2}{c}{\textbf{GC-TD3}}& \multicolumn{2}{c}{\textbf{HFBC}}\\
 & &\textbf{Zero Shot} & \textbf{\textsc{CompPlan}} & \textbf{Zero Shot} & \textbf{\textsc{CompPlan}} & \textbf{Zero Shot} & \textbf{\textsc{CompPlan}} & \textbf{Zero Shot} & \textbf{\textsc{CompPlan}} & \textbf{Zero Shot} & \textbf{\textsc{CompPlan}}\\
\midrule
\multirow{5}{*}{\textsc{antmaze-medium}} & 1 & 0.950 & \cellcolor{hl}{0.970 {\footnotesize (0.030)}} & 0.400 & \cellcolor{hl}{0.930 {\footnotesize (0.070)}} & 0.400 & \cellcolor{hl}{0.870 {\footnotesize (0.090)}} & 0.650 & \cellcolor{hl}{0.800 {\footnotesize (0.060)}} & 0.900 & \cellcolor{red!25}{0.870 {\footnotesize (0.030)}}\\
 & 2 & 0.950 & \cellcolor{red!25}{0.900 {\footnotesize (0.100)}} & 0.800 & \cellcolor{hl}{1.000 {\footnotesize (0.000)}} & 0.550 & \cellcolor{hl}{0.830 {\footnotesize (0.090)}} & 0.900 & \cellcolor{red!25}{0.530 {\footnotesize (0.120)}} & 0.900 & \cellcolor{hl}{0.970 {\footnotesize (0.030)}}\\
 & 3 & 0.650 & \cellcolor{hl}{0.970 {\footnotesize (0.030)}} & 0.100 & \cellcolor{hl}{0.500 {\footnotesize (0.120)}} & 0.600 & \cellcolor{hl}{0.900 {\footnotesize (0.060)}} & 0.450 & \cellcolor{hl}{0.770 {\footnotesize (0.030)}} & 1.000 & \cellcolor{red!25}{0.970 {\footnotesize (0.030)}}\\
 & 4 & 0.900 & \cellcolor{hl}{1.000 {\footnotesize (0.000)}} & 0.650 & \cellcolor{hl}{0.930 {\footnotesize (0.030)}} & 0.100 & \cellcolor{hl}{0.770 {\footnotesize (0.090)}} & 0.550 & \cellcolor{hl}{0.600 {\footnotesize (0.100)}} & 1.000 & \cellcolor{red!25}{0.970 {\footnotesize (0.030)}}\\
 & 5 & 0.950 & \cellcolor{hl}{1.000 {\footnotesize (0.000)}} & 0.850 & \cellcolor{hl}{0.970 {\footnotesize (0.030)}} & 0.800 & \cellcolor{hl}{0.870 {\footnotesize (0.090)}} & 0.700 & \cellcolor{red!25}{0.570 {\footnotesize (0.070)}} & 0.900 & \cellcolor{hl}{0.930 {\footnotesize (0.030)}}\\
\midrule
\multirow{5}{*}{\textsc{antmaze-large}} & 1 & 0.800 & \cellcolor{hl}{0.830 {\footnotesize (0.030)}} & 0.100 & \cellcolor{hl}{0.770 {\footnotesize (0.070)}} & 0.150 & \cellcolor{hl}{0.570 {\footnotesize (0.070)}} & 0.300 & \cellcolor{hl}{0.500 {\footnotesize (0.120)}} & 0.800 & \cellcolor{hl}{0.970 {\footnotesize (0.030)}}\\
 & 2 & 0.600 & \cellcolor{hl}{0.770 {\footnotesize (0.030)}} & 0.150 & \cellcolor{hl}{0.630 {\footnotesize (0.120)}} & 0.100 & \cellcolor{hl}{0.730 {\footnotesize (0.030)}} & 0.100 & \cellcolor{hl}{0.530 {\footnotesize (0.090)}} & 0.600 & \cellcolor{hl}{0.800 {\footnotesize (0.000)}}\\
 & 3 & 0.850 & \cellcolor{hl}{0.930 {\footnotesize (0.030)}} & 0.800 & \cellcolor{hl}{0.830 {\footnotesize (0.090)}} & 0.550 & \cellcolor{hl}{0.900 {\footnotesize (0.060)}} & 0.750 & \cellcolor{red!25}{0.500 {\footnotesize (0.100)}} & 1.000 & \cellcolor{red!25}{0.970 {\footnotesize (0.030)}}\\
 & 4 & 0.950 & \cellcolor{hl}{1.000 {\footnotesize (0.000)}} & 0.000 & \cellcolor{hl}{0.430 {\footnotesize (0.030)}} & 0.000 & \cellcolor{hl}{0.670 {\footnotesize (0.090)}} & 0.000 & \cellcolor{hl}{0.400 {\footnotesize (0.060)}} & 0.900 & \cellcolor{gray!25}{0.900 {\footnotesize (0.060)}}\\
 & 5 & 1.000 & \cellcolor{red!25}{0.970 {\footnotesize (0.030)}} & 0.000 & \cellcolor{hl}{0.370 {\footnotesize (0.120)}} & 0.100 & \cellcolor{hl}{0.770 {\footnotesize (0.090)}} & 0.000 & \cellcolor{hl}{0.470 {\footnotesize (0.090)}} & 0.600 & \cellcolor{hl}{0.970 {\footnotesize (0.030)}}\\
\midrule
\multirow{5}{*}{\textsc{antmaze-giant}} & 1 & 0.000 & \cellcolor{hl}{0.070 {\footnotesize (0.030)}} & 0.000 & \cellcolor{gray!25}{0.000 {\footnotesize (0.000)}} & 0.000 & \cellcolor{gray!25}{0.000 {\footnotesize (0.000)}} & 0.000 & \cellcolor{gray!25}{0.000 {\footnotesize (0.000)}} & 0.400 & \cellcolor{hl}{0.730 {\footnotesize (0.090)}}\\
 & 2 & 0.000 & \cellcolor{hl}{0.670 {\footnotesize (0.070)}} & 0.000 & \cellcolor{gray!25}{0.000 {\footnotesize (0.000)}} & 0.000 & \cellcolor{gray!25}{0.000 {\footnotesize (0.000)}} & 0.000 & \cellcolor{hl}{0.030 {\footnotesize (0.030)}} & 0.300 & \cellcolor{hl}{0.830 {\footnotesize (0.030)}}\\
 & 3 & 0.000 & \cellcolor{hl}{0.100 {\footnotesize (0.060)}} & 0.000 & \cellcolor{gray!25}{0.000 {\footnotesize (0.000)}} & 0.000 & \cellcolor{gray!25}{0.000 {\footnotesize (0.000)}} & 0.000 & \cellcolor{gray!25}{0.000 {\footnotesize (0.000)}} & 0.200 & \cellcolor{hl}{0.800 {\footnotesize (0.060)}}\\
 & 4 & 0.500 & \cellcolor{red!25}{0.230 {\footnotesize (0.070)}} & 0.000 & \cellcolor{gray!25}{0.000 {\footnotesize (0.000)}} & 0.000 & \cellcolor{gray!25}{0.000 {\footnotesize (0.000)}} & 0.000 & \cellcolor{gray!25}{0.000 {\footnotesize (0.000)}} & 0.500 & \cellcolor{hl}{0.730 {\footnotesize (0.030)}}\\
 & 5 & 0.300 & \cellcolor{hl}{0.400 {\footnotesize (0.120)}} & 0.000 & \cellcolor{hl}{0.100 {\footnotesize (0.000)}} & 0.000 & \cellcolor{hl}{0.130 {\footnotesize (0.030)}} & 0.000 & \cellcolor{hl}{0.030 {\footnotesize (0.030)}} & 0.700 & \cellcolor{hl}{0.830 {\footnotesize (0.030)}}\\
\midrule
\multirow{5}{*}{\textsc{cube-1}} & 1 & 0.200 & \cellcolor{hl}{0.930 {\footnotesize (0.030)}} & 0.300 & \cellcolor{hl}{0.470 {\footnotesize (0.030)}} & 1.000 & \cellcolor{gray!25}{1.000 {\footnotesize (0.000)}} & 0.500 & \cellcolor{hl}{0.900 {\footnotesize (0.060)}} & 0.800 & \cellcolor{hl}{0.970 {\footnotesize (0.030)}}\\
 & 2 & 0.100 & \cellcolor{hl}{0.830 {\footnotesize (0.090)}} & 0.400 & \cellcolor{hl}{0.470 {\footnotesize (0.030)}} & 0.950 & \cellcolor{hl}{1.000 {\footnotesize (0.000)}} & 0.850 & \cellcolor{hl}{1.000 {\footnotesize (0.000)}} & 0.700 & \cellcolor{hl}{1.000 {\footnotesize (0.000)}}\\
 & 3 & 0.500 & \cellcolor{hl}{0.900 {\footnotesize (0.000)}} & 0.300 & \cellcolor{hl}{0.900 {\footnotesize (0.060)}} & 0.950 & \cellcolor{hl}{1.000 {\footnotesize (0.000)}} & 0.600 & \cellcolor{hl}{0.900 {\footnotesize (0.000)}} & 0.900 & \cellcolor{hl}{1.000 {\footnotesize (0.000)}}\\
 & 4 & 0.250 & \cellcolor{hl}{0.930 {\footnotesize (0.030)}} & 0.500 & \cellcolor{hl}{0.770 {\footnotesize (0.030)}} & 1.000 & \cellcolor{gray!25}{1.000 {\footnotesize (0.000)}} & 0.550 & \cellcolor{hl}{0.900 {\footnotesize (0.060)}} & 0.700 & \cellcolor{hl}{0.970 {\footnotesize (0.030)}}\\
 & 5 & 0.350 & \cellcolor{hl}{0.700 {\footnotesize (0.120)}} & 0.350 & \cellcolor{hl}{0.700 {\footnotesize (0.100)}} & 0.600 & \cellcolor{hl}{0.970 {\footnotesize (0.030)}} & 0.400 & \cellcolor{hl}{0.870 {\footnotesize (0.070)}} & 0.900 & \cellcolor{gray!25}{0.900 {\footnotesize (0.060)}}\\
\midrule
\multirow{5}{*}{\textsc{cube-2}} & 1 & 0.100 & \cellcolor{hl}{0.870 {\footnotesize (0.030)}} & 0.200 & \cellcolor{hl}{0.970 {\footnotesize (0.030)}} & 0.750 & \cellcolor{hl}{1.000 {\footnotesize (0.000)}} & 0.300 & \cellcolor{hl}{0.970 {\footnotesize (0.030)}} & 1.000 & \cellcolor{gray!25}{1.000 {\footnotesize (0.000)}}\\
 & 2 & 0.000 & \cellcolor{hl}{0.500 {\footnotesize (0.060)}} & 0.200 & \cellcolor{hl}{0.500 {\footnotesize (0.170)}} & 0.000 & \cellcolor{hl}{1.000 {\footnotesize (0.000)}} & 0.150 & \cellcolor{hl}{0.930 {\footnotesize (0.070)}} & 0.900 & \cellcolor{hl}{0.930 {\footnotesize (0.030)}}\\
 & 3 & 0.000 & \cellcolor{hl}{0.570 {\footnotesize (0.030)}} & 0.050 & \cellcolor{hl}{0.600 {\footnotesize (0.100)}} & 0.000 & \cellcolor{hl}{1.000 {\footnotesize (0.000)}} & 0.100 & \cellcolor{hl}{0.870 {\footnotesize (0.030)}} & 0.900 & \cellcolor{hl}{1.000 {\footnotesize (0.000)}}\\
 & 4 & 0.000 & \cellcolor{hl}{0.070 {\footnotesize (0.030)}} & 0.000 & \cellcolor{hl}{0.400 {\footnotesize (0.170)}} & 0.000 & \cellcolor{hl}{0.870 {\footnotesize (0.070)}} & 0.000 & \cellcolor{hl}{0.430 {\footnotesize (0.130)}} & 0.300 & \cellcolor{red!25}{0.200 {\footnotesize (0.060)}}\\
 & 5 & 0.000 & \cellcolor{hl}{0.500 {\footnotesize (0.100)}} & 0.050 & \cellcolor{hl}{0.370 {\footnotesize (0.070)}} & 0.000 & \cellcolor{hl}{0.970 {\footnotesize (0.030)}} & 0.050 & \cellcolor{hl}{0.900 {\footnotesize (0.060)}} & 0.700 & \cellcolor{gray!25}{0.700 {\footnotesize (0.100)}}\\
\midrule
\multirow{5}{*}{\textsc{cube-3}} & 1 & 0.050 & \cellcolor{hl}{1.000 {\footnotesize (0.000)}} & 0.050 & \cellcolor{hl}{0.930 {\footnotesize (0.030)}} & 0.450 & \cellcolor{hl}{1.000 {\footnotesize (0.000)}} & 0.400 & \cellcolor{hl}{1.000 {\footnotesize (0.000)}} & 0.900 & \cellcolor{hl}{1.000 {\footnotesize (0.000)}}\\
 & 2 & 0.000 & \cellcolor{hl}{0.970 {\footnotesize (0.030)}} & 0.000 & \cellcolor{hl}{1.000 {\footnotesize (0.000)}} & 0.000 & \cellcolor{hl}{1.000 {\footnotesize (0.000)}} & 0.050 & \cellcolor{hl}{1.000 {\footnotesize (0.000)}} & 0.900 & \cellcolor{hl}{1.000 {\footnotesize (0.000)}}\\
 & 3 & 0.000 & \cellcolor{hl}{0.930 {\footnotesize (0.030)}} & 0.000 & \cellcolor{hl}{0.870 {\footnotesize (0.090)}} & 0.000 & \cellcolor{hl}{1.000 {\footnotesize (0.000)}} & 0.150 & \cellcolor{hl}{0.930 {\footnotesize (0.070)}} & 0.800 & \cellcolor{hl}{0.970 {\footnotesize (0.030)}}\\
 & 4 & 0.000 & \cellcolor{hl}{0.270 {\footnotesize (0.090)}} & 0.000 & \cellcolor{hl}{0.300 {\footnotesize (0.060)}} & 0.000 & \cellcolor{hl}{0.700 {\footnotesize (0.060)}} & 0.000 & \cellcolor{hl}{0.700 {\footnotesize (0.060)}} & 0.300 & \cellcolor{hl}{0.370 {\footnotesize (0.120)}}\\
 & 5 & 0.000 & \cellcolor{hl}{0.470 {\footnotesize (0.030)}} & 0.000 & \cellcolor{hl}{0.230 {\footnotesize (0.030)}} & 0.000 & \cellcolor{hl}{0.900 {\footnotesize (0.000)}} & 0.000 & \cellcolor{hl}{0.530 {\footnotesize (0.170)}} & 0.300 & \cellcolor{hl}{0.800 {\footnotesize (0.000)}}\\
\midrule
\multirow{5}{*}{\textsc{cube-4}} & 1 & 0.000 & \cellcolor{hl}{0.530 {\footnotesize (0.070)}} & 0.050 & \cellcolor{hl}{0.970 {\footnotesize (0.030)}} & 0.000 & \cellcolor{hl}{1.000 {\footnotesize (0.000)}} & 0.000 & \cellcolor{hl}{1.000 {\footnotesize (0.000)}} & 0.600 & \cellcolor{hl}{1.000 {\footnotesize (0.000)}}\\
 & 2 & 0.000 & \cellcolor{hl}{0.870 {\footnotesize (0.090)}} & 0.000 & \cellcolor{hl}{0.900 {\footnotesize (0.060)}} & 0.000 & \cellcolor{hl}{1.000 {\footnotesize (0.000)}} & 0.000 & \cellcolor{hl}{0.970 {\footnotesize (0.030)}} & 0.400 & \cellcolor{hl}{0.830 {\footnotesize (0.120)}}\\
 & 3 & 0.000 & \cellcolor{hl}{0.270 {\footnotesize (0.120)}} & 0.000 & \cellcolor{hl}{0.800 {\footnotesize (0.000)}} & 0.000 & \cellcolor{hl}{1.000 {\footnotesize (0.000)}} & 0.000 & \cellcolor{hl}{0.470 {\footnotesize (0.030)}} & 0.200 & \cellcolor{hl}{0.770 {\footnotesize (0.030)}}\\
 & 4 & 0.000 & \cellcolor{hl}{0.030 {\footnotesize (0.030)}} & 0.000 & \cellcolor{hl}{0.100 {\footnotesize (0.060)}} & 0.000 & \cellcolor{hl}{0.170 {\footnotesize (0.170)}} & 0.000 & \cellcolor{hl}{0.170 {\footnotesize (0.120)}} & 0.000 & \cellcolor{hl}{0.200 {\footnotesize (0.100)}}\\
 & 5 & 0.000 & \cellcolor{hl}{0.230 {\footnotesize (0.130)}} & 0.000 & \cellcolor{hl}{0.230 {\footnotesize (0.070)}} & 0.000 & \cellcolor{hl}{0.630 {\footnotesize (0.130)}} & 0.000 & \cellcolor{hl}{0.230 {\footnotesize (0.030)}} & 0.000 & \cellcolor{hl}{0.570 {\footnotesize (0.130)}}\\
\bottomrule
\end{tabular}
\end{adjustbox}
\end{table}

\begin{figure}[H]
    \centering
    \includegraphics[width=\textwidth]{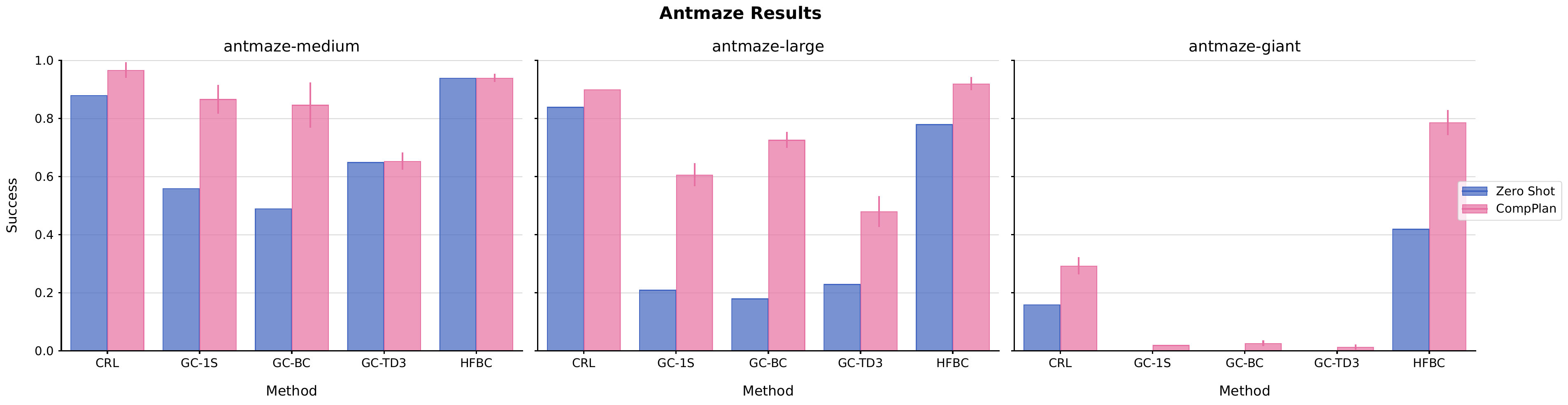}
    \includegraphics[width=\textwidth]{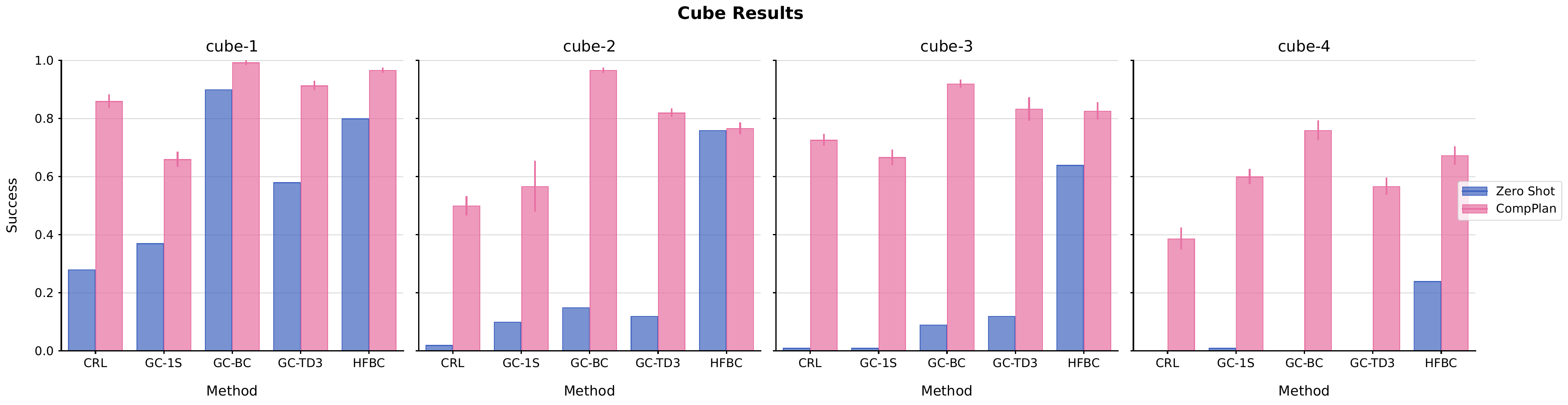}
    \caption{Success rate ($\uparrow$) of base policies $\pi_g$ (Zero Shot) and compositional planning as in Equation~\eqref{eq:planning.} with GHMs (\textsc{CompPlan}, ours) averaged over tasks.}
    \label{fig:planning_vs_zeroshot_success}
\end{figure}

\clearpage
\subsection{Compositional Planning / GPI / Action Planning Results}

\begin{figure*}[h!]
    \centering
    \includegraphics[width=\linewidth]{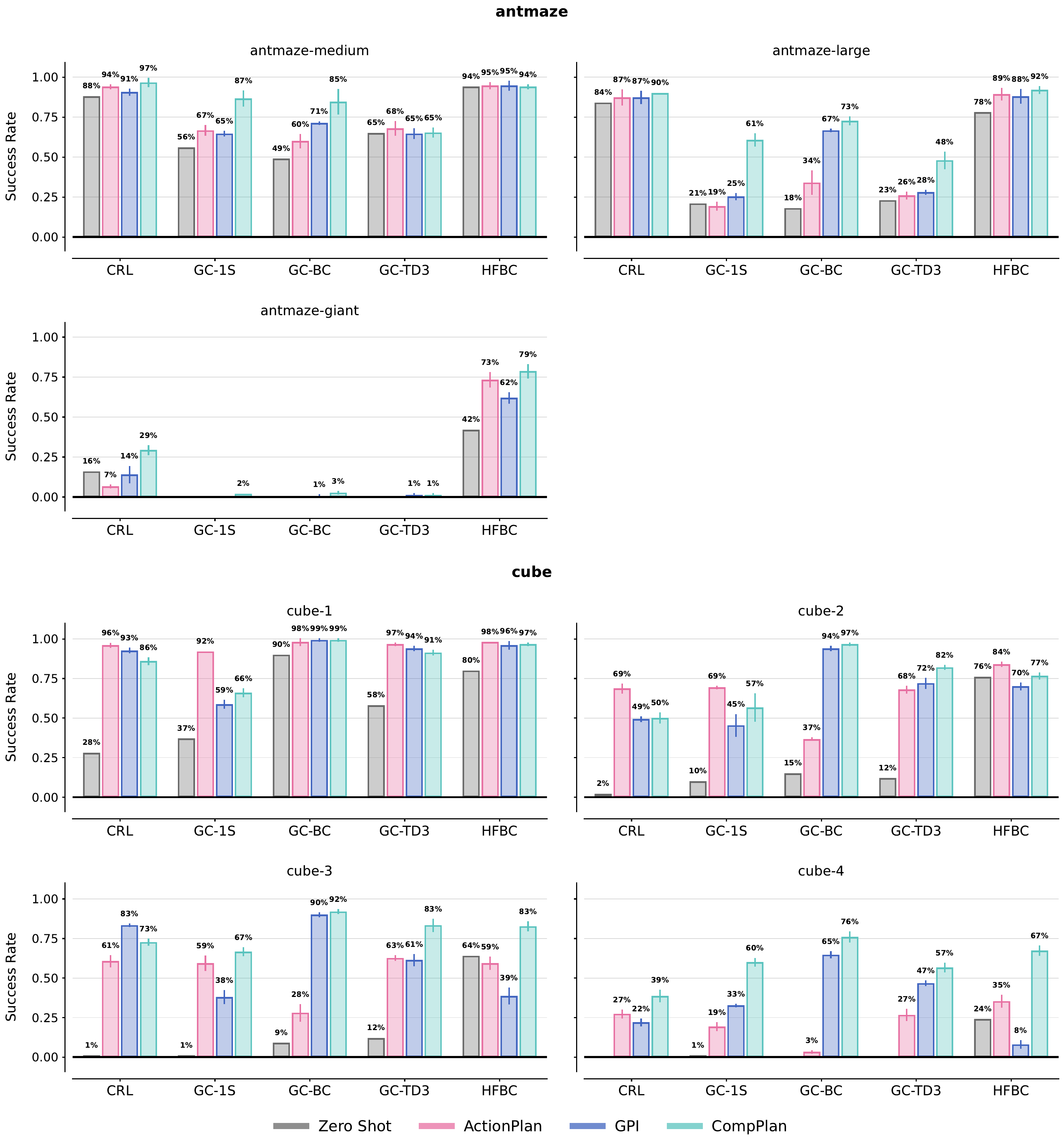}
    \caption{Success rate ($\uparrow$) of planning vs. zero shot.
    We consider action-level planning (\textsc{ActionPlan}) with a world model; generalized policy improvement (GPI) and compositional planning (\textsc{CompPlan}; ours) with GHMs.}
    \label{fig:wm.gpi.ghm.planning.absolute.all}
\end{figure*}

\begin{figure*}[h!]
    \centering
    \includegraphics[width=\linewidth]{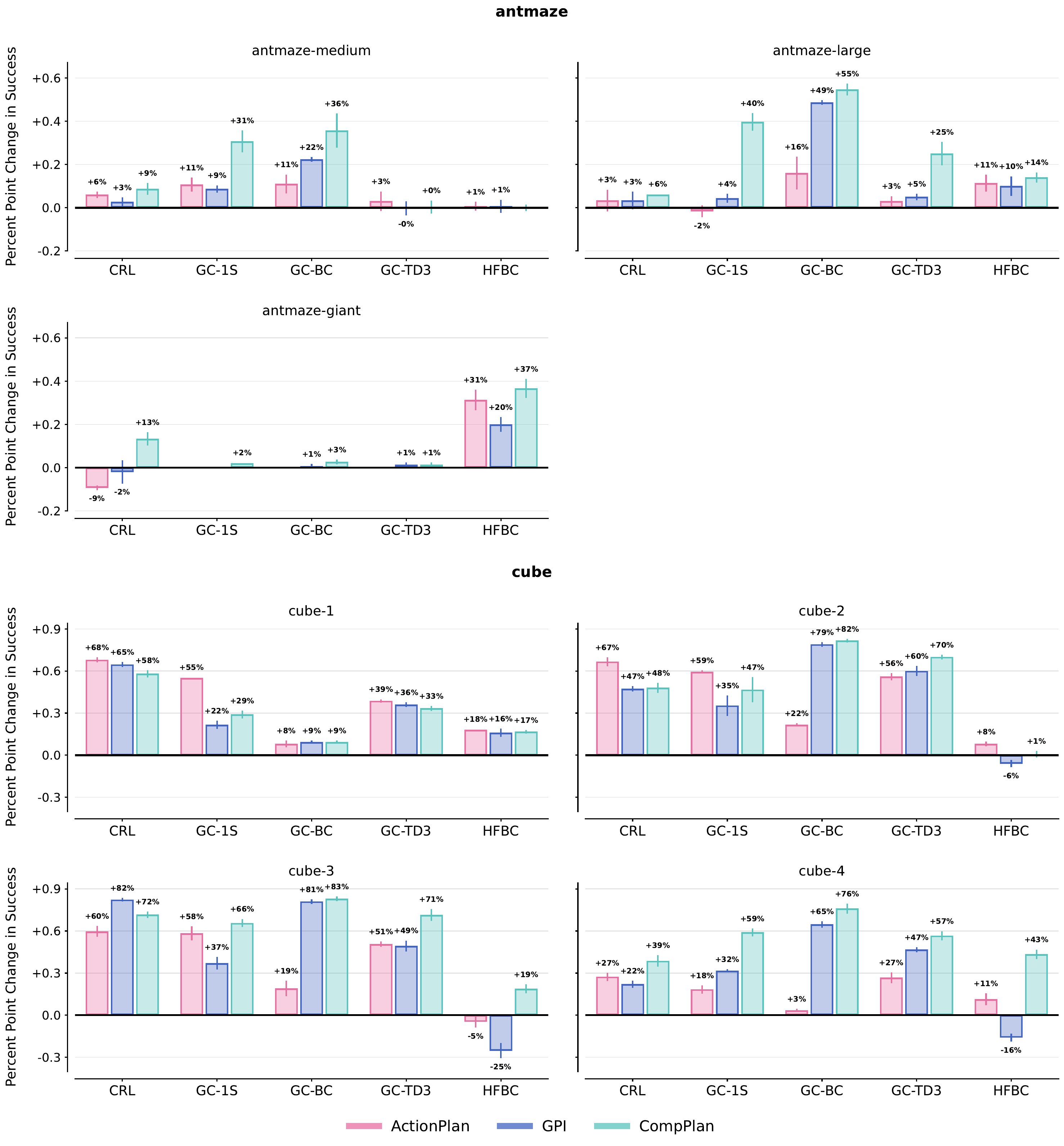}
    \caption{Percent point change ($\uparrow$) of planning over zero shot.
    We consider action-level planning (\textsc{ActionPlan}) with a world model; generalized policy improvement (GPI) and compositional planning (\textsc{CompPlan}; ours) with GHMs.}
    \label{fig:wm.gpi.ghm.planning.all}
\end{figure*}

\clearpage

\subsection{Ablation on the Planning Frequency}
In this section, we investigate whether the planning cost can be amortized over time. Specifically, we compare the performance of planning at each time step with planning every N steps. In the latter case, we execute the action maximizing the \textsc{CompPlan} objective for the first step and then follow the policy $\pi_{z_1}$ (i.e., the trajectory is $s, a^\star, s_1, \pi_{z_1}(s_1), s_2, \ldots, s_{N-1}, \pi_{z_1}(s_{N-1}), s_N$).
Figure~\ref{fig:replan_every_ablation} reports the average success rate for each domain and method. On average, planning at every step leads to about a 20\% improvement compared to planning every 5 steps. This is mostly due to a few cases—most notably, CRL policies in Cube—in which planning every 5 steps results in a large performance drop or complete unlearning. In the majority of experiments, planning every 5 steps does not substantially degrade overall performance, while significantly reducing planning time. Overall, this is a lever that can be used to trade off speed and performance. Finally, Table~\ref{tab:replan.ablation} provides a tabular summary of these results.

\begin{figure}[ht]
    \centering
    \includegraphics[width=\textwidth]
    {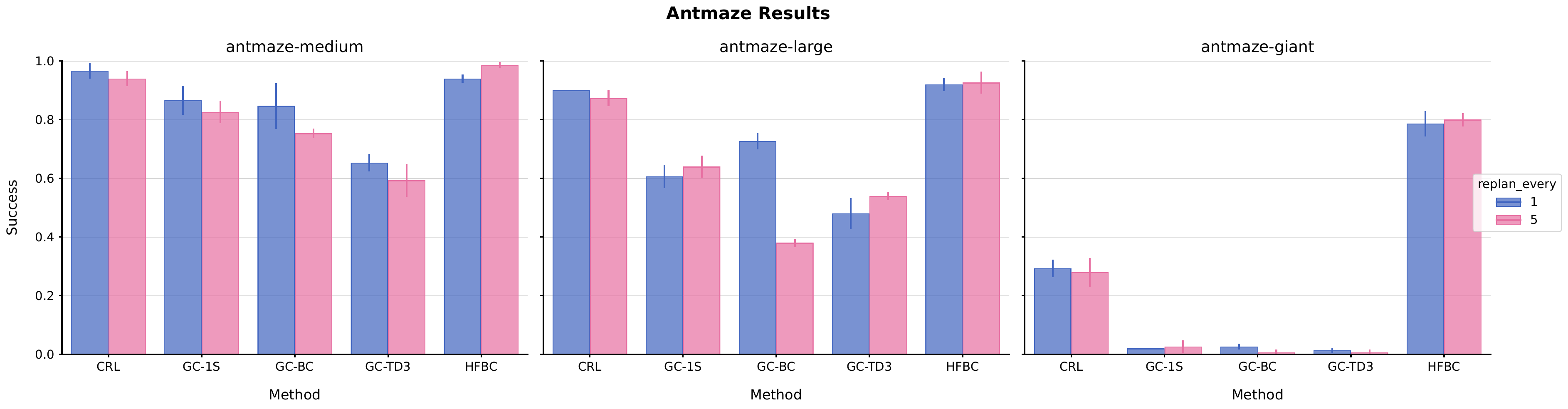}
    \includegraphics[width=\textwidth]
    {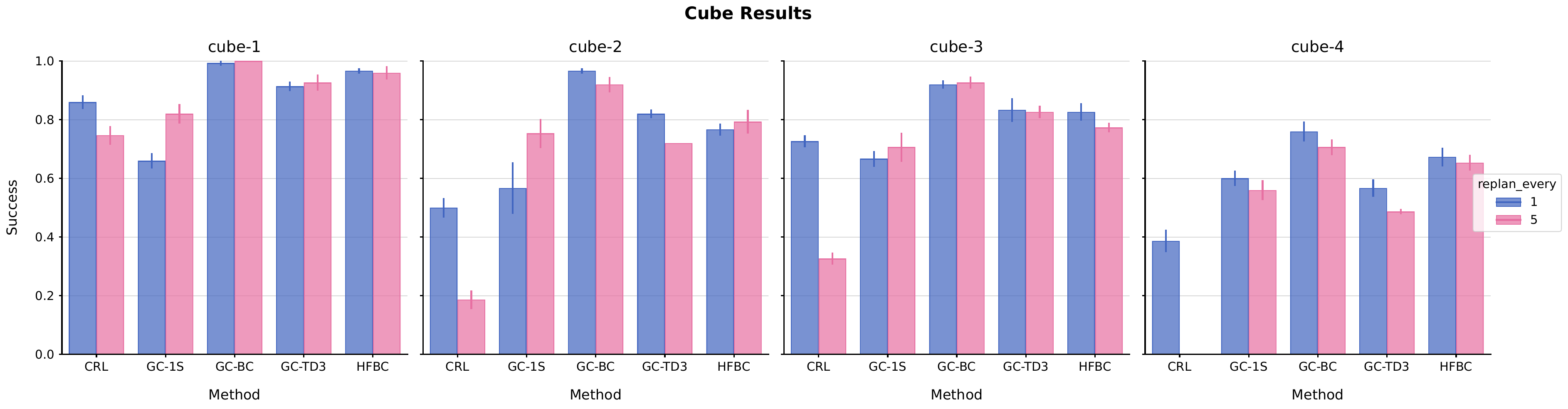}
    \caption{Success rate ($\uparrow$) of \textsc{CompPlan} with different base policies when planning at every step of every 5 steps. We report mean and standard deviation 3 seeds for GHM training.}
    \label{fig:replan_every_ablation}
\end{figure}
\begin{table}[ht]
\setlength{\aboverulesep}{0pt}
\setlength{\belowrulesep}{0pt}
\centering
\caption{Success rate ($\uparrow$) of \textsc{CompPlan} with different base policies when replanning every $1$ or $5$ steps. We report mean and standard deviation over 3 seeds.
Best method highlighted in \mhl{blue}; \mdec{gray} indicates no significant difference.}
\label{tab:replan.ablation}
\renewcommand{\arraystretch}{1.3}
\begin{adjustbox}{width=\textwidth}
\begin{tabular}{l|ll|ll|ll|ll|ll|}
\toprule
\multirow[c]{2}{*}{\textbf{Domain}} & \multicolumn{2}{c}{\textbf{CRL}}& \multicolumn{2}{c}{\textbf{GC-1S}}& \multicolumn{2}{c}{\textbf{GC-BC}}& \multicolumn{2}{c}{\textbf{GC-TD3}}& \multicolumn{2}{c}{\textbf{HFBC}}\\ & \multicolumn{1}{c}{1 Step} & \multicolumn{1}{c|}{5 Steps} & \multicolumn{1}{c}{1 Step} & \multicolumn{1}{c|}{5 Steps} & \multicolumn{1}{c}{1 Step} & \multicolumn{1}{c|}{5 Steps} & \multicolumn{1}{c}{1 Step} & \multicolumn{1}{c|}{5 Steps} & \multicolumn{1}{c}{1 Step} & \multicolumn{1}{c}{5 Steps}\\
\midrule
\textsc{antmaze-medium} & \cellcolor{hl}{0.97 {\footnotesize (0.02)}} & 0.94 {\footnotesize (0.02)} & \cellcolor{hl}{0.87 {\footnotesize (0.05)}} & 0.83 {\footnotesize (0.04)} & \cellcolor{hl}{0.85 {\footnotesize (0.08)}} & 0.75 {\footnotesize (0.01)} & \cellcolor{hl}{0.65 {\footnotesize (0.03)}} & 0.59 {\footnotesize (0.05)} & 0.94{\footnotesize (0.01)} & \cellcolor{hl}{0.99 {\footnotesize (0.01)}}\\
\hhline{~|----------}
\textsc{antmaze-large} & \cellcolor{hl}{0.90 {\footnotesize (0.00)}} & 0.87 {\footnotesize (0.02)} & 0.61{\footnotesize (0.04)} & \cellcolor{hl}{0.64 {\footnotesize (0.03)}} & \cellcolor{hl}{0.73 {\footnotesize (0.02)}} & 0.38 {\footnotesize (0.01)} & 0.48{\footnotesize (0.05)} & \cellcolor{hl}{0.54 {\footnotesize (0.01)}} & 0.92{\footnotesize (0.02)} & \cellcolor{hl}{0.93 {\footnotesize (0.04)}}\\
\hhline{~|----------}
\textsc{antmaze-giant} & \cellcolor{hl}{0.29 {\footnotesize (0.03)}} & 0.28 {\footnotesize (0.05)} & 0.02{\footnotesize (0.00)} & \cellcolor{hl}{0.03 {\footnotesize (0.02)}} & \cellcolor{hl}{0.03 {\footnotesize (0.01)}} & 0.01 {\footnotesize (0.01)} & \cellcolor{gray!25}{0.01 {\footnotesize (0.01)}} & \cellcolor{gray!25}{0.01 {\footnotesize (0.01)}} & 0.79{\footnotesize (0.04)} & \cellcolor{hl}{0.80 {\footnotesize (0.02)}}\\
\hhline{~|----------}
\textsc{cube-1} & \cellcolor{hl}{0.86 {\footnotesize (0.02)}} & 0.75 {\footnotesize (0.03)} & 0.66{\footnotesize (0.02)} & \cellcolor{hl}{0.82 {\footnotesize (0.03)}} & 0.99{\footnotesize (0.01)} & \cellcolor{hl}{1.00 {\footnotesize (0.00)}} & 0.91{\footnotesize (0.01)} & \cellcolor{hl}{0.93 {\footnotesize (0.02)}} & \cellcolor{hl}{0.97 {\footnotesize (0.01)}} & 0.96 {\footnotesize (0.02)}\\
\hhline{~|----------}
\textsc{cube-2} & \cellcolor{hl}{0.50 {\footnotesize (0.03)}} & 0.19 {\footnotesize (0.03)} & 0.57{\footnotesize (0.09)} & \cellcolor{hl}{0.75 {\footnotesize (0.05)}} & \cellcolor{hl}{0.97 {\footnotesize (0.01)}} & 0.92 {\footnotesize (0.02)} & \cellcolor{hl}{0.82 {\footnotesize (0.01)}} & 0.72 {\footnotesize (0.00)} & 0.77{\footnotesize (0.02)} & \cellcolor{hl}{0.79 {\footnotesize (0.04)}}\\
\hhline{~|----------}
\textsc{cube-3} & \cellcolor{hl}{0.73 {\footnotesize (0.02)}} & 0.33 {\footnotesize (0.02)} & 0.67{\footnotesize (0.02)} & \cellcolor{hl}{0.71 {\footnotesize (0.05)}} & 0.92{\footnotesize (0.01)} & \cellcolor{hl}{0.93 {\footnotesize (0.02)}} & \cellcolor{gray!25}{0.83 {\footnotesize (0.02)}} & \cellcolor{gray!25}{0.83 {\footnotesize (0.02)}} & \cellcolor{hl}{0.83 {\footnotesize (0.03)}} & 0.77 {\footnotesize (0.01)}\\
\hhline{~|----------}
\textsc{cube-4} & \cellcolor{hl}{0.39 {\footnotesize (0.04)}} & 0.00 {\footnotesize (0.00)} & \cellcolor{hl}{0.60 {\footnotesize (0.02)}} & 0.56 {\footnotesize (0.03)} & \cellcolor{hl}{0.76 {\footnotesize (0.03)}} & 0.71 {\footnotesize (0.02)} & \cellcolor{hl}{0.57 {\footnotesize (0.03)}} & 0.49 {\footnotesize (0.01)} & \cellcolor{hl}{0.67 {\footnotesize (0.03)}} & 0.65 {\footnotesize (0.02)}\\
\bottomrule
\end{tabular}
\end{adjustbox}
\end{table}

\clearpage

\subsection{Ablation on the Planning Objective}
\label{appendix: planning_ablation}

As mentioned in the main paper, we can leverage the GHM in several different planning approaches. In this section, we compare two strategies. The first is the approach presented in the paper, in which we optimize both the first action and the policy sequence $(z_1, \ldots, z_n)$:
\begin{equation}\label{eq:plan_a_then_z}
\max_{a_1, z_1, \ldots, z_n} Q^{\pi_{z_1} \xrightarrow{\alpha_1} \pi_{z_2} \ldots \xrightarrow{\alpha_{n-1}} \pi_{z_n}}_\gamma(s, a_1)
\end{equation}
while the second does not involve action optimization
\begin{equation}\label{eq:plan_z}
\max_{z_1, \ldots, z_n} V^{\pi_{z_1} \xrightarrow{\alpha_1} \pi_{z_2} \ldots \xrightarrow{\alpha_{n-1}} \pi_{z_n}}_\gamma(s)
\end{equation}
The difference is that, when using~\eqref{eq:plan_a_then_z}, the first action is deterministic, whereas in~\eqref{eq:plan_z} it is sampled according to $\pi_{z_1}$.
Figure~\ref{fig:action_opt_ablation} shows that maximizing over the first action yields more consistent performance overall, with an average improvement of about 70\%.\footnote{This is mostly due to unsuccessful outcomes (i.e., near-zero performance) on certain tasks when \autoref{eq:plan_z}.} In contrast, planning over the $z$ sequence can fail when the base policy is diffuse (i.e., highly stochastic). This occurs, for example, for CRL policies in the Cube domains and for GC-BC policies in AntMaze. Finally, Table~\ref{tab:action_opt_ablation} provides a tabular summary of these results.
\begin{figure}[h!]
    \centering
    \includegraphics[width=0.9\textwidth]
    {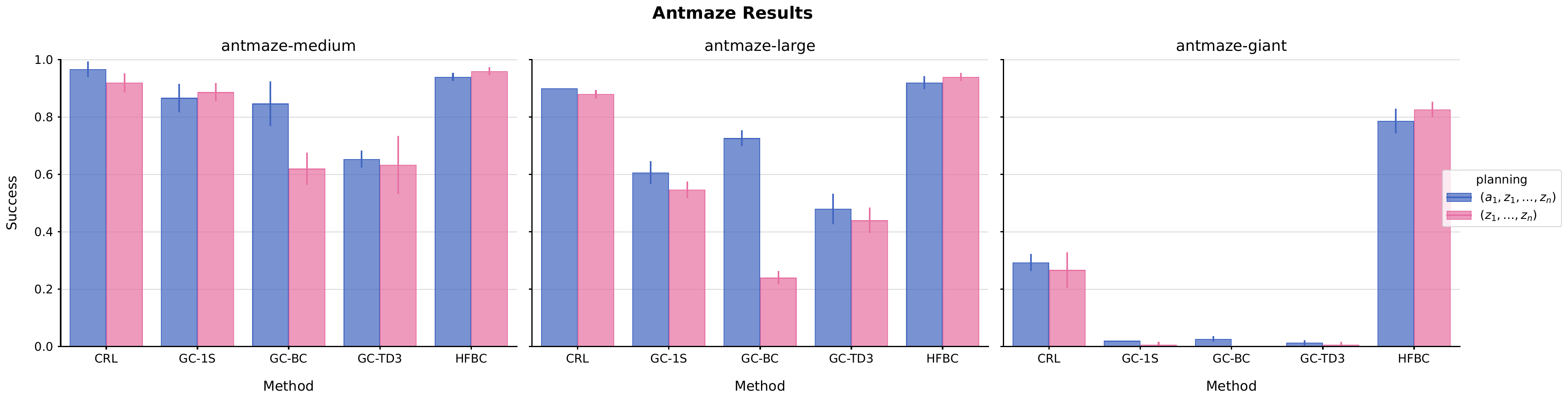}
    \includegraphics[width=0.9\textwidth]
    {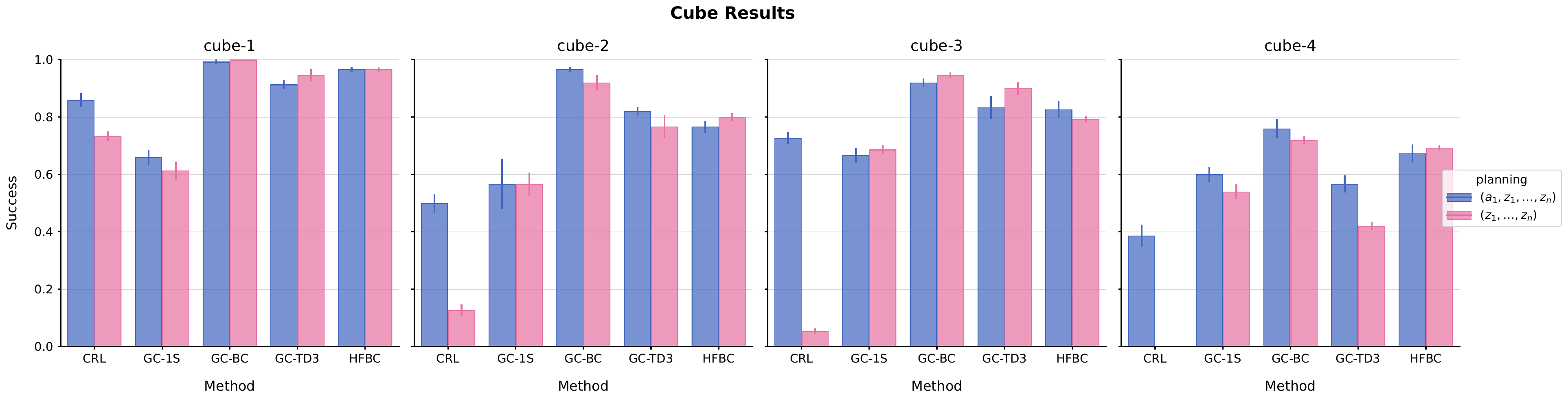}
    \caption{Success rate ($\uparrow$) of \textsc{CompPlan} with different base policies when maximizing over $(a_1, z_1, \ldots z_n)$ (Eq.~\ref{eq:plan_a_then_z}) or $(z_1, \ldots, z_n)$ (Eq.~\ref{eq:plan_z}). We report mean and standard deviation 3 seeds for GHM training.}
    \label{fig:action_opt_ablation}
\end{figure}
\begin{table}[h!]
\setlength{\aboverulesep}{0pt}
\setlength{\belowrulesep}{0pt}
\centering
\caption{Success rate ($\uparrow$) of \textsc{CompPlan} with (Eq.~\ref{eq:plan_a_then_z}) and without (Eq.~\ref{eq:plan_z}) action maximization. We report mean and standard deviation over 3 seeds. Best method highlighted in \mhl{blue}; \mdec{gray} indicates no significant difference.}
\label{tab:action_opt_ablation}
\renewcommand{\arraystretch}{1.3}
\begin{adjustbox}{width=\textwidth}
\begin{tabular}{l|ll|ll|ll|ll|ll}
\toprule
\multirow[c]{2}{*}{\textbf{Domain}} & \multicolumn{2}{c}{\textbf{CRL}} & \multicolumn{2}{c}{\textbf{GC-1S}}& \multicolumn{2}{c}{\textbf{GC-BC}}& \multicolumn{2}{c}{\textbf{GC-TD3}}& \multicolumn{2}{c}{\textbf{HFBC}} \\ & \multicolumn{1}{c}{$\bm{\max Q}$} & \multicolumn{1}{c|}{$\bm{\max V}$} & \multicolumn{1}{c}{$\bm{\max Q}$} & \multicolumn{1}{c|}{$\bm{\max V}$} & \multicolumn{1}{c}{$\bm{\max Q}$} & \multicolumn{1}{c|}{$\bm{\max V}$} & \multicolumn{1}{c}{$\bm{\max Q}$} & \multicolumn{1}{c|}{$\bm{\max V}$} & \multicolumn{1}{c}{$\bm{\max Q}$} & \multicolumn{1}{c}{$\bm{\max V}$}\\
\midrule
\textsc{antmaze-medium} & \cellcolor{hl}{0.97 {\footnotesize (0.02)}} & 0.92 {\footnotesize (0.03)} & 0.87 {\footnotesize (0.05)} & \cellcolor{hl}{0.89 {\footnotesize (0.03)}} & \cellcolor{hl}{0.85 {\footnotesize (0.08)}} & 0.62 {\footnotesize (0.05)} & \cellcolor{hl}{0.65 {\footnotesize (0.03)}} & 0.63 {\footnotesize (0.10)} & 0.94 {\footnotesize (0.01)} & \cellcolor{hl}{0.96 {\footnotesize (0.01)}}\\
\hhline{~|----------}
\textsc{antmaze-large} & \cellcolor{hl}{0.90 {\footnotesize (0.00)}} & 0.88 {\footnotesize (0.01)} & \cellcolor{hl}{0.61 {\footnotesize (0.04)}} & 0.55 {\footnotesize (0.03)} & \cellcolor{hl}{0.73 {\footnotesize (0.02)}} & 0.24 {\footnotesize (0.02)} & \cellcolor{hl}{0.48 {\footnotesize (0.05)}} & 0.44 {\footnotesize (0.04)} & 0.92 {\footnotesize (0.02)} & \cellcolor{hl}{0.94 {\footnotesize (0.01)}}\\
\hhline{~|----------}
\textsc{antmaze-giant} & \cellcolor{hl}{0.29 {\footnotesize (0.03)}} & 0.27 {\footnotesize (0.06)} & \cellcolor{hl}{0.02 {\footnotesize (0.00)}} & 0.01 {\footnotesize (0.01)} & \cellcolor{hl}{0.03 {\footnotesize (0.01)}} & 0.00 {\footnotesize (0.00)} & \cellcolor{gray!25}{0.01 {\footnotesize (0.01)}} & \cellcolor{gray!25}{0.01 {\footnotesize (0.01)}} & 0.79 {\footnotesize (0.04)} & \cellcolor{hl}{0.83 {\footnotesize (0.02)}}\\
\hhline{~|----------}
\textsc{cube-1} & \cellcolor{hl}{0.86 {\footnotesize (0.02)}} & 0.73 {\footnotesize (0.01)} & \cellcolor{hl}{0.66 {\footnotesize (0.02)}} & 0.61 {\footnotesize (0.03)} & 0.99 {\footnotesize (0.01)} & \cellcolor{hl}{1.00 {\footnotesize (0.00)}} & 0.91 {\footnotesize (0.01)} & \cellcolor{hl}{0.95 {\footnotesize (0.02)}} & \cellcolor{gray!25}{0.97 {\footnotesize (0.01)}} & \cellcolor{gray!25}{0.97 {\footnotesize (0.01)}}\\
\hhline{~|----------}
\textsc{cube-2} & \cellcolor{hl}{0.50 {\footnotesize (0.03)}} & 0.13 {\footnotesize (0.02)} & \cellcolor{gray!25}{0.57 {\footnotesize (0.04)}} & \cellcolor{gray!25}{0.57 {\footnotesize (0.04)}} & \cellcolor{hl}{0.97 {\footnotesize (0.01)}} & 0.92 {\footnotesize (0.02)} & \cellcolor{hl}{0.82 {\footnotesize (0.01)}} & 0.77 {\footnotesize (0.04)} & 0.77 {\footnotesize (0.02)} & \cellcolor{hl}{0.80 {\footnotesize (0.01)}}\\
\hhline{~|----------}
\textsc{cube-3} & \cellcolor{hl}{0.73 {\footnotesize (0.02)}} & 0.05 {\footnotesize (0.01)} & 0.67 {\footnotesize (0.02)} & \cellcolor{hl}{0.69 {\footnotesize (0.01)}} & 0.92 {\footnotesize (0.01)} & \cellcolor{hl}{0.95 {\footnotesize (0.01)}} & 0.83 {\footnotesize (0.04)} & \cellcolor{hl}{0.90 {\footnotesize (0.02)}} & \cellcolor{hl}{0.83 {\footnotesize (0.03)}} & 0.79 {\footnotesize (0.01)}\\
\hhline{~|----------}
\textsc{cube-4} & \cellcolor{hl}{0.39 {\footnotesize (0.04)}} & 0.00 {\footnotesize (0.00)} & \cellcolor{hl}{0.60 {\footnotesize (0.02)}} & 0.54 {\footnotesize (0.02)} & \cellcolor{hl}{0.76 {\footnotesize (0.03)}} & 0.72 {\footnotesize (0.01)} & \cellcolor{hl}{0.57 {\footnotesize (0.03)}} & 0.42 {\footnotesize (0.01)} & 0.67 {\footnotesize (0.03)} & \cellcolor{hl}{0.69 {\footnotesize (0.01)}}\\
\bottomrule
\end{tabular}
\end{adjustbox}
\end{table}

\clearpage
\subsection{Ablation on the Proposal Distribution}
\label{appendix: proposal_ablation}
In this section, we study the effect of the sampling distribution on the planning procedure. As mentioned in the main paper, GHMs are trained either with the policy condition $z$ or with a learnable token $\varnothing$. In the latter case, the resulting GHM corresponds to the behavioral policy. We compare two planning strategies: (1) sampling from the GHM conditioned on the policy associated with the goal (\emph{conditional proposal}); and (2) sampling from the GHM of the behavioral policy (\emph{unconditional proposal}). Figure~\ref{fig:proposal_ablation} shows that, in AntMaze, sampling from the unconditional distribution performs only marginally worse than the conditional proposal, highlighting the robustness of our planning procedure. Table~\ref{tab:conditional_ablation} provides a tabular summary of these results.

\begin{figure}[ht]
    \centering
    \includegraphics[width=\textwidth]
    {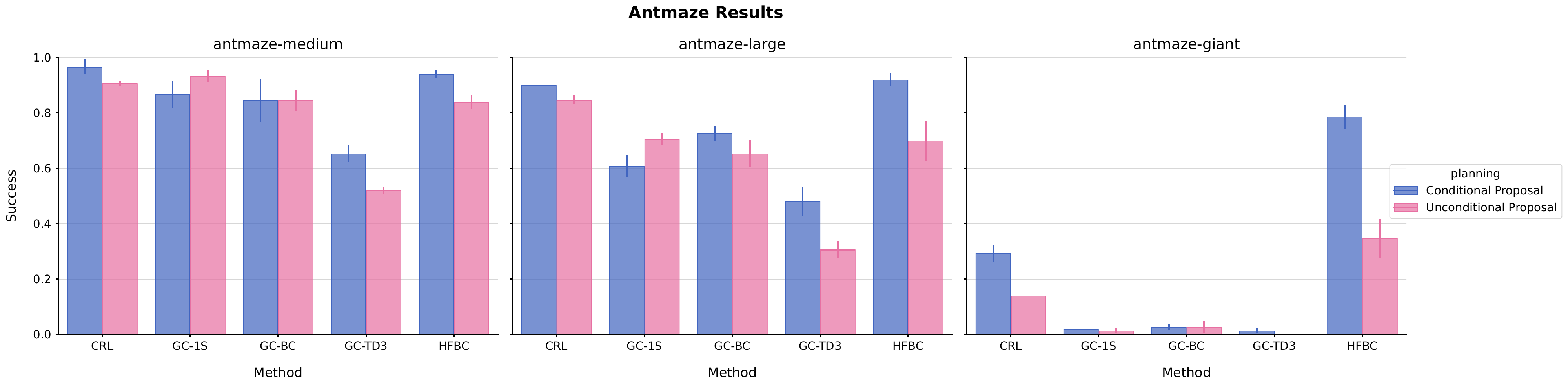}
    \includegraphics[width=\textwidth]{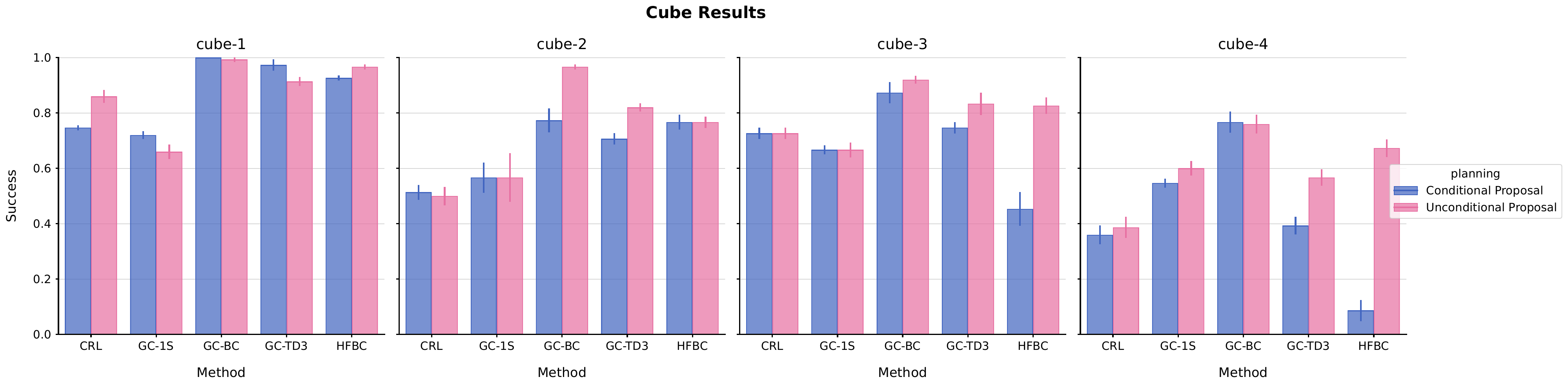}
    \caption{Success rate ($\uparrow$) of \textsc{CompPlan} with different base policies when using the conditional or unconditional GHMs to propose subgoals. We report mean and standard deviation 3 seeds for GHM training.}
    \label{fig:proposal_ablation}
\end{figure}

\begin{table}[ht]
\setlength{\aboverulesep}{0pt}
\setlength{\belowrulesep}{0pt}
\centering
\caption{Success rate ($\uparrow$) of \textsc{CompPlan} with conditional and unconditional proposal. We report mean and standard deviation 3 seeds. Best method highlighted in \mhl{blue}; \mdec{gray} indicates no significant difference.}
\label{tab:conditional_ablation}
\renewcommand{\arraystretch}{1.3}
\begin{adjustbox}{width=\textwidth}
\begin{tabular}{l|ll|ll|ll|ll|ll|}
\toprule
\multirow[c]{2}{*}{\textbf{Domain}} & \multicolumn{2}{c}{\textbf{CRL}}& \multicolumn{2}{c}{\textbf{GC-1S}}& \multicolumn{2}{c}{\textbf{GC-BC}}& \multicolumn{2}{c}{\textbf{GC-TD3}}& \multicolumn{2}{c}{\textbf{HFBC}}\\
  & \multicolumn{1}{c}{\textbf{Cond}} & \multicolumn{1}{c|}{\textbf{Uncond}} & \multicolumn{1}{c}{\textbf{Cond}} & \multicolumn{1}{c|}{\textbf{Uncond}} & \multicolumn{1}{c}{\textbf{Cond}} & \multicolumn{1}{c|}{\textbf{Uncond}} & \multicolumn{1}{c}{\textbf{Cond}} & \multicolumn{1}{c|}{\textbf{Uncond}} & \multicolumn{1}{c}{\textbf{Cond}} & \multicolumn{1}{c}{\textbf{Uncond}}\\
\midrule
\textsc{antmaze-medium} & \cellcolor{hl}{0.97 {\footnotesize (0.02)}} & 0.91 {\footnotesize (0.01)} & 0.87 {\footnotesize (0.05)} & \cellcolor{hl}{0.93 {\footnotesize (0.02)}} & \cellcolor{gray!25}{0.85 {\footnotesize (0.04)}} & \cellcolor{gray!25}{0.85 {\footnotesize (0.04)}} & \cellcolor{hl}{0.65 {\footnotesize (0.03)}} & 0.52 {\footnotesize (0.01)} & \cellcolor{hl}{0.94 {\footnotesize (0.01)}} & 0.84 {\footnotesize (0.02)}\\
\hhline{~|----------}
\textsc{antmaze-large} & \cellcolor{hl}{0.90 {\footnotesize (0.00)}} & 0.85 {\footnotesize (0.01)} & 0.61 {\footnotesize (0.04)} & \cellcolor{hl}{0.71 {\footnotesize (0.02)}} & \cellcolor{hl}{0.73 {\footnotesize (0.02)}} & 0.65 {\footnotesize (0.05)} & \cellcolor{hl}{0.48 {\footnotesize (0.05)}} & 0.31 {\footnotesize (0.03)} & \cellcolor{hl}{0.92 {\footnotesize (0.02)}} & 0.70 {\footnotesize (0.07)}\\
\hhline{~|----------}
\textsc{antmaze-giant} & \cellcolor{hl}{0.29 {\footnotesize (0.03)}} & 0.14 {\footnotesize (0.00)} & \cellcolor{hl}{0.02 {\footnotesize (0.00)}} & 0.01 {\footnotesize (0.01)} & \cellcolor{gray!25}{0.03 {\footnotesize (0.02)}} & \cellcolor{gray!25}{0.03 {\footnotesize (0.02)}} & \cellcolor{hl}{0.01 {\footnotesize (0.01)}} & 0.00 {\footnotesize (0.00)} & \cellcolor{hl}{0.79 {\footnotesize (0.04)}} & 0.35 {\footnotesize (0.07)}\\
\hhline{~|----------}
\textsc{cube-1} & 0.75 {\footnotesize (0.01)} & \cellcolor{hl}{0.86 {\footnotesize (0.02)}} & \cellcolor{hl}{0.72 {\footnotesize (0.01)}} & 0.66 {\footnotesize (0.02)} & \cellcolor{hl}{1.00 {\footnotesize (0.00)}} & 0.99 {\footnotesize (0.01)} & \cellcolor{hl}{0.97 {\footnotesize (0.02)}} & 0.91 {\footnotesize (0.01)} & 0.93 {\footnotesize (0.01)} & \cellcolor{hl}{0.97 {\footnotesize (0.01)}}\\
\hhline{~|----------}
\textsc{cube-2} & \cellcolor{hl}{0.51 {\footnotesize (0.02)}} & 0.50 {\footnotesize (0.03)} & \cellcolor{gray!25}{0.57 {\footnotesize (0.09)}} & \cellcolor{gray!25}{0.57 {\footnotesize (0.09)}} & 0.77 {\footnotesize (0.04)} & \cellcolor{hl}{0.97 {\footnotesize (0.01)}} & 0.71 {\footnotesize (0.02)} & \cellcolor{hl}{0.82 {\footnotesize (0.01)}} & \cellcolor{gray!25}{0.77 {\footnotesize (0.02)}} & \cellcolor{gray!25}{0.77 {\footnotesize (0.02)}}\\
\hhline{~|----------}
\textsc{cube-3} & \cellcolor{gray!25}{0.73 {\footnotesize (0.02)}} & \cellcolor{gray!25}{0.73 {\footnotesize (0.02)}} & \cellcolor{gray!25}{0.67 {\footnotesize (0.02)}} & \cellcolor{gray!25}{0.67 {\footnotesize (0.02)}} & 0.87 {\footnotesize (0.04)} & \cellcolor{hl}{0.92 {\footnotesize (0.01)}} & 0.75 {\footnotesize (0.02)} & \cellcolor{hl}{0.83 {\footnotesize (0.04)}} & 0.45 {\footnotesize (0.06)} & \cellcolor{hl}{0.83 {\footnotesize (0.03)}}\\
\hhline{~|----------}
\textsc{cube-4} & 0.36 {\footnotesize (0.03)} & \cellcolor{hl}{0.39 {\footnotesize (0.04)}} & 0.55 {\footnotesize (0.01)} & \cellcolor{hl}{0.60 {\footnotesize (0.02)}} & \cellcolor{hl}{0.77 {\footnotesize (0.04)}} & 0.76 {\footnotesize (0.03)} & 0.39 {\footnotesize (0.03)} & \cellcolor{hl}{0.57 {\footnotesize (0.03)}} & 0.09 {\footnotesize (0.04)} & \cellcolor{hl}{0.67 {\footnotesize (0.03)}}\\
\bottomrule
\end{tabular}
\end{adjustbox}
\end{table}

\clearpage
\subsection{Ablation on the Consistency Objective}
\label{appendix: consistency_ablation}
This section expands upon \autoref{ssec:td-hc-exp}, providing the complete empirical results for the Temporal Difference Horizon Consistency (\tdhc) objective across generative fidelity (\autoref{tab:consistency_ablation_emd}), qualitative predictions (\autoref{fig:qual-ghm}), and downstream planning performance (\autoref{tab:success_adjustbox2}).
\paragraph{\textbf{Generative Fidelity}} 
\autoref{tab:consistency_ablation_emd} shows that \tdhc systematically improves the generative accuracy of the \tdfl baseline as measured by the Earth Mover's Distance \citep[EMD;][]{RubnerTG00}.
These gains are especially pronounced in complex domains like \textsc{antmaze-giant} where bootstrapping errors easily compound.
\autoref{fig:qual-ghm} visually confirms this effect: while both models perform comparably at shorter horizons ($\gamma=0.99$), \tdfl suffers from severe compounding errors at longer horizons ($\gamma=0.998$), and fails to capture the tail of the distribution.
By anchoring its predictions with shorter-horizons, \tdhc successfully respects the topological constraints and properly captures the tail of the true successor state distribution.
\paragraph{\textbf{Planning Performance}} Despite the generative advantages of \tdhc at extreme horizons, \autoref{tab:success_adjustbox2} reveals that downstream planning success rates remain broadly similar between the two methods. Because our planning procedure evaluates candidate sequences using moderate effective horizons ($\beta_i \in [0.98, 0.99]$, or roughly $50-100$ steps), it does not query the extreme timescales where \tdfl breaks down. We expect \tdhc to enable planning over more extreme horizons in the future as benchmarks evolve towards more complex tasks.

\begin{table*}[ht]
\setlength{\aboverulesep}{0pt}
\setlength{\belowrulesep}{0pt}
\centering
\caption{Accuracy (EMD, $\downarrow$) of GHMs trained with (\tdhc) and without (\tdfl) our horizon consistency loss (\autoref{ssec:td-hc}). Best method is highlighted in \mhl{blue}.}
\label{tab:consistency_ablation_emd}
\renewcommand{\arraystretch}{1.3}
\begin{adjustbox}{width=\textwidth}
\begin{tabular}{l|ll|ll|ll|ll|ll|}
\toprule
\multirow[c]{2}{*}{\textbf{Domain}} & \multicolumn{2}{c}{\textbf{CRL}}& \multicolumn{2}{c}{\textbf{GC-1S}}& \multicolumn{2}{c}{\textbf{GC-BC}}& \multicolumn{2}{c}{\textbf{GC-TD3}}& \multicolumn{2}{c}{\textbf{HFBC}}\\
 & \multicolumn{1}{c}{\textbf{\tdfl (\xmark)}} & \multicolumn{1}{c|}{\textbf{\tdhc (\cmark)}} & \multicolumn{1}{c}{\textbf{\tdfl (\xmark)}} & \multicolumn{1}{c|}{\textbf{\tdhc (\cmark)}} & \multicolumn{1}{c}{\textbf{\tdfl (\xmark)}} & \multicolumn{1}{c|}{\textbf{\tdhc (\cmark)}} & \multicolumn{1}{c}{\textbf{\tdfl (\xmark)}} & \multicolumn{1}{c|}{\textbf{\tdhc (\cmark)}} & \multicolumn{1}{c}{\textbf{\tdfl (\xmark)}} & \multicolumn{1}{c}{\textbf{\tdhc (\cmark)}}\\
\midrule
\textsc{antmaze-medium} & 4.41 {\footnotesize (0.05)} & \cellcolor{hl}{4.22 {\footnotesize (0.06)}} & 4.40 {\footnotesize (0.02)} & \cellcolor{hl}{4.22 {\footnotesize (0.03)}} & 4.56 {\footnotesize (0.05)} & \cellcolor{hl}{4.36 {\footnotesize (0.03)}} & 4.68 {\footnotesize (0.05)} & \cellcolor{hl}{4.44 {\footnotesize (0.05)}} & 3.38 {\footnotesize (0.02)} & \cellcolor{hl}{3.22 {\footnotesize (0.02)}}\\
\hhline{~|----------}
\textsc{antmaze-large} & 5.24 {\footnotesize (0.07)} & \cellcolor{hl}{4.81 {\footnotesize (0.03)}} & 5.12 {\footnotesize (0.18)} & \cellcolor{hl}{4.67 {\footnotesize (0.04)}} & 5.32 {\footnotesize (0.03)} & \cellcolor{hl}{4.73 {\footnotesize (0.02)}} & 5.33 {\footnotesize (0.07)} & \cellcolor{hl}{4.83 {\footnotesize (0.04)}} & 3.50 {\footnotesize (0.05)} & \cellcolor{hl}{3.18 {\footnotesize (0.01)}}\\
\hhline{~|----------}
\textsc{antmaze-giant} & 6.77 {\footnotesize (0.49)} & \cellcolor{hl}{5.74 {\footnotesize (0.06)}} & 7.29 {\footnotesize (0.69)} & \cellcolor{hl}{5.25 {\footnotesize (0.08)}} & 6.46 {\footnotesize (0.10)} & \cellcolor{hl}{4.95 {\footnotesize (0.12)}} & 6.51 {\footnotesize (0.14)} & \cellcolor{hl}{5.24 {\footnotesize (0.11)}} & 3.78 {\footnotesize (0.03)} & \cellcolor{hl}{3.00 {\footnotesize (0.09)}}\\
\hhline{~|----------}
\textsc{cube-1} & 1.60 {\footnotesize (0.02)} & \cellcolor{hl}{1.57 {\footnotesize (0.03)}} & 1.43 {\footnotesize (0.00)} & \cellcolor{hl}{1.33 {\footnotesize (0.03)}} & 1.03 {\footnotesize (0.01)} & \cellcolor{hl}{0.98 {\footnotesize (0.01)}} & 1.12 {\footnotesize (0.01)} & \cellcolor{hl}{1.06 {\footnotesize (0.01)}} & 1.82 {\footnotesize (0.06)} & \cellcolor{hl}{1.27 {\footnotesize (0.01)}}\\
\hhline{~|----------}
\textsc{cube-2} & 2.36 {\footnotesize (0.03)} & \cellcolor{hl}{2.23 {\footnotesize (0.02)}} & 1.86 {\footnotesize (0.04)} & \cellcolor{hl}{1.71 {\footnotesize (0.01)}} & 1.47 {\footnotesize (0.00)} & \cellcolor{hl}{1.42 {\footnotesize (0.01)}} & 1.89 {\footnotesize (0.04)} & \cellcolor{hl}{1.80 {\footnotesize (0.03)}} & 2.29 {\footnotesize (0.09)} & \cellcolor{hl}{1.54 {\footnotesize (0.04)}}\\
\hhline{~|----------}
\textsc{cube-3} & 2.15 {\footnotesize (0.02)} & \cellcolor{hl}{2.10 {\footnotesize (0.02)}} & 1.80 {\footnotesize (0.04)} & \cellcolor{hl}{1.71 {\footnotesize (0.03)}} & 1.89 {\footnotesize (0.02)} & \cellcolor{hl}{1.85 {\footnotesize (0.02)}} & 1.91 {\footnotesize (0.06)} & \cellcolor{hl}{1.84 {\footnotesize (0.04)}} & 1.99 {\footnotesize (0.09)} & \cellcolor{hl}{1.55 {\footnotesize (0.02)}}\\
\hhline{~|----------}
\textsc{cube-4} & 2.41 {\footnotesize (0.03)} & \cellcolor{hl}{2.34 {\footnotesize (0.01)}} & 2.13 {\footnotesize (0.03)} & \cellcolor{hl}{2.05 {\footnotesize (0.03)}} & 2.33 {\footnotesize (0.03)} & \cellcolor{hl}{2.22 {\footnotesize (0.02)}} & 2.21 {\footnotesize (0.02)} & \cellcolor{hl}{2.15 {\footnotesize (0.02)}} & 2.05 {\footnotesize (0.05)} & \cellcolor{hl}{1.61 {\footnotesize (0.03)}}\\
\bottomrule
\end{tabular}
\end{adjustbox}
\end{table*}

\begin{table*}[ht]
\setlength{\aboverulesep}{0pt}
\setlength{\belowrulesep}{0pt}
\centering
\caption{Success rate ($\uparrow$) of \textsc{CompPlan} with consistency (\tdhc) and without consistency (\tdfl). Mean and standard deviation over 3 seeds. Best method highlighted in \mhl{blue}; \mdec{gray} indicates no significant difference.}
\label{tab:success_adjustbox2}
\renewcommand{\arraystretch}{1.3}
\begin{adjustbox}{width=\textwidth}
\begin{tabular}{l|ll|ll|ll|ll|ll|}
\toprule
\multirow[c]{2}{*}{\textbf{Domain}} & \multicolumn{2}{c}{\textbf{CRL}}& \multicolumn{2}{c}{\textbf{GC-1S}}& \multicolumn{2}{c}{\textbf{GC-BC}}& \multicolumn{2}{c}{\textbf{GC-TD3}}& \multicolumn{2}{c}{\textbf{HFBC}}\\
 &  \multicolumn{1}{c}{\textbf{\tdfl (\xmark)}}  &  \multicolumn{1}{c|}{\textbf{\tdhc (\cmark)}}  &  \multicolumn{1}{c}{\textbf{\tdfl (\xmark)}}  &  \multicolumn{1}{c|}{\textbf{\tdhc (\cmark)}}  &  \multicolumn{1}{c}{\textbf{\tdfl (\xmark)}}  &  \multicolumn{1}{c|}{\textbf{\tdhc (\cmark)}}  &  \multicolumn{1}{c}{\textbf{\tdfl (\xmark)}}  &  \multicolumn{1}{c|}{\textbf{\tdhc (\cmark)}}  &  \multicolumn{1}{c}{\textbf{\tdfl (\xmark)}}  &  \multicolumn{1}{c}{\textbf{\tdhc (\cmark)}} \\
\midrule
\textsc{antmaze-medium} & 0.95 {\footnotesize (0.01)} & \cellcolor{hl}{0.97 {\footnotesize (0.02)}} & 0.85 {\footnotesize (0.01)} & \cellcolor{hl}{0.87 {\footnotesize (0.05)}} & \cellcolor{hl}{0.91 {\footnotesize (0.01)}} & 0.85 {\footnotesize (0.08)} & \cellcolor{hl}{0.71 {\footnotesize (0.03)}} & 0.65 {\footnotesize (0.03)} & \cellcolor{hl}{0.95 {\footnotesize (0.02)}} & 0.94 {\footnotesize (0.01)}\\
\hhline{~|----------}
\textsc{antmaze-large} & \cellcolor{hl}{0.91 {\footnotesize (0.03)}} & 0.90 {\footnotesize (0.00)} & 0.53 {\footnotesize (0.09)} & \cellcolor{hl}{0.61 {\footnotesize (0.04)}} & \cellcolor{hl}{0.79 {\footnotesize (0.01)}} & 0.73 {\footnotesize (0.02)} & \cellcolor{hl}{0.51 {\footnotesize (0.03)}} & 0.48 {\footnotesize (0.05)} & 0.91 {\footnotesize (0.03)} & \cellcolor{hl}{0.92 {\footnotesize (0.02)}}\\
\hhline{~|----------}
\textsc{antmaze-giant} & \cellcolor{hl}{0.31 {\footnotesize (0.05)}} & 0.29 {\footnotesize (0.03)} & 0.01 {\footnotesize (0.01)} & \cellcolor{hl}{0.02 {\footnotesize (0.00)}} & 0.02 {\footnotesize (0.00)} & \cellcolor{hl}{0.03 {\footnotesize (0.01)}} & \cellcolor{gray!25}{0.01 {\footnotesize (0.01)}} & \cellcolor{gray!25}{0.01 {\footnotesize (0.01)}} & \cellcolor{hl}{0.81 {\footnotesize (0.02)}} & 0.79 {\footnotesize (0.04)}\\
\hhline{~|----------}
\textsc{cube-1} & \cellcolor{hl}{0.89 {\footnotesize (0.03)}} & 0.86 {\footnotesize (0.02)} & 0.55 {\footnotesize (0.02)} & \cellcolor{hl}{0.66 {\footnotesize (0.02)}} & \cellcolor{hl}{1.00 {\footnotesize (0.00)}} & 0.99 {\footnotesize (0.01)} & \cellcolor{hl}{0.97 {\footnotesize (0.01)}} & 0.91 {\footnotesize (0.01)} & \cellcolor{gray!25}{0.97 {\footnotesize (0.01)}} & \cellcolor{gray!25}{0.97 {\footnotesize (0.01)}}\\
\hhline{~|----------}
\textsc{cube-2} & 0.41 {\footnotesize (0.02)} & \cellcolor{hl}{0.50 {\footnotesize (0.03)}} & \cellcolor{gray!25}{0.57 {\footnotesize (0.09)}} & \cellcolor{gray!25}{0.57 {\footnotesize (0.09)}} & 0.95 {\footnotesize (0.02)} & \cellcolor{hl}{0.97 {\footnotesize (0.01)}} & \cellcolor{hl}{0.85 {\footnotesize (0.02)}} & 0.82 {\footnotesize (0.01)} & \cellcolor{hl}{0.84 {\footnotesize (0.03)}} & 0.77 {\footnotesize (0.02)}\\
\hhline{~|----------}
\textsc{cube-3} & 0.72 {\footnotesize (0.02)} & \cellcolor{hl}{0.73 {\footnotesize (0.02)}} & \cellcolor{hl}{0.72 {\footnotesize (0.01)}} & 0.67 {\footnotesize (0.02)} & 0.91 {\footnotesize (0.02)} & \cellcolor{hl}{0.92 {\footnotesize (0.01)}} & \cellcolor{gray!25}{0.83 {\footnotesize (0.04)}} & \cellcolor{gray!25}{0.83 {\footnotesize (0.04)}} & \cellcolor{gray!25}{0.83 {\footnotesize (0.03)}} & \cellcolor{gray!25}{0.83 {\footnotesize (0.03)}}\\
\hhline{~|----------}
\textsc{cube-4} & 0.36 {\footnotesize (0.00)} & \cellcolor{hl}{0.39 {\footnotesize (0.04)}} & 0.56 {\footnotesize (0.02)} & \cellcolor{hl}{0.60 {\footnotesize (0.02)}} & 0.75 {\footnotesize (0.02)} & \cellcolor{hl}{0.76 {\footnotesize (0.03)}} & 0.56 {\footnotesize (0.03)} & \cellcolor{hl}{0.57 {\footnotesize (0.03)}} & 0.64 {\footnotesize (0.04)} & \cellcolor{hl}{0.67 {\footnotesize (0.03)}}\\
\bottomrule
\end{tabular}
\end{adjustbox}
\end{table*}

\clearpage

\begin{figure}[H]
\vspace{1em}
\centering
\begin{tabular}{c c c c}
 & \shortstack{{\large \boldmath$\gamma=0.99$}\\$(\approx 100 \text{ steps})$}
 & \shortstack{{\large \boldmath$\gamma=0.995$}\\$(\approx 200 \text{ steps})$} 
 & \shortstack{{\large \boldmath$\gamma=0.998$}\\$(\approx 500 \text{ steps})$} \\[0.25cm] \cline{1-4} \\
\raisebox{1.75cm}{\rotatebox[origin=c]{90}{\tdfl}}
  & \includegraphics[height=3.5cm]{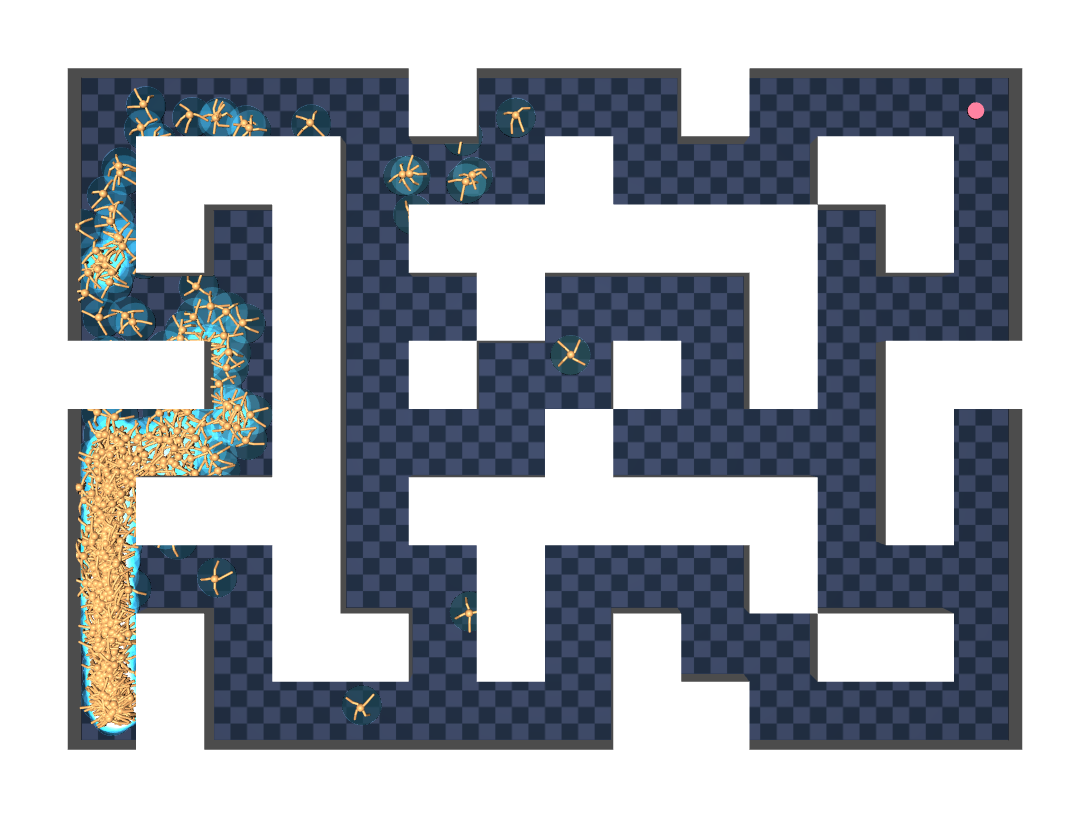}
  & \includegraphics[height=3.5cm]{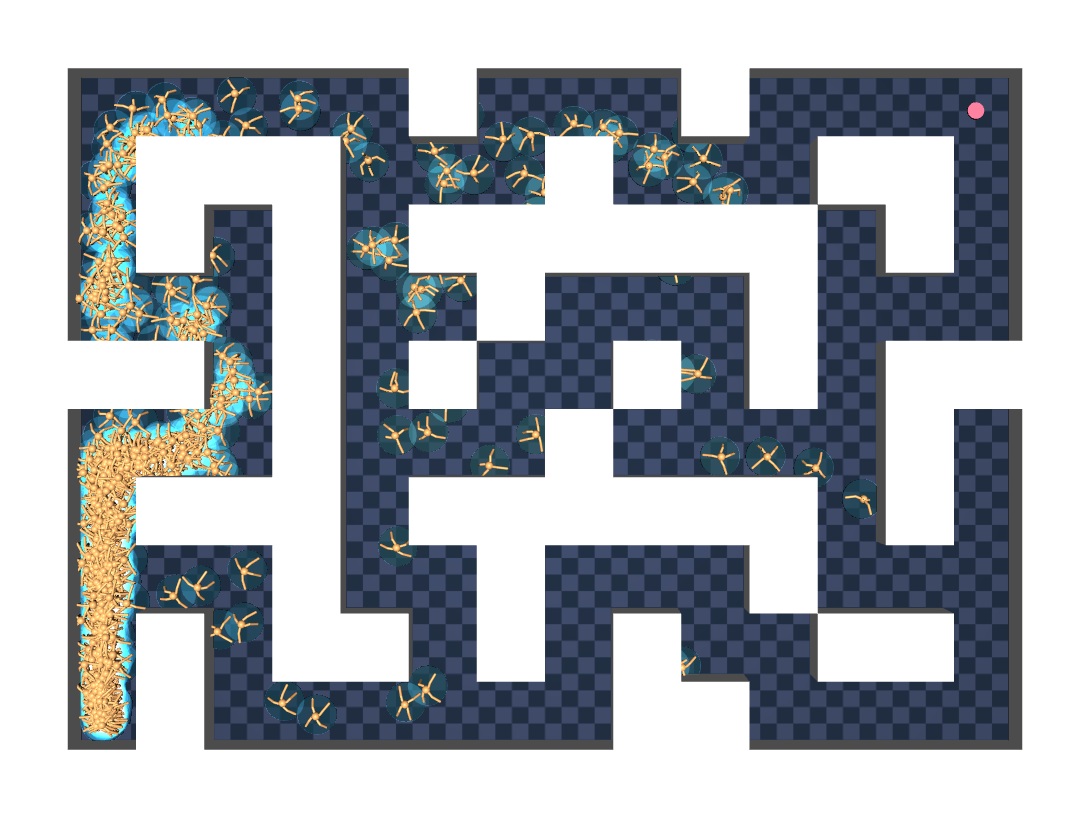}
  & \includegraphics[height=3.5cm]{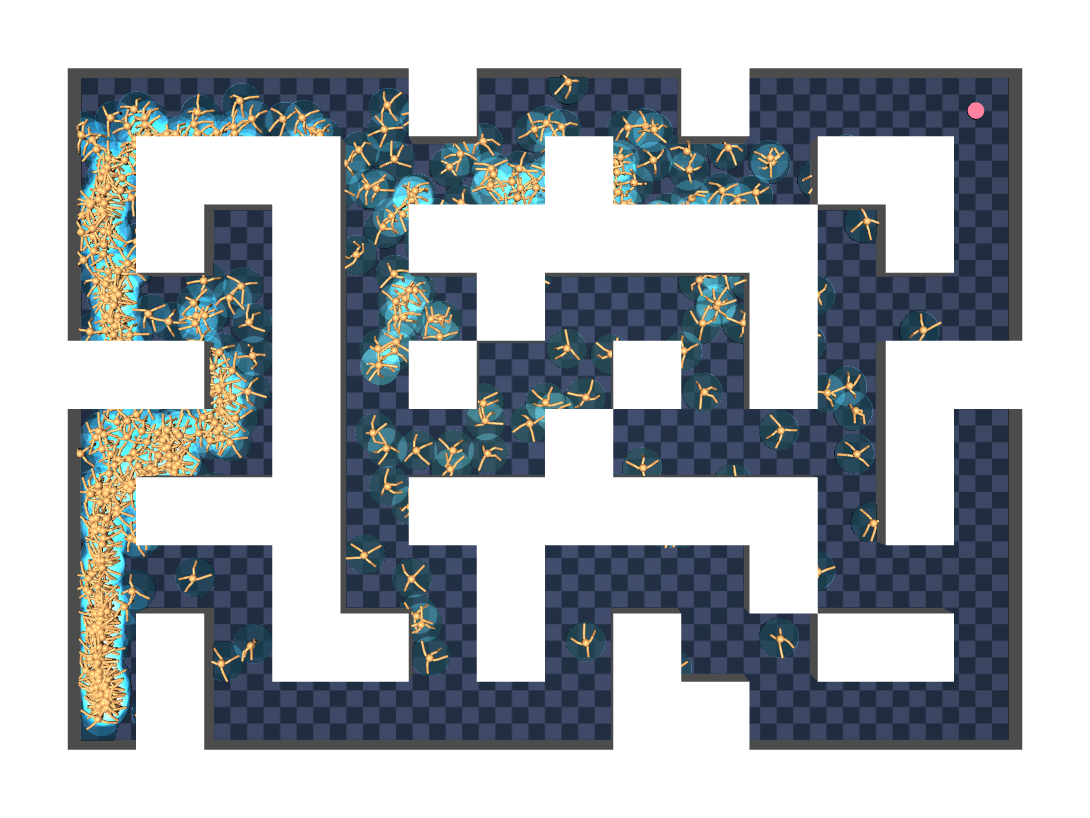} \\
\raisebox{1.75cm}{\rotatebox[origin=c]{90}{\tdhc}}
  & \includegraphics[height=3.5cm]{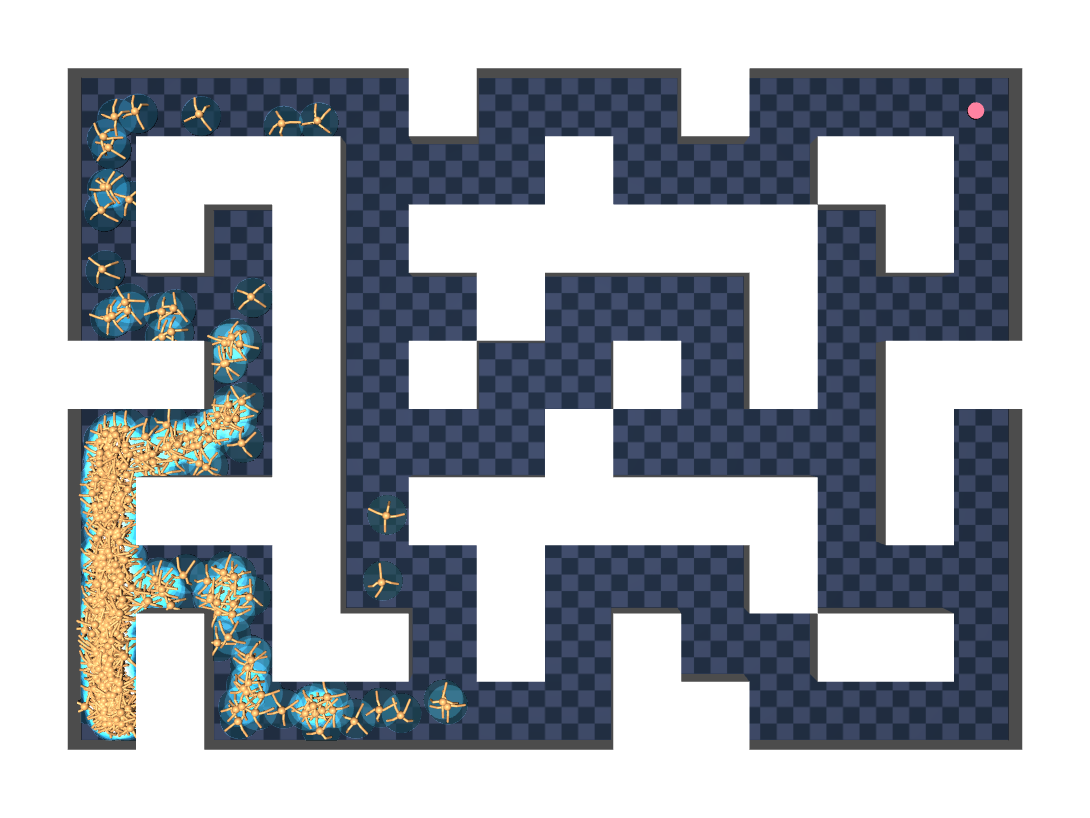}
  & \includegraphics[height=3.5cm]{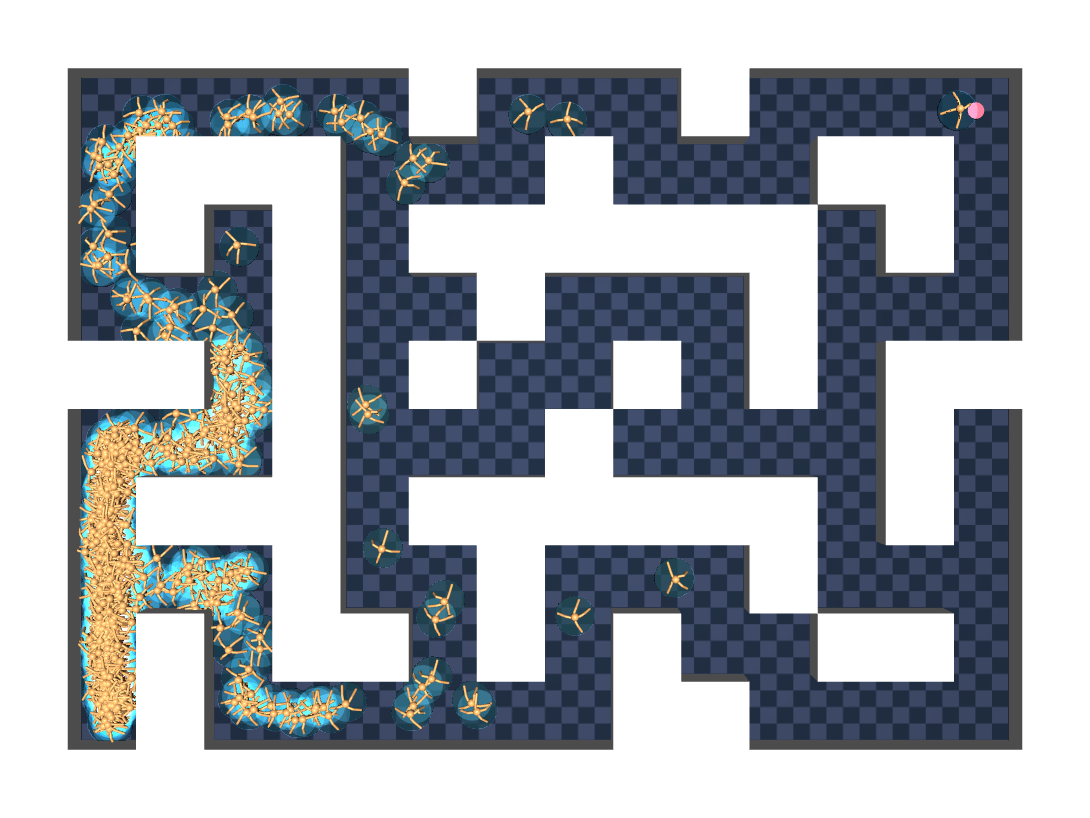}
  & \includegraphics[height=3.5cm]{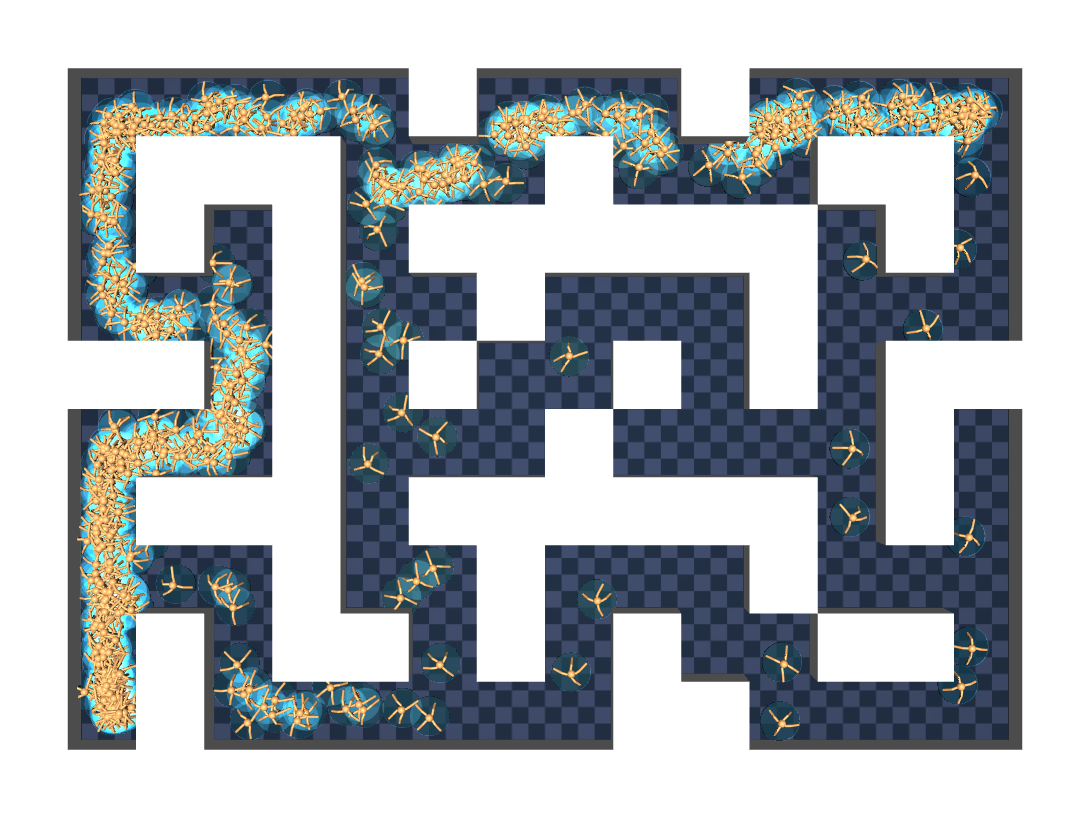} \\
  \cline{2-4} \\
\raisebox{1.2cm}{\rotatebox[origin=c]{90}{\shortstack{\textsc{ground}\\ \textsc{truth}}}}
  & \includegraphics[height=3.5cm]{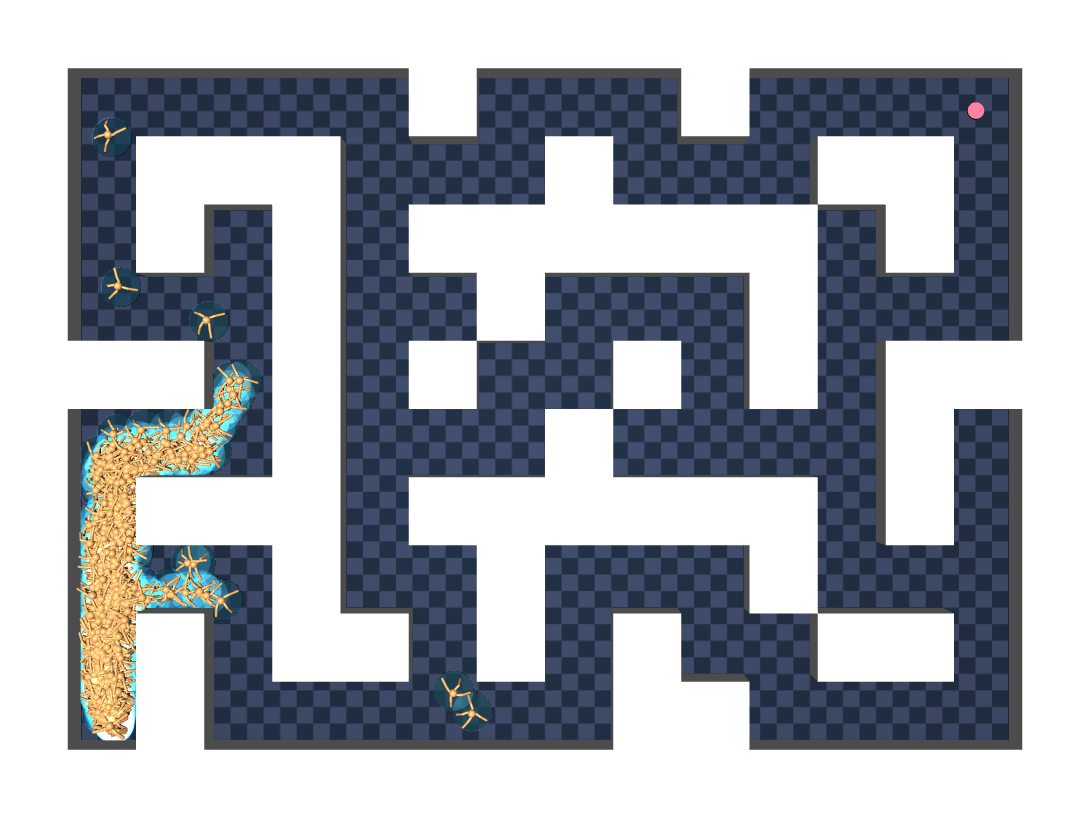}
  & \includegraphics[height=3.5cm]{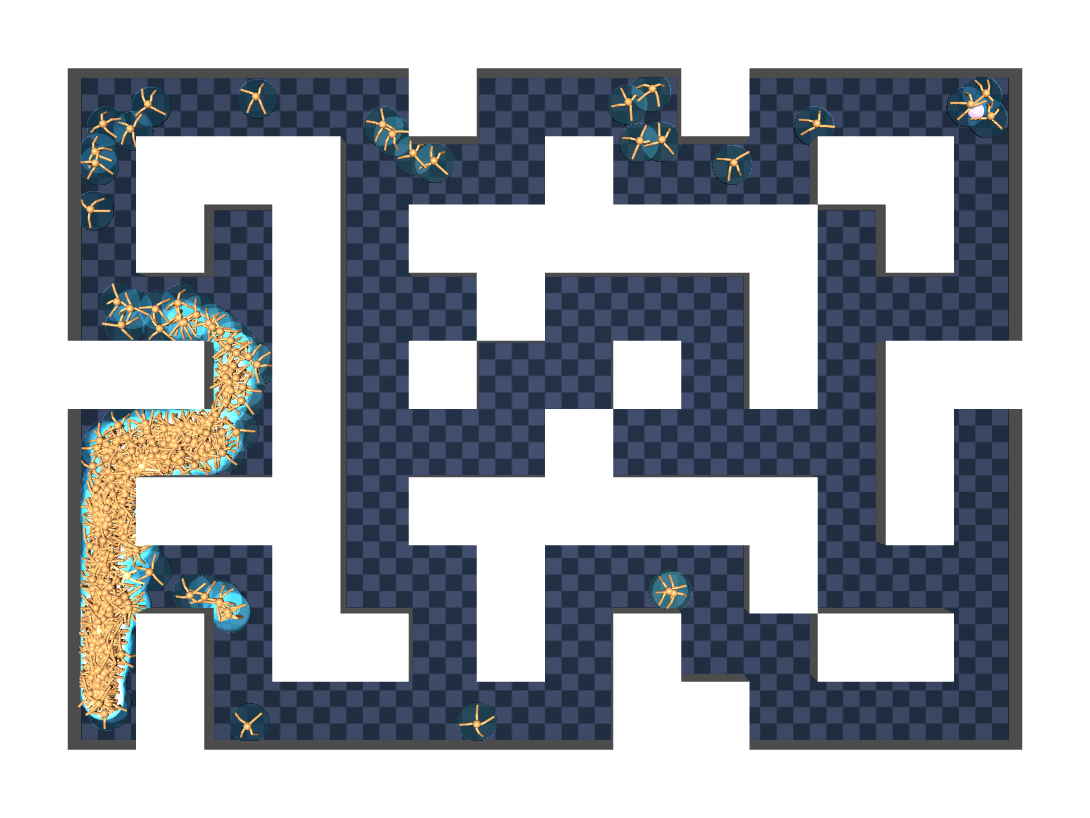}
  & \includegraphics[height=3.5cm]{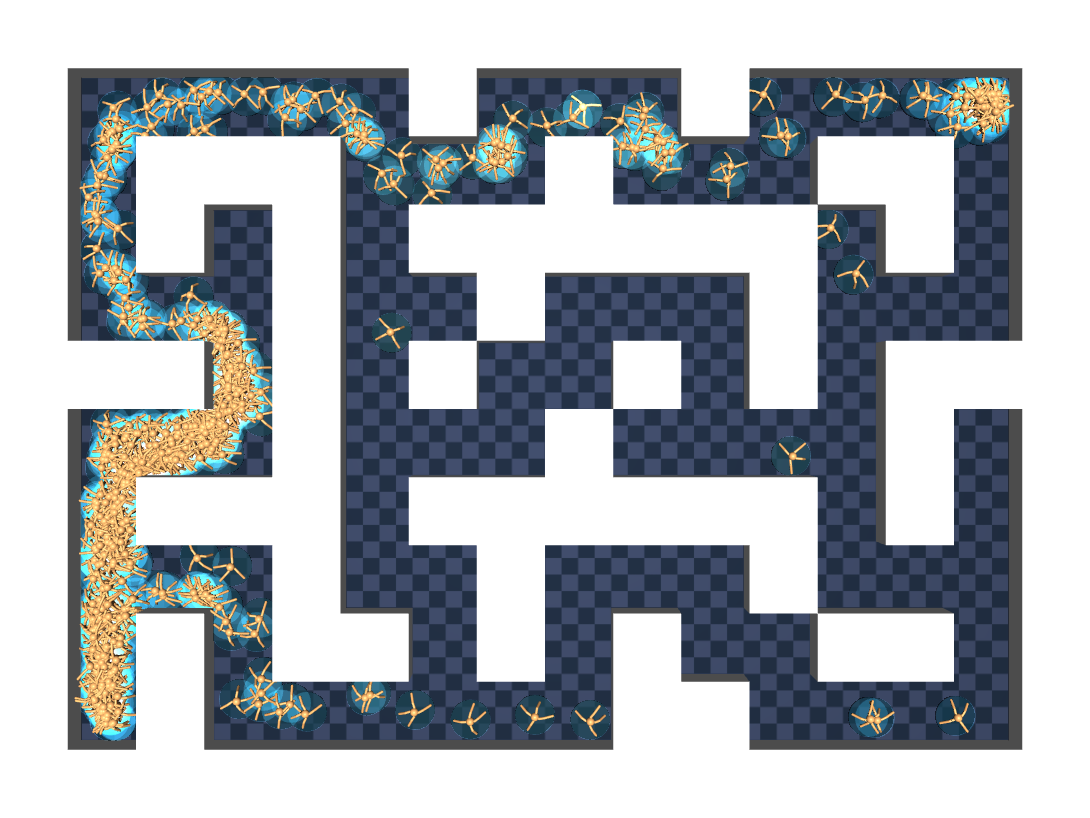} \\
\end{tabular}
\caption{Qualitative plots of GHM samples on \textsc{antmaze-giant} task 1 at different horizons $(0.99, 0.995, 0.998)$ from \tdfl and \tdhc with the last row depicting the ground truth discounted occupancy. As can be seen in the figure, performance is comparable at smaller horizons with \tdhc doing a much better job at capturing the true distribution as the horizon increases.}\label{fig:qual-ghm}
\end{figure}

\section{Qualitative Geometric Horizon Model Samples}

\begin{figure}[H]
    \centering
    \setlength{\tabcolsep}{2pt}
    \renewcommand{\arraystretch}{0.8}
    \begin{tabular}{cccccc}
        \includegraphics[width=0.15\textwidth]{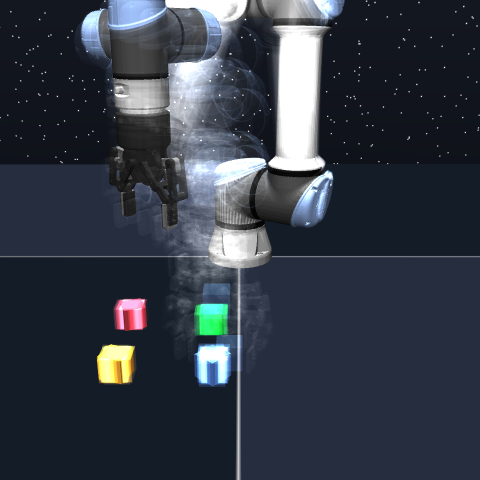} &
        \includegraphics[width=0.15\textwidth]{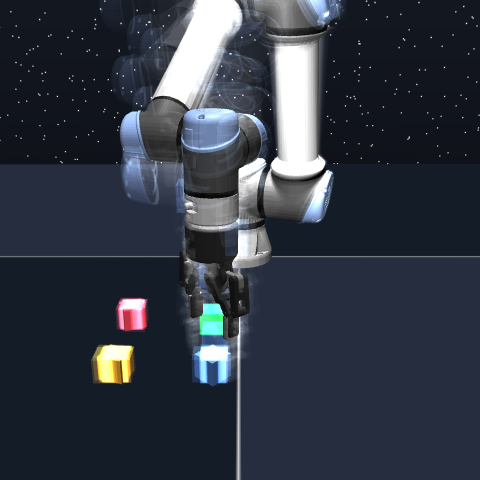} &
        \includegraphics[width=0.15\textwidth]{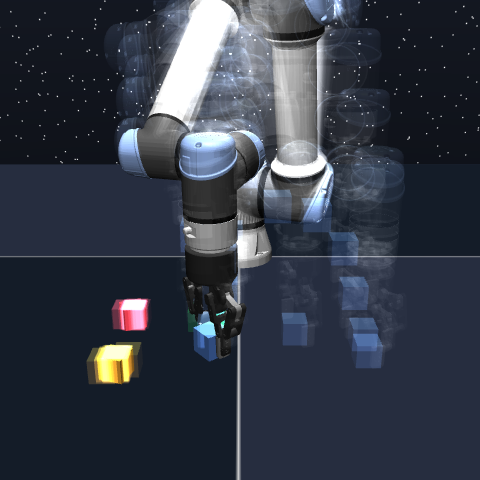} &
        \includegraphics[width=0.15\textwidth]{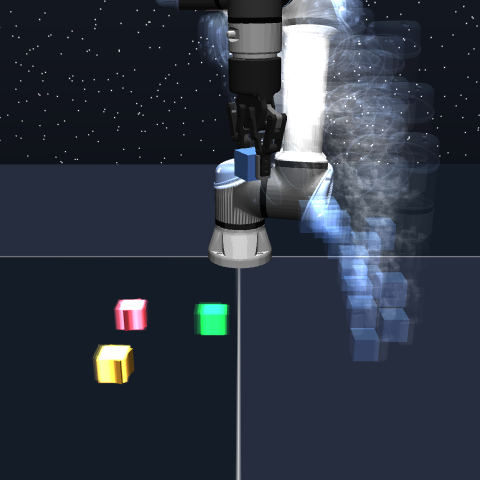} &
        \includegraphics[width=0.15\textwidth]{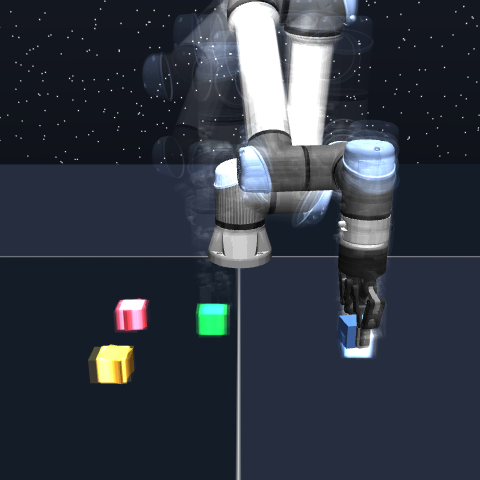} &
        \includegraphics[width=0.15\textwidth]{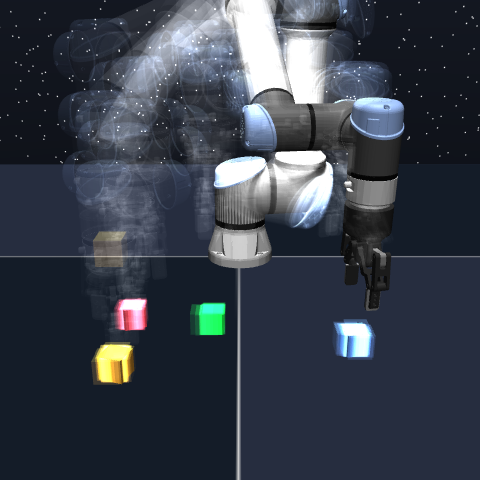} \\
        $t=0$ & $t=15$ & $t=30$ & $t=45$ & $t=60$ & $t=75$ \\[6pt]
        \includegraphics[width=0.15\textwidth]{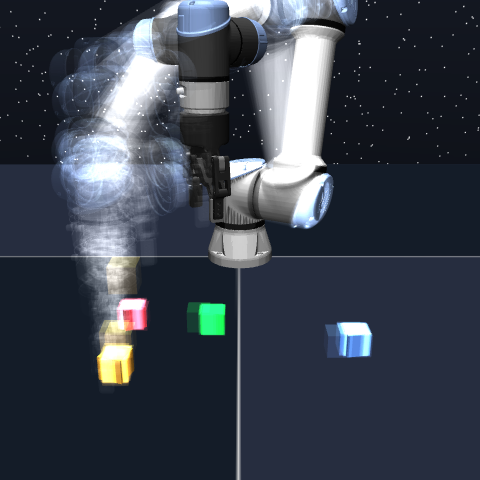} &
        \includegraphics[width=0.15\textwidth]{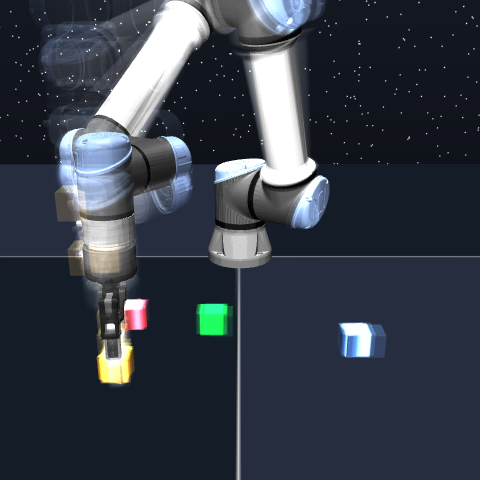} &
        \includegraphics[width=0.15\textwidth]{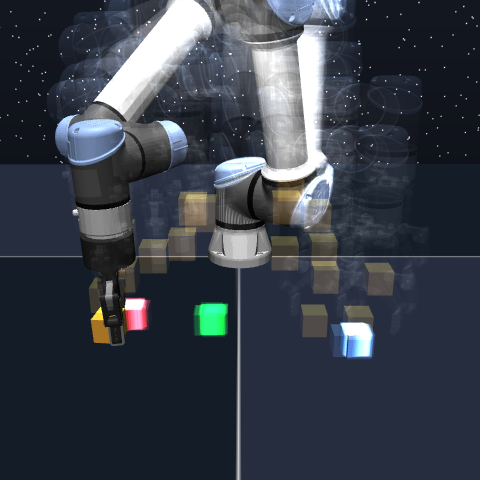} &
        \includegraphics[width=0.15\textwidth]{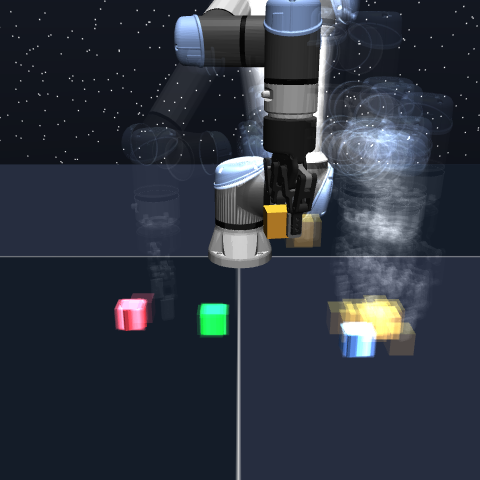} &
        \includegraphics[width=0.15\textwidth]{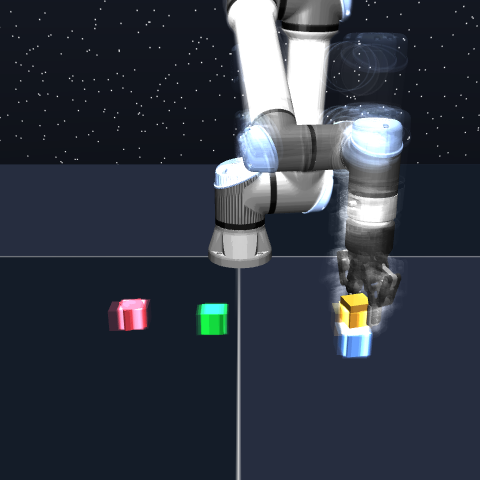} &
        \includegraphics[width=0.15\textwidth]{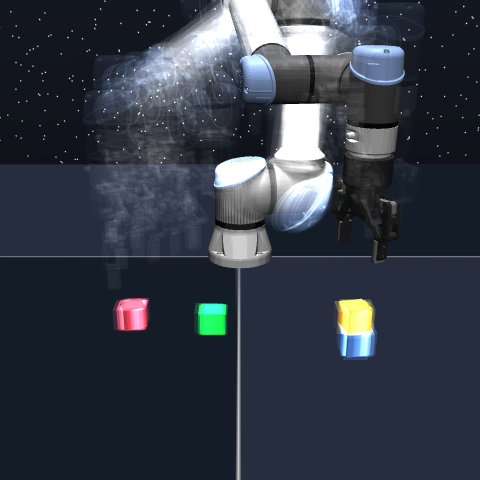} \\
        $t=90$ & $t=105$ & $t=120$ & $t=135$ & $t=150$ & $t=165$ \\[6pt]
        \includegraphics[width=0.15\textwidth]{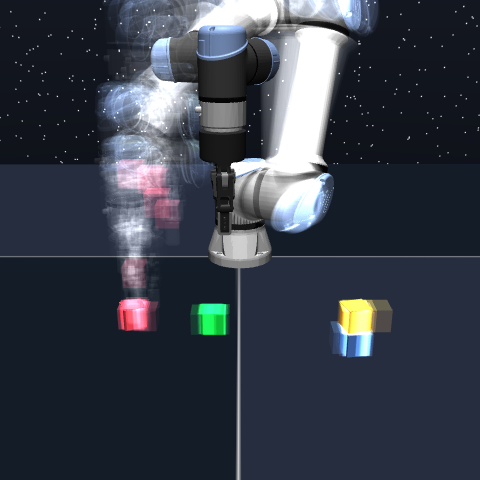} &
        \includegraphics[width=0.15\textwidth]{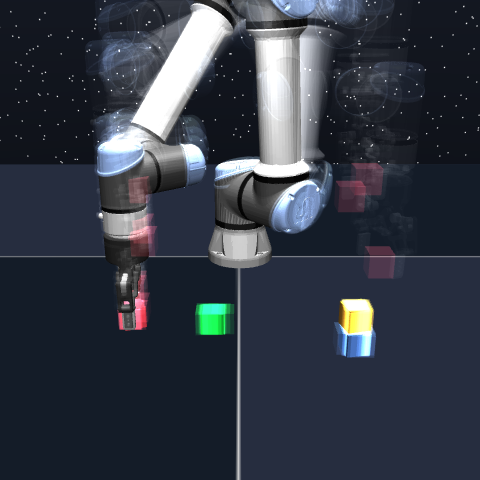} &
        \includegraphics[width=0.15\textwidth]{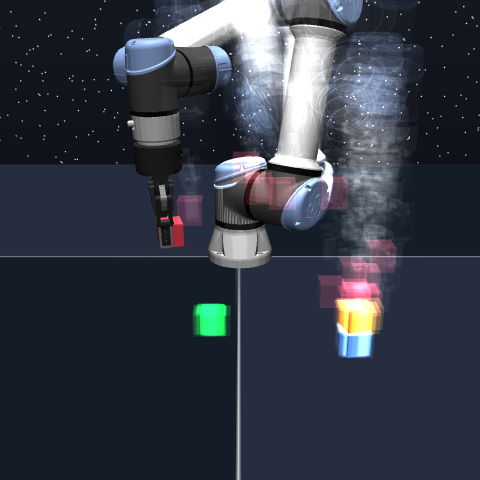} &
        \includegraphics[width=0.15\textwidth]{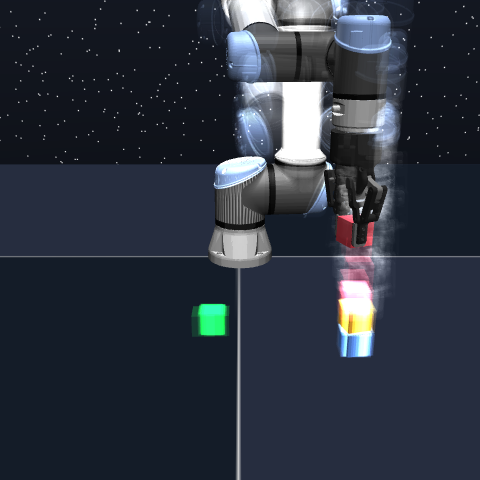} &
        \includegraphics[width=0.15\textwidth]{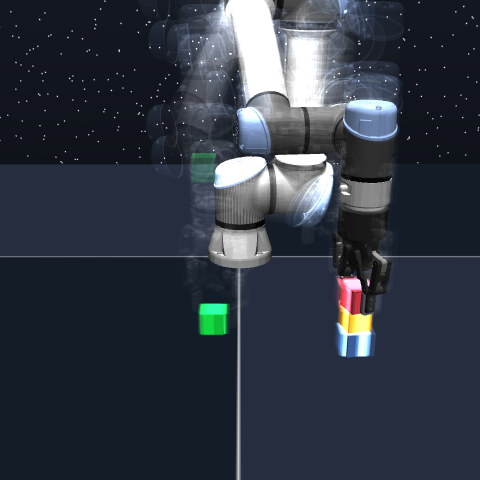} &
        \includegraphics[width=0.15\textwidth]{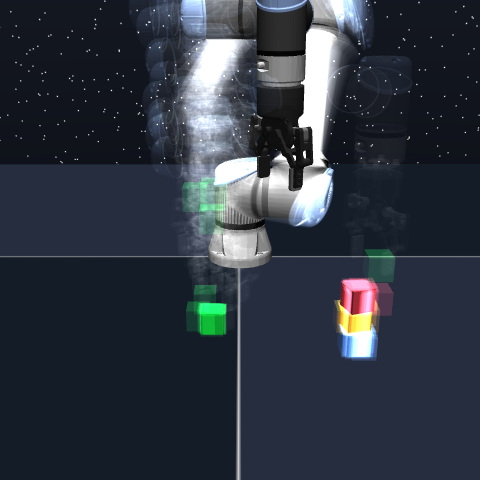} \\
        $t=180$ & $t=195$ & $t=210$ & $t=225$ & $t=240$ & $t=255$ \\[6pt]
        \includegraphics[width=0.15\textwidth]{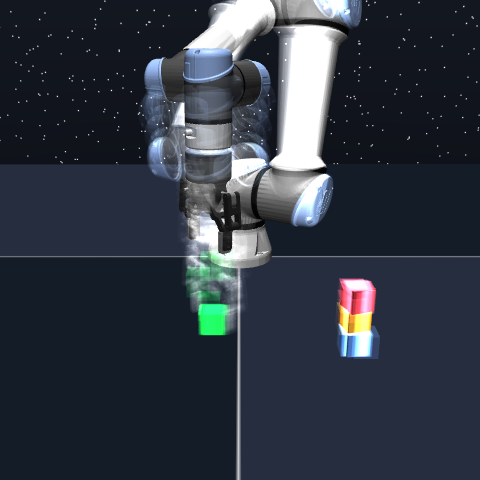} &
        \includegraphics[width=0.15\textwidth]{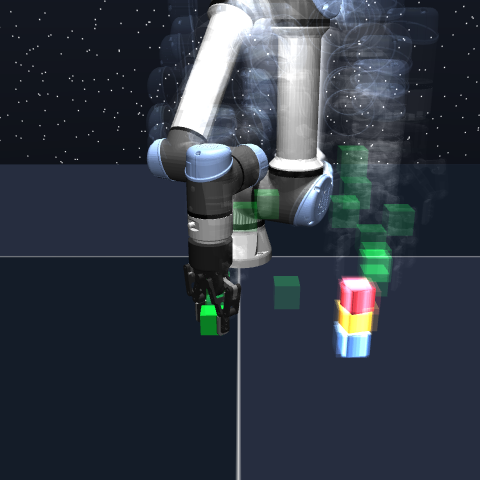} &
        \includegraphics[width=0.15\textwidth]{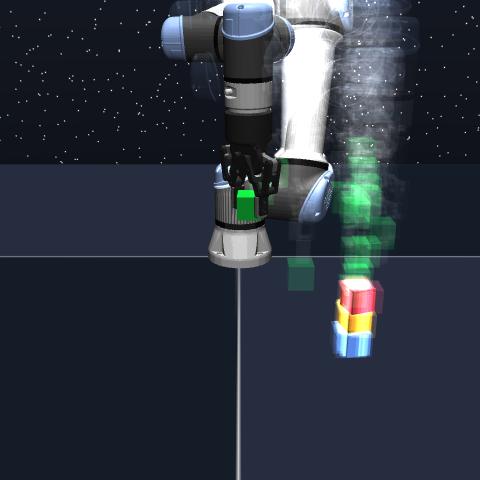} &
        \includegraphics[width=0.15\textwidth]{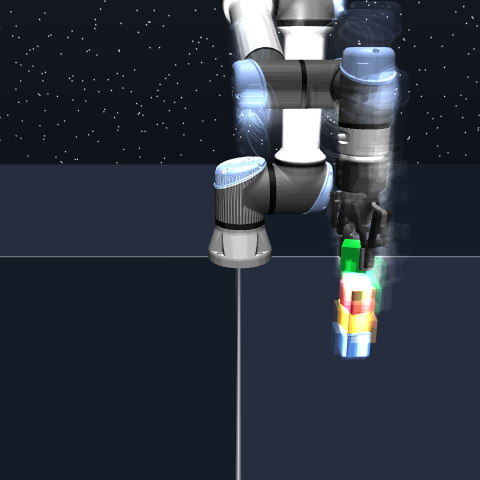} &
        \includegraphics[width=0.15\textwidth]{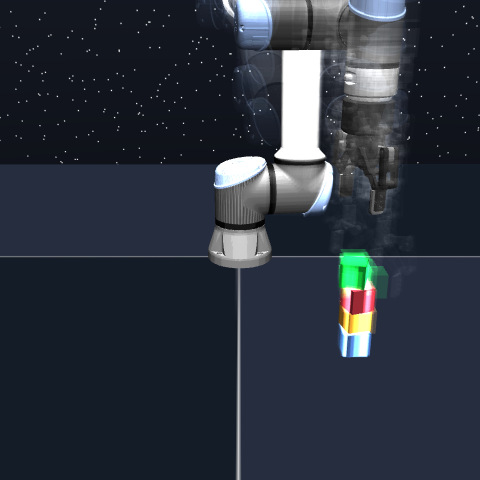} &
        \includegraphics[width=0.15\textwidth]{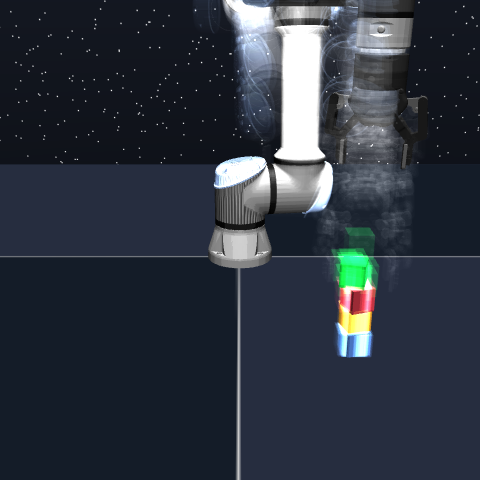} \\
        $t=270$ & $t=285$ & $t=300$ & $t=315$ & $t=330$ & $t=345$ \\
    \end{tabular}
    \caption{Qualitative visualization for an episode of \textsc{cube-4-task-5}. The robots and cubes shown with transparency indicated GHM samples from the first policy in the GSP.}
    \label{fig:cube_frames}
\end{figure}

\clearpage
\section{Experimental Details}

\subsection{Base Policies \& Hyperparameters} \label{app:policies}
Each policy class is chosen to highlight a different aspect of the pipeline (i.e., GHM learning and skill planning).
\begin{enumerate}[noitemsep, topsep=0pt,leftmargin=*]
    \item \textbf{Goal-Conditioned TD3 \citep[GC-TD3;][]{pirotta24bfmil}}: A standard goal-conditioned offline RL algorithm where we employ Flow Q-learning \citep[FQL;][]{park25flow}, a flow-based variant of TD3-BC \citep{fujimoto21td3bc} for policy extraction.
    Due to bootstrapping on its own learned policy, actions may drift out-of-distribution (OOD) from the dataset, posing a challenge for off-policy predictive modeling \citep{levine20offline}. In particular, we use the standard TD3 critic loss \citep{fujimoto18td3} to learn a goal-conditioned action-value function $Q_\phi(S, A, G)$ \citep{schaul15uvfa}:
    \begin{align*}
        \ell(\phi) = \mathbb{E}_{\substack{(S, A,S', G) \\ A' \sim \pi_G(\cdot | S')}} \left [ \left( Q_\phi(S, A, G) - \mathbb{I}\{ S' = G\}  - \gamma Q_{\bar{\phi}}(S', A', G)\right)^2 \right]\,,
    \end{align*}
    where $(S, A, S')$ are transition sampled uniformly from the dataset, and $G$ is a goal state drawn from the following mixture distribution: with probability $0.2$, set $G=S'$, with probability $0.3$ sample $G$ uniformly from the dataset and with $0.5$ set $G$ to a randomly selected future state along the trajectory starting from $(S, A)$,  where the selection time step is drawn from a $\gamma$-geometric distribution.
    \vspace{\baselineskip}

    For policy training, we use the flow Q-learning procedure. We learn jointly a behavior policy $\pi(\cdot \mid S)$ and goal-conditioned policy $\pi_G(\cdot \mid S)$  parametrized by flow-matching vector field. The behavioral policy is trained by a standard conditional flow-matching objective \citep{lipman2022flow} on state-action pairs uniformly sampled from the dataset.
    The goal-conditioned policy is modeled as a one-step flow map. We denote by $\mu_\psi(S, X_0)$ and  $\mu_\omega(S, G, X_0)$ the flow-maps of $\pi$ and $\pi_G$, respectively. 

    \begin{equation}
    \label{eq: flow-q-learning}
        \ell(\psi) = - \mathbb{E}_{\substack{(S, A, G) \\ X_0 \sim \mathcal{N}(0, I)}} \left [ Q(S, \mu_\psi(S, G, X_0), G) + \lambda \| \mu_\psi(S, G, X_0) - \mu_\omega(S, X_0) \|^2 \right ]\,,
    \end{equation}
    where $\lambda$ is the distillation coefficient that controls the behavior cloning regularization. Here $G$ is a goal state drawn from the following mixture distribution: with probability $0.5$ sample $G$ uniformly from the dataset, and with $0.5$ set $G$ to a randomly selected future state along the trajectory starting from $(S, A)$. \\
    \item \textbf{Goal-Conditioned 1-Step RL (GC-1S)}: A more conservative variant of GC-TD3 that bootstraps using the behavior policy via the dataset's actions. We expect this to yield easier-to-model occupancies as we no longer query the learned value function with OOD actions. Policy extraction on the resulting value function is also performed using FQL~\eqref{eq: flow-q-learning}.
    In particular, we follow the same training as explained for TD3, and we change only the critic objective:
 \begin{align*}
        \ell(\phi) = \mathbb{E}_{\substack{(S, A,S', A', G)}} \left [ \left( Q_\phi(S, A, G) - \mathbb{I}\{ S' = G\}  - \gamma Q_{\bar{\phi}}(S', A', G)\right)^2 \right] \,,
    \end{align*}
    where $A'$ is now sampled from the dataset rather than from the learned policy. \\
    \item \textbf{Contrastive RL \citep[CRL;][]{eysenbach22contrastive}}: An alternative value-based approach that uses contrastive learning to approximate the successor measure of the behavior policy \citep{eysenbach22contrastive}. In particular CRL learns a state-action $\phi(s, a)$ and goal encoder $\psi(g)$ by the following Monte-Carlo InfoNCE \citep{oord18representation} contrastive loss:
    \begin{align}
        \ell(\phi, \psi) = \mathbb{E}_{(S, A, G)} \left[ - \phi(S, A)^\top \psi(G) \right] - \left[ \log \sum_{(S, A, G')}  \exp\left ( \phi(S, A)^\top \psi(G')\right)\right]
    \end{align}
    where $G$ is $\gamma$-distributed sampled future state along the trajectory starting from $(S, A)$ and $G'$ is a state distribued uniformly from the dataset.
    Moreover, policy extraction is performed using FQL~\eqref{eq: flow-q-learning} with Q-function $Q(S, A, G)$ estimated as $\phi(S, A)^\top \psi(G)$. \\
    
    \item \textbf{Goal-Conditioned Behavior Cloning \citep[GC-BC;][]{lynch19learning,ghosh21gcbc}}: A purely imitative policy trained via flow matching \citep{lipman2022flow} to mimic the high-quality trajectories in the dataset to reach specific goals using hindsight relabeling \citep{andrychowicz17her}. Specially, we parametrize the policy by vector field $v_\phi(t, S, G)$ 
    \begin{equation}
        \ell(\phi) = \mathbb{E}_{\substack{t, S, A, G \\ X_0 \sim \mathcal{N}(0, I)}} \left [ \| v_\phi(t, S, G) - (A - X_0) \|^2\right ],
    \end{equation}
    where $G$ is $\gamma$-distributed sampled future state along the trajectory starting from $(S, A)$.
    A key consequence of this value-free approach is that the policy struggles to generalize to distant goals, which can limit the effectiveness of proposing goal-directed ``waypoints'' during planning.\\
    \item \textbf{Hierarchical Flow Behavior Cloning~\citep[HFBC;][]{park25horizon}}: an imitative approach that trains two policies: a high-level policy is trained to predict subgoals that are $h$ steps away from the current state for a fixed lookahead $h$, and the low-level policy is trained to predict actions to reach the given subgoal. Specially we parametrize both high-level and low-level policy by two vector fields $v_\phi(t, S, G)$ and $\nu_\psi(t, S, G)$ respectively. 
    \begin{align*}
        \ell(\phi) = \mathbb{E}_{\substack{t, S_n, S_{n+h}, G \\ X_0 \sim \mathcal{N}(0, I)}} \left [ \| v_\phi(t, S_n, G) - (S_{n+h} - X_0) \|^2\right ], \\
        \ell(\psi) = \mathbb{E}_{\substack{t, S_n, A_n, S_{n+h} \\ X_0 \sim \mathcal{N}(0, I)}} \left [ \| v_\phi(t, S_n, S_{n+h}) - (A_{n} - X_0) \|^2\right ],
    \end{align*}
    where $S_{n+h}$ is the h-steps way from the current state $S_n$ and $G$ is is $\gamma$-distributed sampled future state along the trajectory starting from $S_n$
    Consequently, the high-level policy can be used as a proposal for sub-goal distribution at planning time.
\end{enumerate}

\begin{table}[h!]
\vspace{1.5em}
\centering
\caption{Base policy hyperparameters. Parameters in $\{\;\}$ denote sweeps performed over the values inside brackets.}
\renewcommand{\arraystretch}{1.3}
\resizebox{
\ifdim\width>\columnwidth
    0.7\columnwidth
  \else
    0.95\width
  \fi
}{!}{
\begin{tabular}{@{}l p{4.5cm} p{4.5cm} p{4.5cm}@{}}
\toprule
\textbf{Method}
& \textbf{Hyperparameter}
& \textbf{Antmaze} 
& \textbf{Cube} \\
\midrule

\multirow{4}{*}{\makecell[l]{\textbf{CRL}\\\textbf{GC-TD3}\\\textbf{GC-1S}}}
& FQL Distillation coefficient 
& $\{0.1, 0.15, 0.2, 0.3, 0.4\}$
& $\{0.7, 0.8, 0.9, 1, 3\}$ \\[3pt]
\cline{2-4}
& Discount factor 
& \makecell[l]{
$\{0.995, 0.997\}$ for giant\\
$0.99$ otherwise
}
& \makecell[l]{
$\{0.99, 0.995\}$ for \textsc{cube-4}\\
$0.99$ otherwise
} \\[3pt]
\cline{2-4}
& Gradient steps
& $\{1\text{M}, 3\text{M}\}$
& \makecell[l]{
$500\text{k}$ for cube-$\{1,2\}$\\
$1\text{M}$ for \textsc{cube-3}\\
$3\text{M}$ for \textsc{cube-4}
} \\
\midrule

\multirow{4}{*}{\makecell[l]{\textbf{GC-BC}}}
& Discount factor 
& \makecell[l]{
$\{0.99, 0.995\}$ for giant \\
$\{0.98, 0.99\}$ for medium \\
$\{0.98, 0.99\}$ for large
}
& \makecell[l]{
$\{0.95, 0.96\}$ for \textsc{cube-1} \\
$\{0.96, 0.97\}$ for \textsc{cube-2} \\
$\{0.96, 0.97, 0.98\}$ for \textsc{cube-3} \\
$\{0.96, 0.98, 0.99\}$ for \textsc{cube-4}
} \\[3pt]
\cline{2-4}
& Gradient steps
& $125\text{k}$
& \makecell[l]{
$500\text{k}$ for \textsc{cube-1} \\
$1\text{M}$ for \textsc{cube-2} \\
$2\text{M}$  for \textsc{cube-3} \\
$3\text{M}$ for \textsc{cube-4}
} \\
\midrule

\multirow{4}{*}{\makecell[l]{\textbf{HFBC}\\\textbf{SHARSA}}}
& Lookahead & $[25, 50]$ & $[25, 50]$
\\
\cline{2-4}
& Discount factor 
& $\{0.99, 0.995\}$
& $\{0.95, 0.99, 0.995\}$
 \\[3pt]
\cline{2-4}
& Gradient steps
& $3\text{M}$
& \makecell[l]{
$1\text{M}$ for cube-$\{1, 2\}$ \\
$3\text{M}$  for cube-$\{3, 4\}$
} \\
\midrule
\end{tabular}
}
\end{table}

\clearpage
\subsection{Geometric Horizon Model Hyperparameters}

\begin{table}[H]
\caption{Hyperparameters for Geometric Horizon Model pre-training.}
\label{table:fm-td-consistency-hparams}
\vspace{1mm}
\renewcommand{\arraystretch}{1.25}
\centering
\resizebox{
\ifdim\width>\columnwidth
    0.7\columnwidth
  \else
    \width
  \fi
}{!}{
\begin{tabular}{@{}cll@{}}
     \toprule
     & {\fontsize{11}{13}\selectfont \textbf{Hyperparameter}}
     & {\fontsize{11}{13}\selectfont \textbf{Value}} \\
     \midrule
     \multirow{5}{*}{\makecell[c]{Flow Matching\\\citep{lipman2022flow}}}
     & Probability Path & Conditional OT ($\sigma{=}0$) \\
     & Time Sampler & $\mathcal{U}([0, 1])$ \\
     & ODE Solver & Euler \\
     & ODE $\mathrm{d}t$ (train) / steps & $0.1$ / $10$ \\
     & ODE $\mathrm{d}t$ (eval) / steps & $0.05$ / $20$ \\
     \midrule
\multirow{5}{*}{\makecell[c]{Network (U-Net)\\\citep{ronneberger15unet}}}
    & $t$-Positional Embedding Dim. & $256$ \\
    & $t$-Positional Embedding MLP & $(1024, 1024)$ \\
    & Hidden Activation & mish \citep{misra19mish} \\
    & Blocks per Stage & $1$ \\
    & Block Dimensions & $(1024, 1024, 1024)$ \\
    \midrule
\multirow{3}{*}{\makecell[c]{Conditional Encoder}}
    & Encoder MLP & $(1024, 1024, 1024)$ \\
    & Encoder Activation & mish \citep{misra19mish} \\
    & Conditioning Mixing & additive \\
    \midrule
\multirow{3}{*}{\makecell[c]{Optimizer Adam\\\citep{kingma15adam}}}
    & Learning Rate & $10^{-4}$ \\
    & Weight Decay & $0$ \\
    & Gradient Norm Clip & --- \\
    \midrule
\multirow{6}{*}{\makecell[c]{Training}}
    & Max Discount $\gamma_{\mathrm{max}}$ & $0.996$ \\
    & Target Network EMA & $5 \times 10^{-4}$ \\
    & Gradient Steps & $3$M \\
    & Batch Size & $256$ \\
    & Context Drop Probability ($z = \varnothing$) & $0.1$ \\
    & Consistency Proportion & \!\!\!\!\hspace{-0.05em}\hspace{-0.28em}$\scriptstyle \begin{cases}\; 0.25 & \text{for \textsc{antmaze}} \\ \; 0.15 & \text{for \textsc{cube}} \end{cases}$ \\
    \midrule
\multirow{3}{*}{\makecell[c]{GCRL Goal Sampling}}
    & $p(\text{trajectory goal})$ & $0.5$ \\
    & $p(\text{random goal})$ & $0.5$ \\
    & Geometric Trajectory Discount & $0.995$ \\
\bottomrule
\end{tabular}
}
\end{table}

\clearpage
\subsection{Planning Hyperparameters}

\begin{table}[h!]
\centering
\caption{\textsc{CompPlan} hyperparameters for the main results of the paper.}
\renewcommand{\arraystretch}{1.25}
\begin{adjustbox}{width=\textwidth}
\begin{tabular}{llllllll}
\toprule
{\bfseries Method}
& {\bfseries Domain}
& {\bfseries Candidates} 
& {\bfseries Effective horizons}
& {\bfseries Proposal Distribution}
& {\bfseries Eval samples} 
& {\bfseries Replan every}
& {\bfseries Discount } \\
\midrule
\multirow{7}{*}{\textbf{CRL}} & \textsc{antmaze-medium} & 256 & $[50, 50, 100, 100, 200]$ & Conditional & 256 & 1 & 0.999 \\
& \textsc{antmaze-large} & 256 & $[50, 50, 100, 100, 200]$ & Conditional & 256 & 1 & 0.999 \\
& \textsc{antmaze-giant} & 256 & $[50, 50, 100, 100, 200]$ & Conditional & 256 & 1 & 0.999 \\
& \textsc{cube-1} & 1024 & $[20, 80]$ & Unconditional & 128 & 1 & 0.99 \\
& \textsc{cube-2} & 1024 & $[20, 20, 80]$ & Unconditional & 128 & 1 & 0.99 \\
& \textsc{cube-3} & 1024 & $[20, 20, 20, 80]$ & Unconditional & 128 & 1 & 0.99 \\
& \textsc{cube-4} & 1024 & $[20, 20, 20, 20, 80]$ & Unconditional & 128 & 1 & 0.99 \\
\midrule
\multirow{7}{*}{\textbf{GC-TD3}}
& \textsc{antmaze-medium} & 256 & $[50, 50, 100, 100, 200]$ & Conditional & 256 & 1 & 0.999 \\
& \textsc{antmaze-large} & 256 & $[50, 50, 100, 100, 200]$ & Conditional & 256 & 1 & 0.999 \\
& \textsc{antmaze-giant} & 256 & $[50, 50, 100, 100, 200]$ & Conditional & 256 & 1 & 0.999 \\
& \textsc{cube-1} & 1024 & $[20, 80]$ & Unconditional & 128 & 1 & 0.99 \\
& \textsc{cube-2} & 1024 & $[20, 20, 80]$ & Unconditional & 128 & 1 & 0.99 \\
& \textsc{cube-3} & 1024 & $[20, 20, 20, 80]$ & Unconditional & 128 & 1 & 0.99 \\
& \textsc{cube-4} & 1024 & $[20, 20, 20, 20, 80]$ & Unconditional & 128 & 1 & 0.99 \\
\midrule
\multirow{7}{*}{\textbf{GC-BC}}
& \textsc{antmaze-medium} & 256 & $[50, 50, 100, 100, 200]$ & Conditional & 256 & 1 & 0.999 \\
& \textsc{antmaze-large} & 256 & $[50, 50, 100, 100, 200]$ & Conditional & 256 & 1 & 0.999 \\
& \textsc{antmaze-giant} & 256 & $[50, 50, 100, 100, 200]$ & Conditional & 256 & 1 & 0.999 \\
& \textsc{cube-1} & 1024 & $[20, 80]$ & Unconditional & 128 & 1 & 0.99 \\
& \textsc{cube-2} & 1024 & $[20, 20, 80]$ & Unconditional & 128 & 1 & 0.99 \\
& \textsc{cube-3} & 1024 & $[20, 20, 20, 80]$ & Unconditional & 128 & 1 & 0.99 \\
& \textsc{cube-4} & 1024 & $[20, 20, 20, 20, 80]$ & Unconditional & 128 & 1 & 0.99 \\
\midrule
\multirow{7}{*}{\textbf{GC-1S}}
& \textsc{antmaze-medium} & 256 & $[50, 50, 100, 100, 200]$ & Conditional & 256 & 1 & 0.999 \\
& \textsc{antmaze-large} & 256 & $[50, 50, 100, 100, 200]$ & Conditional & 256 & 1 & 0.999 \\
& \textsc{\textsc{antmaze-giant}} & 256 & $[50, 50, 100, 100, 200]$ & Conditional & 256 & 1 & 0.999 \\
& \textsc{cube-1} & 1024 & $[20, 80]$ & Unconditional & 128 & 1 & 0.99 \\
& \textsc{cube-2} & 1024 & $[20, 20, 80]$ & Unconditional & 128 & 1 & 0.99 \\
& \textsc{cube-3} & 1024 & $[20, 20, 20, 80]$ & Unconditional & 128 & 1 & 0.99 \\
& \textsc{cube-4} & 1024 & $[20, 20, 20, 20, 80]$ & Unconditional & 128 & 1 & 0.99 \\
\midrule
\multirow{7}{*}{\textbf{HFBC}}
& \textsc{antmaze-medium} & 32 & $[25]*24$ & Conditional & 128 & 1 & 0.999 \\
& \textsc{antmaze-large} & 32 & $[25]*24$ & Conditional & 128 & 1 & 0.999 \\
& \textsc{antmaze-giant} & 32 & $[25]*24$ & Conditional & 128 & 1 & 0.999 \\
& \textsc{cube-1} & 32 & $[25]*4$ & Unconditional & 128 & 1 & 0.99 \\
& \textsc{cube-2} & 32 & $[25]*5$ & Unconditional & 128 & 1 & 0.99 \\
& \textsc{cube-3} & 32 & $[25]*6$ & Unconditional & 128 & 1 & 0.99 \\
& \textsc{cube-4} & 32 & $[25]*7$ & Unconditional & 128 & 1 & 0.99 \\
\bottomrule
\end{tabular}
\end{adjustbox}
\end{table}

\end{appendices}

\end{document}